\documentclass[10pt]{article}
\usepackage[utf8]{inputenc}
\usepackage{amsmath, amssymb, amsthm}
\allowdisplaybreaks
\usepackage{graphicx}
\usepackage{hyperref}
\usepackage{geometry}
\usepackage{times}  
\usepackage{natbib}  
\usepackage{authblk}  
\usepackage{anyfontsize} 

\usepackage{array}
\usepackage{tabu}
\usepackage{wrapfig}
\usepackage{float}
\usepackage{thm-restate}
\usepackage{graphicx}
\usepackage{subcaption}
\usepackage{hyperref}
\usepackage{url}
\usepackage{xcolor}
\usepackage{multirow}
\usepackage{booktabs}
\usepackage[ruled,vlined]{algorithm2e}
\usepackage{bm}
\usepackage{mathrsfs}

\DeclareMathOperator*{\essinf}{ess\,inf}
\DeclareMathOperator*{\esssup}{ess\,sup}

\theoremstyle{plain}
\newtheorem{theorem}{Theorem}[section]
\newtheorem{proposition}[theorem]{Proposition}
\newtheorem{lemma}[theorem]{Lemma}

\theoremstyle{definition}

\newtheorem{assumption}[theorem]{Assumption}
\theoremstyle{remark}

\newcommand{\EDmodified}[1]{{#1}}

\author[1,2]{Yudong Luo}
\author[1,2]{Erick Delage}

\affil[1]{GERAD \& Department of Decision Sciences, HEC Montréal, Canada}\affil[2]{Mila - Quebec AI Institute, Canada}

\title{Actor-Critic Algorithm for Dynamic Expectile and CVaR}

\date{}

\begin{document}
\maketitle

\begin{abstract}
Optimizing dynamic risk with stochastic policies is challenging in both policy updates and value learning. The former typically requires transition perturbation, while the latter may rely on model-based approaches. To address these challenges, we propose a surrogate policy gradient without transition perturbation under softmax policy parameterization. We further develop model-free value learning methods for dynamic expectile and conditional value-at-risk by leveraging elicitability. Finally, inspired by Expected SARSA and Expected Policy Gradient, a model-free off-policy actor–critic algorithm is constructed. Empirical results in domains with verifiable risk-averse behavior show that our algorithm can learn risk-averse policy and consistently outperforms other existing methods.
\end{abstract}

\section{Introduction}
The demand for risk control in sequential decision-making has inspired risk-averse reinforcement learning (RL). Generally, there exist two main types of risk-sensitive objectives in the literature. One defines risk over the total return variable and is referred to as static risk. The other defines the risk recursively along the trajectory and is referred to as dynamic risk. While static risk has advantages in interpretability, optimizing a policy under static risk can be difficult, as for most of them, the optimal policy is history-dependent in the original Markov decision process (MDP) (except mean, essential supremum, and essential infimum)~\citep{benyamine2026dynamic}. In contrast, dynamic risk models the behavior of a decision maker who accounts for risk at each step. Its recursive structure also leads to a Bellman-type equation, which implies the optimal policy is Markovian~\citep{ruszczynski2010risk}.

Although the Bellman equation for dynamic risk is analogous to the risk-neutral case, developing a practical actor-critic algorithm raises challenges for two reasons. 1) For coherent risk measures~\citep{artzner1999coherent}, computing policy gradients with stochastic policy requires perturbing the transition probabilities according to its risk envelope~\citep{tamar2015policy,zhang2024soft}, which significantly increases complexity. 2) The value function is itself a risk measure over the randomness induced by the transition dynamics, which is usually hard to obtain. Extracting the risk value directly from the value distribution learned by distributional RL~\citep{dabney2018distributional} does not guarantee correctness~\citep{lim2022distributional}. As a result, most existing works may either adopt model-based approaches for value learning (e.g.,\cite{tamar2015policy,rigter2023one}) or resort to deterministic policy gradient~\citep{silver2014deterministic} to simplify policy updates (e.g.,\cite{marzban2023deep}).

We focus on two widely used coherent risk measures for dynamic risk, namely expectile and conditional value-at-risk (CVaR), and aim to develop model-free actor-critic algorithms with stochastic policies for them. First, from a policy iteration perspective, we show that a stochastic policy updated by a surrogate policy gradient, without perturbing transition probabilities, converges to the optimal policy with proper learning rate, when using the popular softmax parameterization. Next, leveraging the elicitability of expectile~\citep{marzban2023deep} and motivated by the joint elicitability of CVaR~\citep{coache2023conditionally}, we derive value learning rules based on stochastic optimization, analogous to Q-learning in the risk-neutral setting. Finally, inspired by Expected SARSA~\citep{sutton1998reinforcement} and Expected Policy Gradient~\citep{ciosek2020expected}, an off-policy actor-critic algorithm is formalized. To the best of our knowledge, this is the first model-free off-policy actor–critic framework with stochastic policies for dynamic expectile and CVaR, and the idea of policy improvement also applies to other dynamic risk measures. Empirical results in domains with verifiable risk-averse behavior demonstrate that our methods can learn risk-averse policy when others may fail.

\section{Preliminary}
\subsection{Risk Measures, Expectile, and CVaR}
\label{sec:prelim}
For a real valued random variable $\tilde{x}$ (we use tilde to mark a random variable, e.g., $\tilde{x}$, in this paper), a \textit{risk measure} is a function $\rho: \tilde{x}\rightarrow \mathbb{R}\cup\{-\infty,+\infty\}$ that maps an uncertain return to a quantity that captures the risk adjusted value of $\tilde{x}$. An important class is \textit{coherent} risk measure, that possesses a set of properties that are consistent with rational risk assessments~\citep{artzner1999coherent}, i.e.,

\textbf{A1} Concavity: $\forall \lambda\in[0,1],\rho(\lambda\tilde{x}+(1-\lambda)\tilde{y})\geq \lambda \rho(\tilde{x}) + (1-\lambda)\rho(\tilde{y})$;

\textbf{A2} Monotonicity: if $\tilde{x}\leq \tilde{y}$, then $\rho(\tilde{x})\leq \rho(\tilde{y})$;

\textbf{A3} Translation invariance: $\forall c\in\mathbb{R}$, $\rho(\tilde{x}+c)=\rho(\tilde{x})+c$;

\textbf{A4} Positive homogeneity: if $\lambda\geq 0$, then $\rho(\lambda \tilde{x})=\lambda \rho(\tilde{x})$.

Superadditivity (i.e., $\rho(\tilde{x}+\tilde{y})\geq \rho(\tilde{x})+\rho(\tilde{y})$) can be inferred from (A1) and (A4), which may appear in the definition of coherence in other papers. Note that since we work on return instead of cost, some properties, i.e. Concavity and Superadditivity, are adjusted accordingly. Here, we consider two common coherent risk measures, i.e., expectile and CVaR. 

For a risk level $\alpha\in(0,1)$, the $\alpha$-expectile is defined as~\citep{bellini2017risk}
\vspace{-0.02in}
\begin{equation}
\label{eq:def-expectile}
    \mathrm{Expectile}_\alpha[\tilde{x}] := \arg\min_{y\in\mathbb{R}} (1-\alpha)\mathbb{E}\big[(\tilde{x}-y)^2_{-}\big]+\alpha \mathbb{E}\big[(\tilde{x}-y)^2_{+}\big],
\end{equation}
\vspace{-0.025in}
where $(x)_{-}:=\min(x,0)$ and $(x)_{+}:= \max(x,0)$. Expectile covers the case of risk-neutrality (i.e., expectation) when $\alpha=1/2$. As $\alpha\rightarrow 0$ or $\alpha\rightarrow 1$, it recovers $\essinf(\tilde{x})$ or $\esssup(\tilde{x})$, respectively.

The left-tail CVaR is defined as~\citep{rockafellar2000optimization}
\vspace{-0.05in}
\begin{equation}
\label{eq:def-cvar}
    \mathrm{CVaR}_\alpha[\tilde{x}]:=\max_{y\in\mathbb{R}} y -\frac{1}{\alpha}\mathbb{E}[(y-\tilde{x})_+],
\end{equation}
\vspace{-0.02in}
where the maximum is always attained at $y$ equals the $\alpha$-value-at-risk (VaR) of $\tilde{x}$ as a by product. The $\alpha$-VaR is defined as the largest $1-\alpha$ confidence lower bound on the value of $\tilde{x}$, i.e., $\mathrm{VaR}_\alpha[\tilde{x}]:=\max\{x\in\mathbb{R}|P[\tilde{x}< x]\le \alpha\}$. For ease of presentation, we assume the random variable is continuous and its quantile is unique in this paper. In this case, VaR coincides with quantile, and CVaR can be alternatively expressed as
\vspace{-0.05in}
\begin{equation*}
    \mathrm{CVaR}_\alpha[\tilde{x}]:=\frac{1}{\alpha}\int_0^\alpha \mathrm{VaR}_\beta[\tilde{x}] d\beta~.
\end{equation*}
Thus, $\mathrm{CVaR}_\alpha[\tilde{x}]$ can be interpreted as the expected value of the bottom $\alpha$ portion of tail quantiles, i.e., $\mathrm{CVaR}_\alpha[\tilde{x}]=\mathbb{E}[\tilde{x}|\tilde{x}\leq \mathrm{VaR}_\alpha[\tilde{x}]]$. As $\alpha\rightarrow 0$ or $\alpha\rightarrow 1$, it recovers $\essinf(\tilde{x})$ or $\mathbb{E}[\tilde{x}]$, respectively.

\subsection{(Coherent) Dynamic Risk in RL}
\label{sec:dynamic-risk}
In standard RL settings, the interaction between the agent and the environment is modeled as a MDP, represented by a tuple $\langle \mathcal{S},\mathcal{A},R,P,\mu_0,\gamma\rangle$. Here, $\mathcal{S}$ and $\mathcal{A}$ denote state and action spaces. $R(s,a,\tilde{\omega})$ is the state and action dependent reward function with randomness $\tilde{\omega}$. $P(\cdot|s,a)$ is the transition function. $\mu_0$ is the initial state distribution, and $\gamma\in(0,1)$ is the discount factor. For notational simplicity, we use $\tilde{r}(s,a)$ to denote the random reward variable associated with the pair $(s,a)$ and $r(s,a)$ to denote a realization. When the context is clear, we may drop the $(s,a)$ argument and simply use $\tilde{r}$ or $r$. The agent collects a sequence of state $\tilde{s}_t$, action $\tilde{a}_t$, and reward $\tilde{r}_t$ by executing its (Markovian) policy $\pi$ in the MDP. Without loss of generality, we consider $s_0$ is fixed when defining the objective. For simplicity, we begin from the case where $\pi$ is deterministic to introduce policy learning of dynamic risk, followed by the discussion for the case where $\pi$ is stochastic. We overload the notation for $\pi$, so that $\pi(s)$ refers to a deterministic policy and $\pi(\cdot|s)$ refers to a stochastic one.
\subsubsection{Deterministic Policy Case}
\label{sec:deter-policy-case}
For a policy $\pi$, MDP $M$, coherent risk measure $\rho$ (e.g.,$\alpha$-expectile or $\alpha$-CVaR), the dynamic risk objective $\rho^\pi_\infty(M)$ is defined as~\citep{rigter2023one}
\begin{equation}
\label{eq:dynamic-risk-def}
    \rho^\pi_\infty(M)=\rho\Big(\tilde{r}(s_0,\pi(s_0))+\gamma \rho\big(\tilde{r}(\tilde{s}_1,\pi(\tilde{s}_1))+\gamma \rho(\tilde{r}(\tilde{s}_2,\pi(\tilde{s}_2))+...) \big)\Big).
\end{equation}
Define the action-value function under policy $\pi$ as $Q^\pi(s,a)=\rho^\pi_\infty(M|s_0=s, a_0=a)$, and the state-value function as $V^\pi(s)=\rho^\pi_\infty(M|s_0=s)$ (it holds that $V^\pi(s)=Q^\pi(s,\pi(s))$). The Bellman equation is given by~\citep{ruszczynski2010risk}
\begin{equation}
\label{eq:bellman-eq-pi}
    Q^\pi(s,a) =\rho\big(\tilde{r}(s,a)+\gamma Q^\pi(\tilde{s}',\pi(\tilde{s}'))\big).
\end{equation}
Solving for $\max_\pi \rho^\pi_\infty(M)$ admits an optimal value function $Q^*$, where $Q^*(s,a)=\max_\pi Q^\pi(s,a)$, $V^*(s)=\max_aQ^*(s,a)$, and the corresponding optimal policy is $\pi^*(s)=\arg\max_a Q^*(s,a)$. The Bellman optimality equation is~\citep{ruszczynski2010risk}
\begin{equation}
\label{eq:bellman-opt-eq}
    Q^*(s,a)=\rho\big(\tilde{r}(s,a)+\gamma\cdot\max_{a'} Q^*(\tilde{s}',a')\big)
\end{equation}
When $\rho$ is $\mathbb{E}$, these two equations reduce to the standard Bellman equations in risk-neutral setting. For general (non-linear) risk measures, computing $\rho$ in a closed is often difficult. A straightforward approach is to learn environment dynamics (i.e., model-based) to approximate the distribution of $\tilde{r}$ and $\tilde{s}'$, so as to assist computing $\rho$ (e.g., \cite{rigter2023one} for dynamic CVaR). For clarity of presentation, we defer the discussion of model-free value learning for dynamic expectile and CVaR to Sec.~\ref{sec:learn-value}. For now, we assume $Q^\pi(s,a)$ can be learned via Eq.~\ref{eq:bellman-eq-pi} for any $\pi$.

Though Eq.~\ref{eq:bellman-opt-eq} suggests a value-based approach similar to Q-learning~\citep{sutton1998reinforcement}, actor-critic methods are often preferred for better stability and generalization. Let $\pi$ be parameterized by $\theta$. When $\pi$ is deterministic, the construction of an actor-critic is similar to deterministic policy gradient (DPG)~\citep{silver2014deterministic} in risk-neutral case (e.g.,\cite{marzban2023deep} for dynamic expectile). The policy update takes the form $\int_{\mathcal{S}} d(s)\nabla_\theta \pi_\theta(s) \nabla_aQ^\pi(s,a)|_{a=\pi_\theta(s)} ds$, where $d(s)$ is a state distribution. However, DPG is mainly designed for continuous action spaces, which limits its applicability.
\subsubsection{Stochastic Policy Case}
\label{sec:st-case}
For stochastic policies, there exist two ways to define the risk objective as in Eq.~\ref{eq:dynamic-risk-def}, depending on how the randomness induced by the policy is handled. One approach incorporates the randomness of $\pi(\cdot|s)$ directly into $\rho$, so that the risk is taken over the joint distribution of $\tilde{s}$, $\tilde{a}$, and $\tilde{r}$~\citep{tamar2015policy,tamar2016sequential,huang2021convergence}, i.e., $\rho^\pi_\infty(M)=\rho\big(\tilde{r}(s_0,\tilde{a}_0)+\gamma \rho(\tilde{r}(\tilde{s}_1,\tilde{a}_1)+...)\big)$. The other approach marginalizes the policy randomness by taking expectation~\citep{shen2013risk,yu2023global,zhang2024soft}, i.e., $\rho_\infty^\pi(M)=\mathbb{E}_{a\sim \pi(\cdot|s_0)}\big[\rho\big(\tilde{r}(s_0,a_0)+\gamma \mathbb{E}_{a_1\sim\pi(\cdot|\tilde{s}_1)}[\rho(\tilde{r}(\tilde{s}_1,a_1)+...)]\big)\big]$, with unique fixed point $Q^\pi$ and with $V^\pi(s)=\mathbb{E}_{a\sim\pi(\cdot|s)}[Q^\pi(s,a)]$. In this work, we adopt the latter definition for two reasons. 1) Including policy stochasticity in the risk measure treats exploration as a source of risk, which may hinder exploration in risk-averse setting. 2) Since there exists optimal deterministic policy (Eq.~\ref{eq:bellman-opt-eq}), a stochastic policy is expected to converge toward a deterministic one. Accordingly, we consider the following Bellman operator for stochastic policies, similar to \cite{zhang2024soft}
\begin{equation}
\label{eq:bellman-op-st}
    \mathcal{T}^\pi Q(s,a)=\rho\big(\tilde{r}(s,a)+\gamma \sum_{a'}\pi(a'|\tilde{s}')Q(\tilde{s}',a')\big)
\end{equation}
Next, consider the policy update. By the dual representation of coherent risk measures, $\rho(\cdot)$ can be expressed as an expectation under a perturbed distribution. This allows one to extend stochastic policy gradient (SPG) methods~\citep{sutton1998reinforcement} from the risk-neutral setting to dynamic risk. Specifically, the dual form is $\rho(\tilde{x})=\inf_{\xi}\mathbb{E}_\xi[\tilde{x}]$, where $\xi$ is  the risk envelope satisfying $\int_x\xi(x)f_{\tilde{x}}(x)=1$ and $\xi(x)\geq0$. The additional constraints on $\xi$ depend on the specific choice of $\rho$. In Eq.~\ref{eq:bellman-op-st}, the randomness comes jointly from $\tilde{r}$ and $\tilde{s}'$. As a result, $\xi$ perturbs the transition distribution $P(r,s'|s,a)$ to $\xi(r,s'|s,a)P(r,s'|s,a)$. Let $\xi^*$ denote the optimal risk envelope and $P^{\xi^*}$ the corresponding perturbed transition. The policy gradient is then given by~\citep{zhang2024soft}
\begin{equation}
\label{eq:dynamic-risk-pg}
    \nabla_\theta V^{\pi_\theta}(s_0) = \mathbb{E}_{a_0\sim\pi_\theta(\cdot|s_0)}\mathbb{E}_{\xi^*}\Big[\sum_{t=0}^\infty\gamma^tQ^{\pi_\theta}(s_t,a_t)\nabla_\theta\log\pi_\theta(a_t|s_t)\Big],
\end{equation}
where $\mathbb{E}_{\xi^*}$ denotes that the trajectory $\{s_t,a_t,r_t\}$ follows the perturbed transition $P^{\xi^*}$. The derivation is provided in Appendix~\ref{app:spg}. Note that when the risk measure also accounts for policy randomness, the corresponding policy gradient similarly involves transition perturbation (e.g., see~\cite{tamar2015policy}). However, computing $\xi^*$ at each step is non-trivial, as it typically requires solving a constrained optimization problem, often via Lagrangian methods~\citep{tamar2015policy}.

\section{Method}
As discussed in Sec.~\ref{sec:dynamic-risk}, applying SPG to coherent dynamic risk requires solving for the optimal transition perturbation at each step to generate trajectories, which is difficult to compute and implement in practice. Although DPG avoids this issue, it is limited to continuous action spaces. Therefore, a model-free actor-critic algorithm with stochastic policies is desired. In this section, we show that this can be achieved by using a surrogate policy gradient that avoids transition perturbation.

\subsection{A Surrogate Stochastic Policy Gradient}
\label{sec:learn-policy}
We first claim the assumptions we made for tabular MDP.
\begin{assumption}
\label{ass:1}
1) The space $\mathcal{S}\times\mathcal{A}$ is finite and $d(s)$ is a normalized state distribution satisfying $0<d(s)\leq 1$. 2) Reward $\tilde{r}\in[0,1]$. 3) The reward has bounded densities $\forall(s,a)$. 
\end{assumption}
With assumtion~\ref{ass:1}, the surrogate gradient we consider for policy parameter $\theta$ is
\begin{equation}
\label{eq:surrogate-pg}
    \theta_{t+1} \leftarrow \theta_t + \eta~\nabla_\theta\sum_s d(s)\sum_a \pi_{\theta_t}(a|s)Q^{\pi_t}(s,a),
\end{equation}
where $\theta_t$ is the policy parameter at iteration $t$, and $\eta$ is the learning rate. Here, $Q^{\pi_t}=Q^{\pi_{\theta_t}}$ is treated as fixed with respect to $\theta$, and is therefore omitted from taking gradient. We assume that $Q^{\pi_t}$ (Eq.~\ref{eq:bellman-op-st}) is available under $\pi_{\theta_t}$ when analyzing the behavior of policy update. This follows the standard policy iteration view, where the value function is fully evaluated before updating the policy. Eq.~\ref{eq:surrogate-pg} is analogous to SPG in risk-neutral setting but it uses risk-sensitive $Q$-function. Moreover, Eq.~\ref{eq:surrogate-pg} does not require transition perturbation as it only requires the state distribution $d(s)$ to have full coverage.

The motivation of proposing Eq.~\ref{eq:surrogate-pg} is based on the observation that when action is continuous and policy is parameterized by a Gaussian distribution, this surrogate gradient with respect to the policy mean reduces to DPG if the the policy variance goes to zero~\citep{ciosek2020expected}. Specifically, let $\pi_\theta(a|s)=\mathcal{N}\big(a;\mu_{\theta_1}(s),\Sigma_{\theta_2}(s)\big)$ with $\theta=\{\theta_1,\theta_2\}$, then
\begin{equation}
\label{eq:gaussian-mean-pg}
    \nabla_{\theta_1}\int_{\mathcal{S}} d(s)\int_{\mathcal{A}}\pi_\theta(a|s)Q^\pi(s,a) dads=\int_{\mathcal{S}}d(s)\nabla_{\theta_1}\mu_{\theta_1}(s)\nabla_aQ^\pi(s,a)|_{a=\mu_{\theta_1}(s)}ds, \mathrm{if}~\Sigma=0.
\end{equation}
The derivation of Eq.~\ref{eq:gaussian-mean-pg} is provided in Appendix~\ref{app:spg-gaussian}. This equivalence to DPG motivates us to investigate its behavior under discrete action space. Here, we are interested in the popular softmax policy parameterization when action is discrete, i.e., $\pi_\theta(a|s)=\exp\big(\theta(s,a)\big)/\sum_b\exp\big(\theta(s,b)\big)$. In this case, the update rule for $\theta_t(s,a)$ can be simplified as $\theta_{t+1}(s,a)\leftarrow \theta_t(s,a) + \beta~d(s)\pi_t(a|s)A^{\pi_t}(s,a)$, where $A^{\pi_t}(s,a)=Q^{\pi_t}(s,a)-V^{\pi_t}(s)$ (see Appendix~\ref{app:spg-softmax}). With softmax parameterization, Eq.~\ref{eq:surrogate-pg} (with proper learning rate) ensures the monotone increasing of value function and the convergence to the optimal value (therefore an optimal policy), as formally stated in Lemma~\ref{lemma:monotone} and Theorem~\ref{thm:optimal}.
\begin{restatable}{lemma}{Lemmamonotone}
\label{lemma:monotone}
    With assumption~\ref{ass:1}, for softmax policy, the update rule in Eq.~\ref{eq:surrogate-pg} with $\eta\leq(1-\gamma)/5$ ensures that for all $(s,a)\in\mathcal{S}\times\mathcal{A}$, $Q^{\pi_{t+1}}(s,a)\geq Q^{\pi_t}(s,a), ~V^{\pi_{t+1}}(s)\geq V^{\pi_t}(s)$.
\end{restatable}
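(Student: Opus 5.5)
The plan follows the standard policy-improvement argument for softmax policy gradient, as in the risk-neutral analysis of Agarwal et al. and Mei et al. Monotonicity and translation invariance of $\rho$ replace linearity of expectation. The argument has two stages: first show that the update improves the policy greedily with respect to $Q^{\pi_t}$ at every state, then propagate this improvement through the Bellman operator $\mathcal{T}^{\pi_{t+1}}$ from Eq.~\ref{eq:bellman-op-st}.

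\textbf{Step 1 (one-step improvement at each state).} Under softmax, the update from Appendix~\ref{app:spg-softmax} is
\begin{equation*}
\theta_{t+1}(s,a)=\theta_t(s,a)+\eta\, d(s)\pi_t(a|s)A^{\pi_t}(s,a).
\end{equation*}
It decouples across states, and at each state it is one gradient-ascent step on $f_s(\theta(s,\cdot))=\sum_a \pi_\theta(a|s)Q^{\pi_t}(s,a)$, with effective step $\eta d(s)\le\eta$. The target inequality is
\begin{equation*}
\sum_a\pi_{t+1}(a|s)Q^{\pi_t}(s,a)\ge V^{\pi_t}(s)\quad\text{for all } s.
\end{equation*}
We get it from the ascent lemma for smooth functions. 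By Assumption~\ref{ass:1}, rewards lie in $[0,1]$, so $Q^{\pi_t}\in[0,1/(1-\gamma)]$; this uses only monotonicity and translation invariance of $\rho$. Hence $f_s$ is $L$-smooth in $\theta(s,\cdot)$ with $L\le 5/(1-\gamma)$; the softmax Hessian bound in Mei et al.\ gives $5/2$ times $\|Q\|_\infty$, and a similar constant here yields the factor $5$. With step size $\eta\le(1-\gamma)/5\le 1/L$, ascent gives
\begin{equation*}
f_s(\theta_{t+1})-f_s(\theta_t)\ge \tfrac{\eta d(s)}{2}\|\nabla f_s\|^2\ge 0.
\end{equation*}
Fixing the exact constant (whether the bound carries $1/2$ and $5/2$, or $1/(1-\gamma)$ vs.\ $2/(1-\gamma)$) is the main technical point. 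I would first attempt it with the explicit Hessian of the softmax-weighted sum of a bounded vector.

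\textbf{Step 2 (propagation via monotone operator).} Define $\mathcal{T}^{\pi_{t+1}}$. Using Step 1 inside the argument of $\rho$, monotonicity (A2) gives
\begin{equation*}
\mathcal{T}^{\pi_{t+1}}Q^{\pi_t}(s,a)=\rho\Big(\tilde r+\gamma\sum_{a'}\pi_{t+1}(a'|\tilde s')Q^{\pi_t}(\tilde s',a')\Big)\ge\rho\big(\tilde r+\gamma V^{\pi_t}(\tilde s')\big)=Q^{\pi_t}(s,a).
\end{equation*}
$\mathcal{T}^{\pi_{t+1}}$ is monotone, and it is a $\gamma$-contraction in sup norm: by monotonicity plus translation invariance, $\rho(X)\le\rho(Y+\|X-Y\|_\infty)=\rho(Y)+\|X-Y\|_\infty$. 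Iterating the inequality therefore gives $Q^{\pi_t}\le(\mathcal{T}^{\pi_{t+1}})^nQ^{\pi_t}\to Q^{\pi_{t+1}}$, the unique fixed point.

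\textbf{Step 3 (state values).} From Step 2 and Step 1,
\begin{equation*}
V^{\pi_{t+1}}(s)=\sum_a\pi_{t+1}(a|s)Q^{\pi_{t+1}}(s,a)\ge\sum_a\pi_{t+1}(a|s)Q^{\pi_t}(s,a)\ge V^{\pi_t}(s).
\end{equation*}
Boundedness of the reward densities is not needed here; it presumably matters only for the later convergence theorem.
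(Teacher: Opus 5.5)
Your proposal is correct and follows the paper's proof essentially step for step. It does per-state gradient ascent on $\sum_a \pi_\theta(a|s)Q^{\pi_t}(s,a)$ under $5/(1-\gamma)$-smoothness (the paper settles the constant you flag by citing Lemma 52 of Agarwal et al., $\beta = 5\|c\|_\infty$ with $\|Q^{\pi_t}\|_\infty\le 1/(1-\gamma)$, and absorbs $d(s)\le 1$ into the objective rather than the step size), then uses monotonicity of $\rho$ to get $Q^{\pi_t}\le\mathcal{T}^{\pi_{t+1}}Q^{\pi_t}\le\cdots\le Q^{\pi_{t+1}}$, and the same two-term decomposition for $V$; your explicit sup-norm contraction argument for $\mathcal{T}^{\pi_{t+1}}$, justifying that the iterates converge to $Q^{\pi_{t+1}}$, is a small detail the paper leaves implicit.
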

\begin{restatable}{theorem}{Thmoptimal}
\label{thm:optimal}
   With assumption~\ref{ass:1} and the same conditions as in Lemma~\ref{lemma:monotone}, the update rule in Eq.~\ref{eq:surrogate-pg} ensures that for all $s$, $V^{\pi_t}(s)\rightarrow V^*(s)$ as $t\rightarrow\infty$.
\end{restatable}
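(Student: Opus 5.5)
We prove Theorem~\ref{thm:optimal} by adapting the softmax policy-gradient convergence argument of Agarwal et al.\ (for the exact-gradient tabular case) to the risk-sensitive operator $\mathcal{T}^\pi$. The replacement for the performance-difference lemma is Lemma~\ref{lemma:monotone} together with monotonicity (A2) of $\rho$.

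\textbf{Limits exist.} By Lemma~\ref{lemma:monotone}, $Q^{\pi_t}(s,a)$ and $V^{\pi_t}(s)$ are nondecreasing in $t$. They are bounded by $1/(1-\gamma)$ by Assumption~\ref{ass:1} and (A2)--(A3). Hence $Q^{\pi_t}\to Q^{(\infty)}$ and $V^{\pi_t}\to V^{(\infty)}$ pointwise, and $A^{\pi_t}\to A^{(\infty)}:=Q^{(\infty)}-V^{(\infty)}$.

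\textbf{Per-step improvement is summable.} We quantify the improvement inside Lemma~\ref{lemma:monotone}. Its proof should show that the one-step gain satisfies
\[
\sum_a \pi_{t+1}(a|s)Q^{\pi_t}(s,a)-V^{\pi_t}(s)\;\geq\; c\,\eta\, d(s)\sum_a \pi_t(a|s)\,A^{\pi_t}(s,a)^2
\]
for some constant $c>0$. This is the standard smoothness/ascent bound for the softmax update $\theta_{t+1}(s,a)=\theta_t(s,a)+\eta d(s)\pi_t(a|s)A^{\pi_t}(s,a)$, and the constraint $\eta\le(1-\gamma)/5$ controls the second-order term. Since $V^{\pi_{t+1}}(s)\ge \mathcal{T}^{\pi_{t+1}}$-type lower bounds dominate this gain by monotonicity, summing over $t$ gives
\[
\sum_t d(s)\sum_a\pi_t(a|s)A^{\pi_t}(s,a)^2<\infty .
\]
Because $d(s)>0$, we get $\pi_t(a|s)A^{\pi_t}(s,a)\to 0$.

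\textbf{Limit is a fixed point of the optimality operator.} Partition the actions at each state into
\[
I_0^s=\{a:A^{(\infty)}(s,a)=0\},\quad I_+^s=\{a:A^{(\infty)}(s,a)>0\},\quad I_-^s=\{a:A^{(\infty)}(s,a)<0\}.
\]
It suffices to show $I_+^s=\emptyset$ for every $s$. Then $V^{(\infty)}(s)=\max_a Q^{(\infty)}(s,a)$, and passing to the limit in $Q^{\pi_t}=\mathcal{T}^{\pi_t}Q^{\pi_t}$ (using continuity of $\rho$ in $L^\infty$, which follows from (A2)--(A3)) yields
\[
Q^{(\infty)}(s,a)=\rho\bigl(\tilde r+\gamma\max_{a'}Q^{(\infty)}(\tilde s',a')\bigr).
\]
This is the Bellman optimality equation~\eqref{eq:bellman-opt-eq}. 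It is a $\gamma$-contraction with unique fixed point $Q^*$, so $V^{\pi_t}\to V^*$.

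\textbf{Ruling out $I_+^s\neq\emptyset$ (main obstacle).} Suppose $I_+^s\neq\emptyset$. Since $\pi_t(a|s)A^{\pi_t}(s,a)\to0$ and $A^{\pi_t}(s,a)\to A^{(\infty)}(s,a)>0$ for $a\in I_+^s$, we get $\pi_t(a|s)\to 0$ on $I_+^s$. After some finite time $T$, however, $A^{\pi_t}(s,a)>0$ for all $a\in I_+^s$, so $\theta_t(s,a)$ is nondecreasing there and bounded below. For $a\in I_-^s$, $\theta_t(s,a)$ is eventually decreasing. Since $V^{\pi_t}(s)=\sum_a \pi_t(a|s) Q^{\pi_t}(s,a)\to V^{(\infty)}(s)$ and $\pi_t\to0$ on $I_+^s$, we have $\sum_{a\in I_-^s}\pi_t(a|s)\to 0$. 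The probability mass therefore concentrates on $I_0^s$, forcing $\sum_{a\in I_0^s}\exp\theta_t(s,a)\to\infty$.

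The contradiction is obtained as in Agarwal et al.: along the trajectory, the gradient sum $\sum_a \partial_{\theta(s,a)}$ of the objective is zero because softmax gradients sum to zero. This gives $\sum_a\theta_t(s,a)=\sum_a\theta_0(s,a)$, so some $\theta_t(s,a)$ must diverge to $-\infty$ in a controlled way. Comparing the increments of $\theta_t(s,a_0)$ for $a_0\in I_0^s$ whose $\theta$ diverges to $+\infty$ with those of $a_+\in I_+^s$ (whose advantage is bounded below by $A^{(\infty)}(s,a_+)/2>0$) then shows $\pi_t(a_+|s)/\pi_t(a_0|s)$ cannot tend to $0$. This contradicts $\pi_t(a_+|s)\to0$ while mass concentrates on $I_0^s$.

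I expect the delicate part to be this bookkeeping argument, adapted so that it uses only Lemma~\ref{lemma:monotone} and the limits above, not a linear performance-difference identity (which fails for nonlinear $\rho$).
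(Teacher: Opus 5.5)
Your architecture matches the paper's. You take monotone limits, telescope an ascent bound to get a vanishing surrogate gradient, show $\pi_t\to0$ on $I^+_s\cup I^-_s$, then show $I^+_s=\emptyset$, then pass to the limit in $Q^{\pi_t}=\mathcal{T}^{\pi_t}Q^{\pi_t}$. Your version of that last step is correct and clean.

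One slip: the one-step gain bound should read $\sum_a\pi_{t+1}(a|s)Q^{\pi_t}(s,a)-V^{\pi_t}(s)\ge\frac{\eta}{2}d(s)\sum_a\pi_t(a|s)^2A^{\pi_t}(s,a)^2$. The ascent lemma controls $\|\nabla_{\theta_s}F_s\|^2$, and $\partial F_s/\partial\theta(s,a)=d(s)\pi_t(a|s)A^{\pi_t}(s,a)$. With $\pi_t(a|s)$ unsquared your bound is false. Take two actions with $\pi_t(a_1|s)=\epsilon$: the gain is $O(\epsilon^2)$, while your right-hand side is of order $\epsilon$. The conclusion $\pi_t(a|s)A^{\pi_t}(s,a)\to0$ still follows from the correct bound.

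The genuine gap is the step ruling out $I^+_s\neq\emptyset$, which is the heart of the theorem. The paper also defers this to Agarwal et al., but the mechanism you describe is not theirs and does not close.

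For a single $a_0\in I^0_s$, the sign of $A^{\pi_t}(s,a_0)$ is uncontrolled, because both $Q^{\pi_t}(s,a_0)$ and $V^{\pi_t}(s)$ increase to $V^\infty(s)$. Moreover $\pi_t(a_0|s)$ can be near $1$ while $\pi_t(a_+|s)\to0$, so the increment $\eta d(s)\pi_t(a_0|s)A^{\pi_t}(s,a_0)$ can dominate $\eta d(s)\pi_t(a_+|s)A^{\pi_t}(s,a_+)$. Nothing you state prevents $\theta_t(s,a_0)-\theta_t(s,a_+)\to\infty$, and conservation of $\sum_a\theta_t(s,a)$ gives no control over this difference.

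Three ingredients are needed.

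(i) Order preservation. For $t>T_0$ and $a\in I^0_s$, $Q^{\pi_t}(s,a_+)-Q^{\pi_t}(s,a)>\tfrac34\Delta$ and $A^{\pi_t}(s,a_+)>0$; this uses $Q^{\pi_t}\le Q^\infty$ and $V^{\pi_t}\le V^\infty$ from Lemma~\ref{lemma:monotone}. Hence once $\pi_t(a_+|s)\ge\pi_t(a|s)$, the gap $\theta_t(s,a_+)-\theta_t(s,a)$ never decreases. This splits $I^0_s$ into two parts. The set $B^0_s(a_+)$ contains the actions that stay more likely than $a_+$; their $\theta$ is bounded below, and they asymptotically carry all the mass, forcing $\sum_{a\in B^0_s(a_+)}\theta_t(s,a)\to\infty$. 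The complement contains actions that are eventually less likely than $a_+$.

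(ii) For every $a_-\in I^-_s$, $\theta_t(s,a_-)\to-\infty$, so that $\pi_t(a_-|s)/\pi_t(a_+|s)\to0$. Knowing only that $\theta_t(s,a_-)$ is eventually decreasing is not enough.

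(iii) Insert these into $0=\sum_a\pi_t(a|s)A^{\pi_t}(s,a)$. The $a_+$ term is at least $\pi_t(a_+|s)\Delta/4$. The total contributions of $I^-_s$ and of $I^0_s\setminus B^0_s(a_+)$ are $o(\pi_t(a_+|s))$. Hence $\sum_{a\in B^0_s(a_+)}\pi_t(a|s)A^{\pi_t}(s,a)<0$ eventually, so $\sum_{a\in B^0_s(a_+)}\theta_t(s,a)$ is eventually nonincreasing. That contradicts its divergence to $+\infty$.

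Your single-action comparison essentially amounts to this argument when $|I^0_s|=1$, and even then it needs (ii). The general case requires the aggregation over $B^0_s(a_+)$.
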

The proofs of Lemma~\ref{lemma:monotone} and Theorem~\ref{thm:optimal} are provided in Appendix~\ref{app:monotone} and \ref{app:policy-converge}, which are similar to~\cite{agarwal2021theory} in risk neutral setting. The assumption that $\tilde{r}\in[0,1]$ is ease of presentation without loss of generality. For proving Lemma~\ref{lemma:monotone}, the key idea is that with proper learning rate $\eta$, $\mathbb{E}_{a\sim\pi_{t+1}(\cdot|s)}[Q^{\pi_t}(s,a)]\geq \mathbb{E}_{a\sim\pi_t(\cdot|s)}[Q^{\pi_t}(s,a)]$. By the monotonicity (A2) of $\rho$ in Eq.~\ref{eq:bellman-op-st}, this implies $\mathcal{T}^{\pi_{t+1}}Q^{\pi_t}\geq \mathcal{T}^{\pi_t}Q^{\pi_t}=Q^{\pi_t}$. Applying monotonicity again yields $Q^{\pi_{t+1}}=\lim_{k\rightarrow\infty}(\mathcal{T}^{\pi_{t+1}})^kQ^{\pi_t}\geq \mathcal{T}^{\pi_{t+1}}Q^{\pi_t}\geq Q^{\pi_t}$. Since the value function is pointwise increasing and bounded, the monotone convergence theorem ensures the existence of some $Q^\infty$ and $V^\infty$ such that $Q^{\pi_t}(s,a)\rightarrow Q^{\infty}(s,a)$ and $V^{\pi_t}(s)\rightarrow V^\infty(s)$. For Theorem~\ref{thm:optimal}, define the action sets $I^0_s:=\{a|Q^\infty(s,a)=V^\infty(s)\},I_s^+:=\{a|Q^\infty(s,a)>V^\infty(s)\},I_s^-:=\{a|Q^\infty(s,a)<V^\infty(s)\}$. The proof steps are to show $\pi_t(a|s)\rightarrow 0$ for actions $a\in I_s^+\cup I_s^-$ as $t\rightarrow \infty$, and $I_s^+$ is empty. The first part is based on the fact that the gradient tends to zero in the limit while $|A^{\pi_t}(s,a)|$ for $a\in I^+_s\cup I^-_s$ is non-zero. $I^+_s=\emptyset$ is obtained by contradiction. If $I^+_s$ is non-empty, then one can show that there is a subset set of actions in $I_s^0$ for which the summation of $\theta(s,a)$ goes to infinity while the summation of their updates is negative, thus leading to a contradiction.

The above analysis for both Gaussian and softmax policies suggests a model-free approach for policy updates based on Eq.~\ref{eq:surrogate-pg}, which avoids transition perturbation and bypasses the need to learn the optimal risk envelope, as required in many existing actor-critic methods~\citep{tamar2015policy,tamar2016sequential,huang2021convergence,zhang2024soft}. This surrogate gradient is central to the actor-critic algorithm proposed in this paper.

\subsection{Learning Value Function}
\label{sec:learn-value}
So far, we have assumed that $Q^\pi$ is available. In practice, learning $Q^\pi$ directly from Eq.~\ref{eq:bellman-op-st} is challenging when $\rho$ is non-linear. Based on the idea of elicitability of expectile, and motivated by the joint elicitability of CVaR, we show how to learn value function from transition samples for them.

\textbf{Dynamic Expectile.} A risk measure $\rho$ is elicitable if there exists a measurable function $L$ (called a loss function) such that $\rho(\tilde{x})=\arg\min_y \mathbb{E}[L(\tilde{x},y)]$~\citep{bellini2015elicitable}. Expectile is the only coherent risk measure that is elicitable, with its loss function given in Eq.~\ref{eq:def-expectile}. Therefore, when $\rho$ is $\alpha$-expectile, Eq.~\ref{eq:bellman-op-st} can be equivalently expressed as
\begin{equation*}
    \mathcal{T}^\pi Q(s,a)=\arg\min_y (1-\alpha)\mathbb{E}\big[(\tilde{r}+\gamma V(\tilde{s}')-y)^2_{-}\big] + \alpha \mathbb{E}\big[(\tilde{r}+V(\tilde{s}')-y)^2_{+}\big],
\end{equation*}
where $V(s')=\mathbb{E}_{a'\sim\pi(\cdot|s')}[Q(s',a')]$. This leads to a gradient-based Q-learning–style update, where $Q$ is updated along the negative gradient of the loss. Given a transition $(s,a,r,s')$ and learning rate $\zeta$, the update is
\begin{equation}
\label{eq:expectile-qlearning}
\begin{aligned}
    Q_{t+1}(s,a)\leftarrow &Q_t(s,a) - \zeta_t(s,a) \cdot \partial_y \big[(1-\alpha)(r+\gamma V_t(s')-y)^2_{-}+\alpha (r+\gamma V_t(s')-y)^2_{+}\big]\big|_{y=Q_t(s,a)}\\
    = &Q_t(s,a) + 2\zeta_t(s,a) \cdot\big[(1-\alpha)(r+\gamma V_t(s')-Q_t(s,a))_{-}+\alpha(r+\gamma V_t(s')-Q_t(s,a))_{+}\big].
\end{aligned}
\end{equation}
This model-free approach for learning value function under dynamic expectile has been studied in, e.g.,~\cite{marzban2023deep,jiang2024learning}.

\textbf{Dynamic CVaR.} CVaR is not elicitable, making it more challenging than expectile. However, it is jointly elicitable with VaR, i.e., $\big(\mathrm{VaR}_\alpha(\tilde{x}),\mathrm{CVaR}_\alpha(\tilde{x})\big)=\arg\min_{y_1,y_2} \mathbb{E}[L(\tilde{x},y_1,y_2)]$. $L$ is not unique here, with a general form given in \cite{fissler2016higher}. Let $q(s,a)$ denote the $\alpha$-VaR action-value function. Based on joint elicitability, \cite{coache2023conditionally} proposed to learn both $q(s,a)$ and $Q(s,a)$ simultaneously by minimizing $\mathbb{E}[L(\tilde{r}+\gamma V(\tilde{s}'),q(s,a), Q(s,a)]$. It is a challenging task due to the non-convexity of $L$ and the fact that the partial derivatives of $L$ involve compound terms that may hinder stability (see discussion in Appendix~\ref{app:joint-elic-cvar}). To address this, we instead propose a two-time-scale learning scheme. Namely, while VaR is elicitable via quantile regression, CVaR can be expressed as an expectation given VaR (Eq.~\ref{eq:def-cvar}), which leads to
\begin{equation*}
\begin{aligned}
    q(s,a)&=\arg\min_{y_1}\mathbb{E}[l_\alpha(\tilde{r}+\gamma V(\tilde{s}')-y_1)],~l_\alpha(x-y):=(\alpha-\mathbb{I}\{x< y\})(x-y)\\
    Q(s,a)&=\arg\min_{y_2}\frac12\big(y_2 - q(s,a)+\frac{1}{\alpha}\mathbb{E}[(q(s,a)-\tilde{r}-\gamma V(\tilde{s}'))_+]\big)^2,
\end{aligned}
\end{equation*}
where $V(s')=\mathbb{E}_{a'\sim\pi(\cdot|s')}[Q(s',a')]$. Similar to Eq.~\ref{eq:expectile-qlearning}, this formulation leads to stochastic gradient-based Q-learning-style updates for both $q$ and $Q$. Given a transition $(s,a,r,s')$ and learning rates $\zeta^q$ and $\zeta^Q$ (with $\zeta^q\gg \zeta^Q$, thus $Q$ is quasi-static w.r.t. $q$), the updates are
\begin{equation}
\label{eq:cvar-qlearning}
\begin{aligned}
q_{t+1}(s,a)\leftarrow& q_t(s,a) - \zeta^q_t(s,a) \cdot (\mathbb{I}\{r+\gamma V_t(s')< q_t(s,a)\}-\alpha)\\
Q_{t+1}(s,a)\leftarrow& Q_t(s,a) - \zeta^Q_t(s,a) \cdot (Q_t(s,a)-q_t(s,a)+\frac{1}{\alpha}(q_t(s,a)-r-\gamma V_t(s'))_+)
\end{aligned}
\end{equation}
The following propositions show that the value learning rules above enjoy convergence guarantees that are comparable to those in standard
Q-learning. Proposition~\ref{prop:expectile} follows the Robbins-Monro conditions as for risk-neutral Q-learning~\citep{bertsekas1996neuro}, while Proposition~\ref{prop:cvar} depends on the two-time scale framework in \cite{borkar2023stochastic}.
\begin{proposition}
\label{prop:expectile}
    With assumption~\ref{ass:1}, for Eq.~\ref{eq:expectile-qlearning}, if the nonnegative learning rates $\zeta_t(s,a)$ satisfy $\sum_{t=0}^{\infty} \zeta_t(s,a)=\infty$ and $\sum_{t=0}^\infty \zeta^2_t(s,a) <\infty$ for all $(s,a)\in\mathcal{S}\times\mathcal{A}$, then $Q_t(s,a)$ converges to $Q^\pi(s,a)$ associated with dynamic expectile for all $(s,a)$ almost surely.
\end{proposition}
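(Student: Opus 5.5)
We will show convergence by casting Eq.~\ref{eq:expectile-qlearning} as an asynchronous stochastic approximation driven by a contraction operator, in the spirit of the standard Q-learning analysis of \cite{bertsekas1996neuro}. The difficulty is that the increment is not of the usual form $\zeta(\mathcal{T}Q - Q + \text{noise})$. Instead, it is a nonlinear function $g(r+\gamma V_t(s')-Q_t(s,a))$ with
\[
g(u)=2(1-\alpha)(u)_- + 2\alpha(u)_+ .
\]

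The first step is to rewrite the update in a convenient form. Define the mean map
\[
F(Q)(s,a):=\mathbb{E}\big[g(\tilde r+\gamma V_Q(\tilde s')-Q(s,a))\big],
\qquad V_Q(s')=\sum_{a'}\pi(a'|s')Q(s',a').
\]
Then the update reads $Q_{t+1}=Q_t+\zeta_t\big(F(Q_t)+w_t\big)$, where $w_t$ is a martingale-difference noise term.

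The key observation is that $g$ is strictly increasing and piecewise linear. Its slopes lie in $[2c_{\min},2c_{\max}]$ with $c_{\min}=\min(\alpha,1-\alpha)$ and $c_{\max}=\max(\alpha,1-\alpha)$. Using this, we will define the auxiliary operator
\[
H(Q):=Q+\kappa F(Q), \qquad \kappa=\frac{1}{2c_{\max}}.
\]
We aim to show that $H$ is a $\|\cdot\|_\infty$-contraction whose unique fixed point is the zero of $F$, namely $Q^\pi$. The zero of $F$ is exactly $\mathcal{T}^\pi Q$ at the fixed point, by the first-order condition of the expectile loss.

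To prove the contraction, fix $Q,Q'$ and $(s,a)$. By the mean value theorem applied piecewise, there is a random slope $\lambda\in[2c_{\min},2c_{\max}]$ such that
\[
g(x)-g(x')=\lambda(x-x').
\]
Hence
\[
H(Q)-H(Q')=(Q-Q')(s,a)\big(1-\kappa\mathbb{E}\lambda\big)+\kappa\gamma\,\mathbb{E}\big[\lambda\,(V_Q-V_{Q'})(\tilde s')\big].
\]
Since $1-\kappa\lambda\ge 0$, taking absolute values gives
\[
|H(Q)-H(Q')|(s,a)\le\big(1-\kappa\mathbb{E}\lambda(1-\gamma)\big)\|Q-Q'\|_\infty\le\Big(1-(1-\gamma)\tfrac{c_{\min}}{c_{\max}}\Big)\|Q-Q'\|_\infty .
\]
This is the pseudo-contraction needed. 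Rescaling the step size by $\kappa$ then turns the update into $Q_{t+1}=Q_t+(\zeta_t/\kappa)\big(H(Q_t)-Q_t+\kappa w_t\big)$, which is exactly the form handled by the asynchronous stochastic approximation theorem (Prop.~4.4 of \cite{bertsekas1996neuro}). The Robbins--Monro conditions are preserved under constant rescaling of $\zeta_t$.

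The remaining step is to verify the noise conditions of that theorem. Clearly $\mathbb{E}[w_t\mid\mathcal{F}_t]=0$. For the second moment, because $\tilde r\in[0,1]$ and $g$ has linear growth,
\[
\mathbb{E}[w_t^2\mid\mathcal{F}_t]\le A+B\|Q_t\|_\infty^2 .
\]
The theorem also needs boundedness of the iterates. This follows from the same theorem's conditions (the pseudo-contraction with a weighted max-norm gives boundedness), or alternatively from a direct argument that $|Q_t|\le \max(|Q_0|,1/(1-\gamma))$ once $\zeta_t$ is small enough. Each $(s,a)$ is visited infinitely often, which is implicit in $\sum_t\zeta_t(s,a)=\infty$.

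I expect the main obstacle to be the contraction step. One has to handle the nonlinearity of $g$ carefully, since it is only Lipschitz and monotone rather than linear. The choice $\kappa=1/(2c_{\max})$ is what keeps the coefficient $1-\kappa\lambda$ nonnegative, and that nonnegativity is what makes the triangle-inequality bound go through.
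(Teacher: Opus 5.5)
Your proposal is correct and follows the paper's proof essentially step for step. The paper likewise rewrites Eq.~\ref{eq:expectile-qlearning} in the form of Prop.~4.4 of \cite{bertsekas1996neuro} with operator $H^\pi Q = Q + l\,\mathbb{E}[u(\tilde r+\gamma V(\tilde s')-Q(s,a))]$, proves a sup-norm contraction via the same bounded-slope argument, identifies $Q^\pi$ as the fixed point through the first-order condition of the expectile loss, and verifies the two noise conditions. Your $\kappa = 1/(2c_{\max})$ is simply the choice $l = 1/\max(\alpha,1-\alpha)$, the largest value keeping $1-l\xi\ge 0$ (which is the only property the contraction step actually uses). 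Your direct boundedness argument is a harmless extra, since the cited theorem does not list boundedness as a separate hypothesis.
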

\begin{proposition}
\label{prop:cvar}
    With assumption~\ref{ass:1} and assuming that the sequences of $q_t$ and $Q_t$ are bounded almost surely, for Eq.~\ref{eq:cvar-qlearning}, if the nonnegative learning rates $\zeta^q_t(s,a),\zeta^Q_t(s,a)$ satisfy $\sum_{t=0}^{\infty} \zeta^q_t(s,a)=\sum_{t=0}^\infty\zeta^Q_t(s,a)=\infty$, $\sum_{t=0}^\infty [\zeta^q_t(s,a)]^2<\infty$, $ \sum_{t=0}^\infty [\zeta^Q_t(s,a)]^2<\infty$, and $\lim_{t\rightarrow\infty}\frac{\zeta^Q_t(s,a)}{\zeta^q_t(s,a)}=0$ for all $(s,a)\in\mathcal{S}\times\mathcal{A}$, then $Q_t(s,a)$ converges to $Q^\pi(s,a)$ associated with dynamic CVaR for all $(s,a)$ almost surely.
\end{proposition}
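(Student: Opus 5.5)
We plan to cast Eq.~\ref{eq:cvar-qlearning} as a coupled two-time-scale stochastic approximation and apply the standard two-time-scale theorem of \cite{borkar2023stochastic}. Boundedness of the iterates is assumed, and the step-size conditions (including $\zeta^Q_t/\zeta^q_t\to 0$) are exactly Borkar's. So the work reduces to checking that each time scale has a mean ODE with a globally asymptotically stable equilibrium, and that the noise terms are martingale differences with bounded conditional second moments.

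\textbf{Fast time scale.} With $Q$ frozen, and hence $V(s')=\sum_{a'}\pi(a'|s')Q(s',a')$ frozen, the $q$-recursion is componentwise Robbins--Monro. Its mean field is
\[
h_{s,a}(q;Q)=\alpha-F_{s,a}^{Q}(q(s,a)),
\]
where $F^Q_{s,a}$ is the CDF of $\tilde r(s,a)+\gamma V(\tilde s')$. The noise $\mathbb{I}\{\cdot\}-F$ is bounded, hence square integrable. The ODE $\dot q(s,a)=\alpha-F^Q_{s,a}(q(s,a))$ decouples across $(s,a)$. Each scalar ODE is monotone, and under the continuity and unique-quantile assumption it has the unique globally asymptotically stable equilibrium $\lambda(Q)(s,a)=\mathrm{VaR}_\alpha[\tilde r+\gamma V(\tilde s')]$. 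The Lyapunov function $(q-\lambda)^2$ suffices.

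We also need $\lambda$ to be Lipschitz in $Q$, or at least continuous, as Borkar's framework allows. Because $F^Q_{s,a}$ is a finite mixture over $s'$ of reward CDFs shifted by $\gamma V(s')$, and rewards have bounded densities (Assumption~\ref{ass:1}), the quantile moves continuously with $V$. Lipschitz continuity would need a density lower bound at the quantile, so we settle for continuity. This is the step I expect to be the most delicate: a careful argument may require a mild regularity remark, or a version of the theorem that only needs continuity of $\lambda$ together with uniqueness of the equilibrium.

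\textbf{Slow time scale.} By the two-time-scale theorem, $q_t-\lambda(Q_t)\to 0$ almost surely. The $Q$-recursion then tracks the ODE
\[
\dot Q = G(Q) - Q,
\]
where
\[
G(Q)(s,a)=\lambda(Q)(s,a)-\tfrac{1}{\alpha}\mathbb{E}\big[(\lambda(Q)(s,a)-\tilde r-\gamma V(\tilde s'))_+\big].
\]
By Eq.~\ref{eq:def-cvar}, and since the maximizer in Eq.~\ref{eq:def-cvar} is the VaR, this is exactly $G(Q)=\mathcal{T}^\pi Q$ for $\rho=\mathrm{CVaR}_\alpha$. The noise is a bounded-variance martingale difference given bounded iterates, plus a vanishing bias from $q_t-\lambda(Q_t)$, which Borkar's theorem absorbs.

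Using coherence (A2, A3), $\mathcal{T}^\pi$ is a $\gamma$-contraction in sup-norm. Indeed $V$ is a $1$-Lipschitz average of $Q$, and monotonicity plus translation invariance give
\[
|\rho(\tilde x)-\rho(\tilde y)|\le\|\tilde x-\tilde y\|_\infty .
\]
Its unique fixed point is $Q^\pi$. For the ODE $\dot Q=\mathcal{T}^\pi Q-Q$ with a sup-norm contraction, the standard result (Borkar, Ch.~on contractions) gives global asymptotic stability of $Q^\pi$, with Lyapunov function $\|Q-Q^\pi\|_\infty$. Combining the two time scales yields $Q_t\to Q^\pi$ and $q_t\to\lambda(Q^\pi)$ almost surely. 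Asynchronous updates, one $(s,a)$ per step, are handled by the asynchronous version of the theorem, provided every pair is updated infinitely often. This is implicit in the per-pair divergence of the step sums.
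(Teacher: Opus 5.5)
Your overall architecture is the paper's: Borkar's two-time-scale theorem, the fast ODE $\dot q=\alpha-F(q)$ with equilibrium at the VaR, and the slow ODE $\dot Q=\mathcal{T}^\pi Q-Q$, which is stable by the sup-norm contraction and Borkar's contraction result. Your Lyapunov function $(q-\lambda)^2$ is a tidier route to global asymptotic stability than the paper's attractivity-plus-integral-Lyapunov argument.

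The gap is the step you flag and then leave open. The theorem you invoke requires $\lambda$ to be Lipschitz, and your claim that this needs a density lower bound at the quantile is wrong. $\mathrm{VaR}_\alpha$ is monotone and translation invariant, so $|\tilde z_1-\tilde z_2|\le\epsilon$ almost surely implies $|\mathrm{VaR}_\alpha(\tilde z_1)-\mathrm{VaR}_\alpha(\tilde z_2)|\le\epsilon$. Take $\tilde z_i=\tilde r(s,a)+\gamma V_i(\tilde s')$ and use $|V_1(s')-V_2(s')|\le\|Q_1-Q_2\|_\infty$. This gives $|\lambda_{s,a}(Q_1)-\lambda_{s,a}(Q_2)|\le\gamma\|Q_1-Q_2\|_\infty$, which is exactly how the paper verifies the hypothesis. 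It is the same monotonicity-plus-translation-invariance argument you already use for CVaR when showing $\mathcal{T}^\pi$ is a contraction. A density lower bound matters only for controlling a quantile under perturbations of the CDF in a weak metric, not under pathwise bounded shifts of the random variable. As written, your appeal to an unspecified variant of the theorem needing only continuity of $\lambda$ does not close the argument.

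You also skip a hypothesis of Borkar's theorem: the mean fields must be Lipschitz in $(q,Q)$ jointly. For the slow field $g$ this is immediate, since $(\cdot)_+$ is $1$-Lipschitz. For the fast field $h_{s,a}(q,Q)=\alpha-\mathbb{P}(\tilde z<q(s,a))$, this is where the bounded-density part of Assumption~\ref{ass:1} is actually needed. A density bound $M$ gives $|F_{\tilde z}(q_1)-F_{\tilde z}(q_2)|\le M|q_1-q_2|$. The dependence on $Q$ is handled by the sandwich $F_{\tilde z_2}(q-\epsilon)\le F_{\tilde z_1}(q)\le F_{\tilde z_2}(q+\epsilon)$ with $\epsilon=\gamma\|Q_1-Q_2\|_\infty$. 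You spent the bounded-density assumption on continuity of the quantile, where it is not needed, rather than here, where it is.
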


\subsection{Actor-Critic Algorithm}
It is now convenient to construct actor-critic algorithm with stochastic policies for dynamic expectile and CVaR based on Sec.~\ref{sec:learn-policy} and Sec.~\ref{sec:learn-value}. For policy updates, a common way to estimate the gradient in Eq.~\ref{eq:surrogate-pg} is via Monte Carlo sampling, i.e., trajectories $\{s_t,a_t,r_t\}$ are generated by executing $\pi(\cdot|s)$ in the environment, and the collected samples are subsequently used to update both policy and value function. These trajectories are discarded after each policy update (if no importance sampling is applied). This is called on-policy policy gradient (PG) as in e.g., REINFORCE with baseline~\citep{sutton1998reinforcement} and PPO~\citep{schulman2017proximal}. In essence, on-policy PG estimates the expectation over actions in Eq.~\ref{eq:surrogate-pg} (since $\nabla_\theta \sum_a\pi_\theta(a|s)Q^\pi(s,a) =\mathbb{E}_{a\sim\pi_\theta(\cdot|s)}[Q^\pi(s,a)\nabla_\theta\log\pi_\theta(a|s)]$) by sampling a single action from $\pi(\cdot|s)$. In contrast, Expected Policy Gradient (EPG)~\citep{ciosek2020expected} exploits the structure of discrete action spaces to compute $\nabla_\theta \sum_a \pi_\theta(a|s)Q^\pi(s,a)$ exactly by enumerating all $a\in\mathcal{A}$ at a state $s$. By explicitly integrating over the entire action distribution, the resulting gradient no longer depends on the particular action executed in the environment. As a result, EPG enables policy updates using states drawn from arbitrary distributions, such as those in an experience replay buffer (i.e., off-policy), effectively decoupling policy learning from the behavior distribution. For value function learning, the backup value $r+\gamma V(s')$ in Eq.~\ref{eq:expectile-qlearning} and Eq.~\ref{eq:cvar-qlearning} can be computed, similar to EPG, by integrating over all actions, i.e. $r + \gamma \sum_{a'}\pi(a'|s')Q(s',a')$. This is also consistent with the idea of Expected SARSA~\citep{sutton1998reinforcement}. 

\begin{wrapfigure}{r}{0.475\textwidth}
\vspace{-0.2in}
\begin{algorithm}[H] 
    \caption{Actor-Critic for dynamic Expectile (and CVaR)}
    \label{algo:actor-critic} 
    
    \textbf{Input:} Risk level $\alpha$, training steps $M$, mini batch size $N$, $\pi$ learning rate $\eta_\theta$, $Q$ learning rate $\zeta_\phi$, {\color{brown}($q$ learning rate $\zeta_\psi$)}, soft update parameter $\tau$\;
    
    \textbf{Initialize:} Policy $\pi_\theta$, target policy $\pi_{\bar{\theta}}$, value function $Q_\phi$, target value function $Q_{\bar\phi}$, {\color{brown}(quantile value function $q_\psi$)}, buffer $B$\;

    $s=$ env.reset()\;
    \For{$m = 1, \dots, M$}{
        $a\sim\pi_\theta(\cdot|s)$; $r,s'\leftarrow$env.step(a) \;
         $B.\mathrm{add}(s,a,r,s')$\;
        \If{$B$.size()$\geq N$}{
        Sample $\{(s,a,r,s')_i\}_{i=1}^N\sim B$\;
        $V(s')=\mathbb{E}_{a'\sim\pi_{\bar\theta}(\cdot|s')}[Q_{\bar\phi}(s',a')]$ \;
        Update $Q_\phi$ {\color{brown}(and $q_\psi$)} via Eq.~\ref{eq:expectile-qlearning} or Eq.~\ref{eq:cvar-qlearning}\;
        Update $\pi_\theta$ via Eq.~\ref{eq:surrogate-pg} with $Q_\phi$ using EPG\;
        $\bar\theta\leftarrow \tau \theta + (1-\tau)\bar\theta$\;
        $\bar\phi\leftarrow \tau \phi + (1-\tau)\bar\phi$\;
        }
        $s\leftarrow s'$\;
        \If{env.done()}{
            $s=$ env.reset()\;
        }
    }
\end{algorithm}
\vspace{-0.2in}
\end{wrapfigure}
We adopt the ideas of EPG and Expected SARSA to form our algorithm. Note that EPG can be applied in both on-policy and off-policy settings. Here we focus on off-policy actor-critic for improved sample efficiency. It is well known that target networks are essential in off-policy time-difference learning to mitigate non-stationarity and prevent divergence of the value function~\citep{mnih2015human}. In our case, the backup value $r + \gamma \sum_{a'}\pi(a'|s')Q(s',a')$ depends on both $\pi$ and $Q$, and thus requires target networks for both of them. Specifically, for the policy $\pi_\theta(\cdot|s)$ and value function $Q_\phi(s,a)$, we introduce corresponding target networks $\pi_{\bar{\theta}}(\cdot|s)$ and $Q_{\bar{\phi}}(s,a)$, where $\bar{\theta}$ and $\bar{\phi}$ are updated via slow tracking of $\theta$ and $\phi$. In addition, when $\rho$ is CVaR, we introduce an auxiliary function $q_{\psi}(s,a)$ to estimate quantile. Given a transition $(s,a,r,s')$ sampled from the replay buffer, the policy $\pi_\theta(\cdot|s)$ is updated using EPG as in Eq.~\ref{eq:surrogate-pg}, while $Q_\phi$ (and $q_\psi$ if applicable) is updated according to Eq.~\ref{eq:expectile-qlearning} or Eq.~\ref{eq:cvar-qlearning}, where $V(s')$ is calculated using the target networks as $\sum_a \pi_{\bar{\theta}}(a|s)Q_{\bar{\phi}}(s,a)$. For continuous action spaces, the expectation in both policy gradient and value learning can be approximated in practice by sampling a sufficient number of actions. The full algorithm is in Algo.~\ref{algo:actor-critic}, with additional operations for dynamic CVaR marked in parenthesis.

\subsection{Related Work}
\label{sec:related-work}
The works most closely related to ours, i.e., using stochastic policies for dynamic expectile and CVaR, are \cite{jiang2024learning} and \cite{coache2023conditionally}. 

\cite{jiang2024learning} proposed a mode-free, on-policy PG approach for dynamic expectile based on elicitability. Although the authors do not explicitly state that their algorithm optimizes dynamic expectile, we show that it indeed does, see Appendix~\ref{app:aaai2024}. Analogous to Eq.~\ref{eq:expectile-qlearning}, their Bellman operator for dynamic expectile is given by $\mathcal{T}^\pi V(s)=V(s) + 2\zeta\mathbb{E}_{a\sim \pi(\cdot|s)}\mathbb{E}_{r,s'}\big[(1-\alpha)\big(\delta(s,a,r,s')\big)_{-}+\alpha\big(\delta(s,a,r,s')\big)_+ \big]$, where $\delta(s,a,r,s') = r(s,a) + \gamma V(s') - V(s)$, and $\zeta$ is the step-size. Defining the advantage function as $A^\pi(s,a):=2\zeta \mathbb{E}_{r,s'}\big[(1-\alpha)\big(\delta(s,a,r,s')\big)_{-}+\alpha\big(\delta(s,a,r,s')\big)_+ \big]$, the policy is updated using the standard PG form $ \mathbb{E}_s[A^\pi(s,a)\nabla_\theta\log\pi_\theta(a|s)]$. Their algorithm is instantiated within the PPO framework~\citep{schulman2017proximal}.

\cite{coache2023conditionally} developed an on-policy actor–critic method for dynamic CVaR. As discussed in Sec.~\ref{sec:learn-value}, they learn the value function in a model-free manner by minimizing a jointly elicitable loss for the (VaR, CVaR) pair. The policy gradient, however, still requires solving for the optimal risk envelope via a Lagrangian formulation, i.e., $\max_\pi V(s)=\max_\pi\mathrm{CVaR}_\alpha(\tilde{r}+\gamma V(\tilde{s}'))=\max_\pi \min_{\xi}\max_{\lambda}\mathbb{E}[(\tilde{r}+\gamma V(\tilde{s}'))\xi(\tilde{r},\tilde{s}')]+\lambda (1-\mathbb{E}[\xi(\tilde{r},\tilde{s}')])$. For CVaR, the optimal dual variable corresponds to the VaR (cf. Eq.~\ref{eq:def-cvar}), thus they replaced $\lambda$ with the learned VaR function, denoted as $q(s)$. This yields an estimate of the risk envelope $\xi(r,s'|s,a)=\frac{1}{\alpha}\mathbb{I}\{r(s,a)+\gamma V(s') < q(s)\}$, and the resulting PG takes the form $\mathbb{E}_s\big[\frac{1}{\alpha}\big(r+\gamma V(s')-q(s)\big)_{-}\nabla_\theta \log\pi_\theta(a|s)\big]$ (see Eq.~6.16b and Eq.~L2 in \cite{coache2023conditionally}). Note that our formulation differs slightly, as we consider left-tail CVaR, whereas \cite{coache2023conditionally} focus on the right tail.

\section{Experiments}
\label{sec:exp}
Following \cite{luo2024simple} and \cite{mead2025return}, we modify several domains such that the risk-averse behavior is clear to identify to evaluate the algorithms. We start from two tabular cases: a maze domain adapted from \cite{greenberg2022efficient} and the Cliffwalk environment from \cite{sutton1998reinforcement}, then followed with LunarLander from the Box2D environments of OpenAI Gym~\citep{brockman2016openai}, and InvertedPendulum from the Mujoco~\citep{todorov2012mujoco} environments of OpenAI Gym. In the tabular domains, all algorithms are implemented in a tabular manner, whereas in the remaining environments, function approximation with neural networks is used. Since we are interested in learning risk-averse policies, we choose $\alpha<0.5$ for expectile and $\alpha<1$ for CVaR. Additional implementation and parameter details are provided in Appendix~\ref{app:exp-details}.

\begin{figure}[t]
    \begin{center}
        \includegraphics[width=0.84\textwidth]{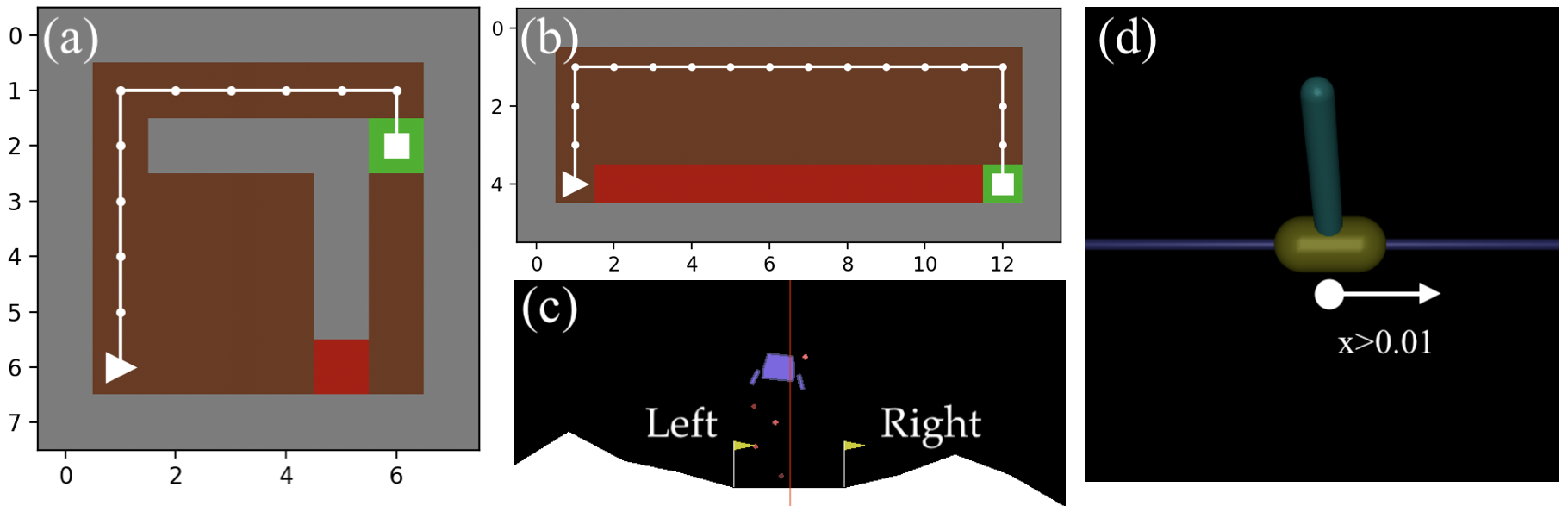}
    \end{center}
    \caption{(a) Maze. Visiting red state receives a random reward, with mean $-1$. (b) Cliffwalk. Some states in row index 2 and 3 have chances to slide towards the cliff, see text. (c) LunarLander. Visiting all states in the right part receives a random reward with mean $0$ per step. (d) Inverted Pendulum. Staying in the $x>0.01$ region receives a random reward with mean $0$ per step.}
    \label{fig:envs}
    \vspace{-0.14in}
\end{figure}

\textbf{Baselines.} We compare our method with methods in  \cite{jiang2024learning} (denoted as Exp-PPO) and \cite{coache2023conditionally}. However, the method in \cite{coache2023conditionally} generally does not work well, we report its learning curves separately in Fig.~\ref{fig:coache-res} in appendix. We also include Q-learning~\citep{sutton1998reinforcement} (denoted as QL) and EPG~\citep{ciosek2020expected} as risk-neutral baselines. We denote the actor-critic for dynamic expectile and CVaR proposed in this paper as Exp-AC and CVaR-AC.

\subsection{Maze and Cliffwalk}
The two domains are shown in Fig.~\ref{fig:envs} (a) and (b). For both of them, the gray color in the figures marks the walls. The agent starts from the white triangle state and aims to reach the green goal state. The action space is discrete with four actions \{Up, Down, Left, Right\}. For Maze, the per-step reward is $-1$ before reaching the goal except the red state. Visiting the red state receives a random reward $-1+\mathcal{N}(0,1)\times 30$, then clipped to the range $[-20,20]$. The reward for visiting the goal is $10$. Thus, the longer path (marked in white color) is risk-averse while the shortest path going through the red state is risk-neutral. For Cliffwalk, the reward of every safe step is $-1$. The red states are cliff and visiting them results in a $-100$ reward and ends the game. For all states right above the cliff (i.e, row index is $3$ and column index in $[2,11]$), every left or right movement has $20\%$ chance to slide to the cliff. For states whose row index is $2$ and column index in $[2,7]$, every left or right movement has $10\%$ chance to slide to row $3$. This design ensures that the path going through the middle lane (row index is 2) is risk-neutral, while the longer path (marked in white color) is risk-averse. We set $\alpha=0.05$ for Exp-AC and Exp-PPO, and $\alpha=0.1$ for CVaR-AC.

Agents are evaluated by sampling $10$ trajectories during training and we report the risk-averse path rate of sampled trajectories, as shown in Fig.~\ref{fig:sub1} and \ref{fig:sub2}. In both domains, the three risk-averse algorithms can successfully identify the risk-averse path, with Exp-AC and CVaR-AC learning faster than Exp-PPO, since off-policy learning is more sample efficient than on-policy learning.

\subsection{LunarLander}
This domain is illustrated in Fig.~\ref{fig:envs} (c). The objective is to control the lander’s engine to achieve a safe landing. We consider the discrete-action version of this environment and refer readers to the official documentation for a detailed description. As shown in the figure, the environment is partitioned into left and right regions by the center line of the landing pad. In \cite{luo2024simple} and \cite{mead2025return}, a random reward is introduced only when the agent lands on the right part of the ground. In contrast, since we consider dynamic risk, we inject noise at each time step rather than only at termination. Specifically, we modify the environment such that an additional noisy reward $\mathcal{N}(0,1)\times 3$, clipped to $[-6,6]$, is added whenever the agent remains in the right region, including both during flight and upon landing. Therefore, a risk-averse agent is expected to avoid entering the right region while still achieving a safe landing. The maximum episode length is 500. $\alpha=0.3\in\{0.2,0.3,0.4\}$ for Exp-AC. $\alpha=0.9\in\{0.5,0.7,0.9\}$ for CVaR-AC. $\alpha=0.4\in\{0.2,0.3, 0.4\}$ for Exp-PPO.

The return and the rate of staying in non-noisy (left) region in an episode are shown in Fig.~\ref{fig:sub3} and \ref{fig:sub4}. Since dynamic risk is difficult to measure directly in practice, we additionally report the empirical 0.2-CVaR of the return in Fig.~\ref{fig:sub5} as a reference. This is estimated by sampling 15 episodes during evaluation and averaging the three worst returns. Exp-AC achieves both a higher risk-averse rate and a higher 0.2-CVaR compared to the risk-neutral EPG baseline. It also achieves a high left landing rate as shown in Fig.~\ref{fig:ll-left} in appendix. CVaR-AC is more challenging to tune in this domain and exhibits slightly lower performance than EPG, although it still demonstrates a clear risk-averse tendency (cf. Fig.~\ref{fig:sub4}). In contrast, Exp-PPO fails to learn a reasonable policy in this setting.

\subsection{Inverted Pendulum}
This domain is illustrated in Fig.~\ref{fig:envs} (d). The objective is to keep the inverted pendulum upright (within a certain angle limit) as long as possible. The original per-step reward is $1$. We add a noise $\mathcal{N}(0,1)\times 10$, clipped to $[-20,20]$, to the per-step reward if $X$-position$>0.01$. Thus, a risk-averse agent is expected to stay away from the noisy reward region while keep the balance of the pendulum. The maximum episode length is 300. $\alpha=0.4\in\{0.2,0.3,0.4\}$ for Exp-AC. $\alpha=0.9\in\{0.5,0.7,0.9\}$ for CVaR-AC. $\alpha=0.4\in\{0.2,0.3, 0.4\}$ for Exp-PPO.

The return, $X<0.01$ rate in an episode, and 0.2-CVaR of return are presented in Fig.~\ref{fig:sub6}, \ref{fig:sub7}, and \ref{fig:sub8}. Both Exp-AC and CVaR-AC learn risk-averse policies by holding the pendulum more in the non-noisy region compared with EPG. Exp-PPO fails to learn a reasonable policy within the same environment step. Curves with more steps for Exp-PPO are shown in Fig.~\ref{fig:exp-ppo-res} in appendix, which exhibits unstable policy learning.

\textbf{Discussion.} The method in \cite{coache2023conditionally} generally performs poorly across the considered domains. One possible reason is that its policy gradient relies on the optimal dual variable and transition perturbation, which are instead approximated from learned value functions, potentially introducing additional error. For Exp-PPO~\citep{jiang2024learning}, the advantage function is nonlinear (cf. Sec.~\ref{sec:related-work}), making it difficult to construct a generalized advantage estimation (GAE)~\citep{schulman2015high} as in the risk-neutral PPO. Although the authors propose computing the advantage via multi-step transitions by repeatedly applying their Bellman operator, we observe that this approach leads to degraded performance in our experiments. CVaR-AC requires learning one policy and two value functions, resulting in a three-timescale algorithm. This increases the difficulty of hyperparameter tuning, as observed in LunarLander. We also observe that in non-tabular domains, both Exp-AC and CVaR-AC suffer performance degradation when $\alpha$ is set too small. A possible explanation is the presence of epistemic uncertainty during learning. Overly strong risk-aversion may hinder exploration and slow down policy improvement.

\begin{figure}[H]
    \centering

    \begin{subfigure}[t]{0.3\textwidth}
        \centering
        \includegraphics[width=\linewidth]{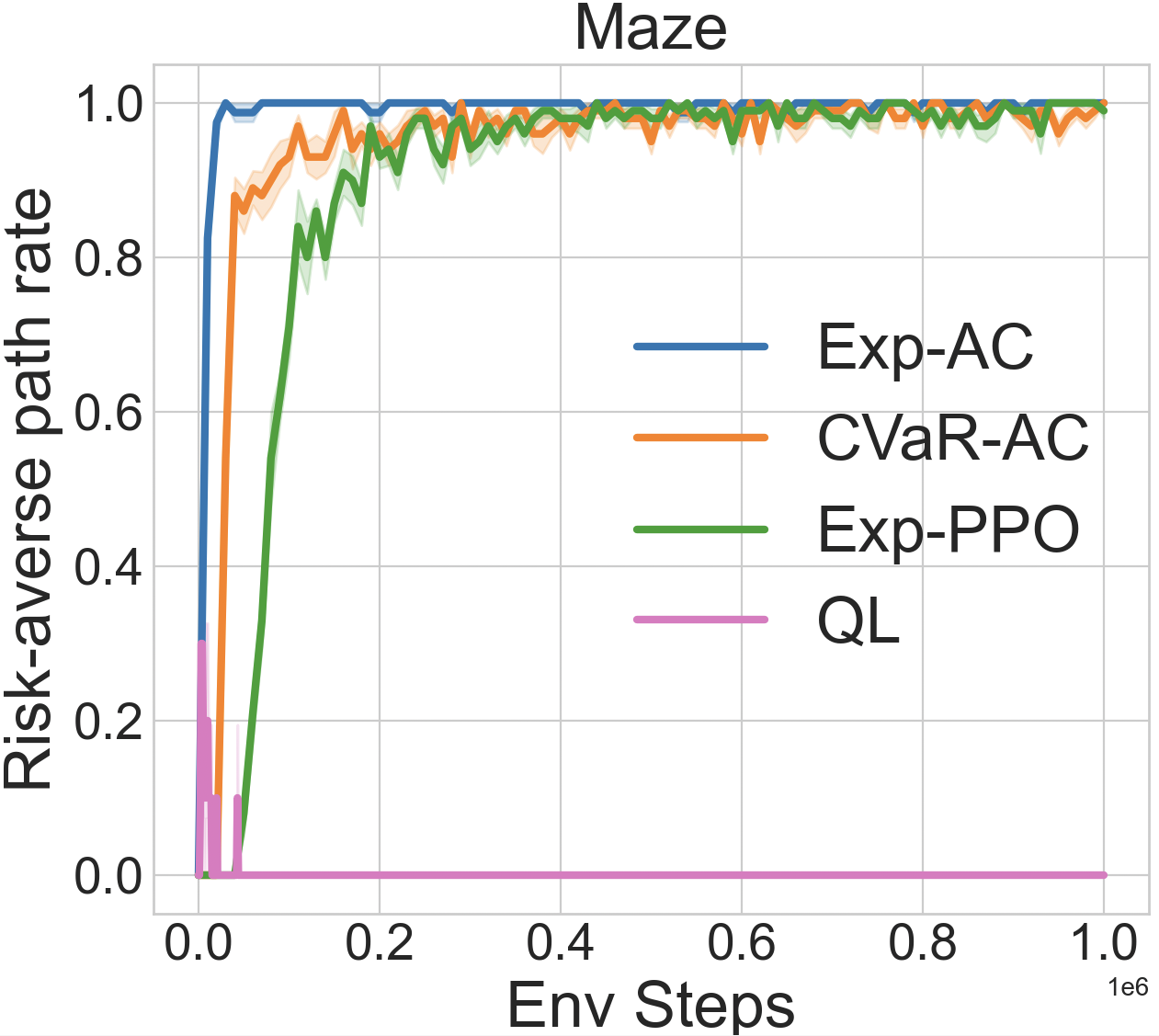}
        \caption{Risk-averse rate in maze}
        \label{fig:sub1}
    \end{subfigure}
    \hspace{0.05\textwidth}
    \begin{subfigure}[t]{0.3\textwidth}
        \centering
        \includegraphics[width=\linewidth]{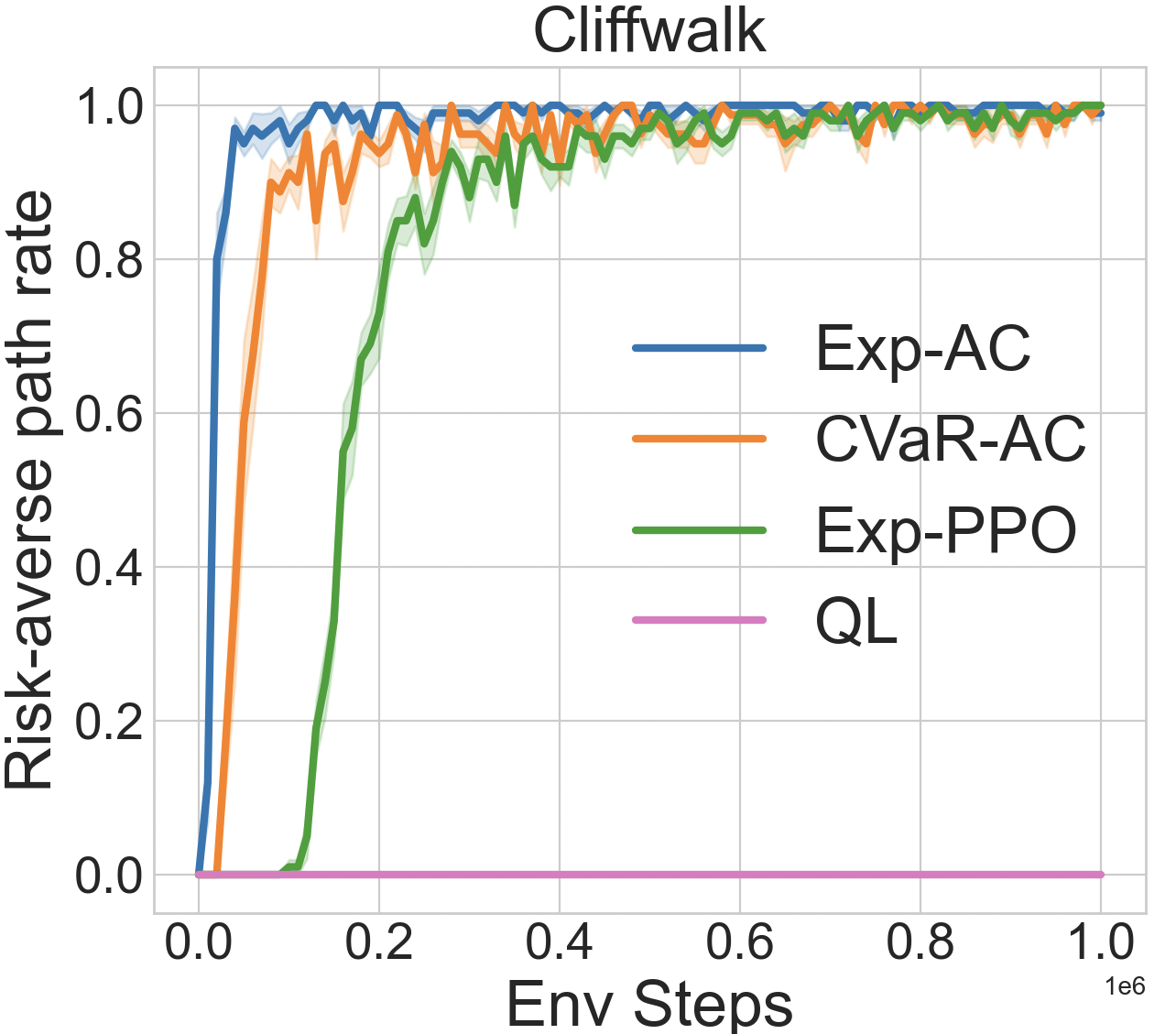}
        \caption{Risk-averse rate in cliffwalk}
        \label{fig:sub2}
    \end{subfigure}


    \begin{subfigure}[t]{0.31\textwidth}
        \centering
        \includegraphics[width=\linewidth]{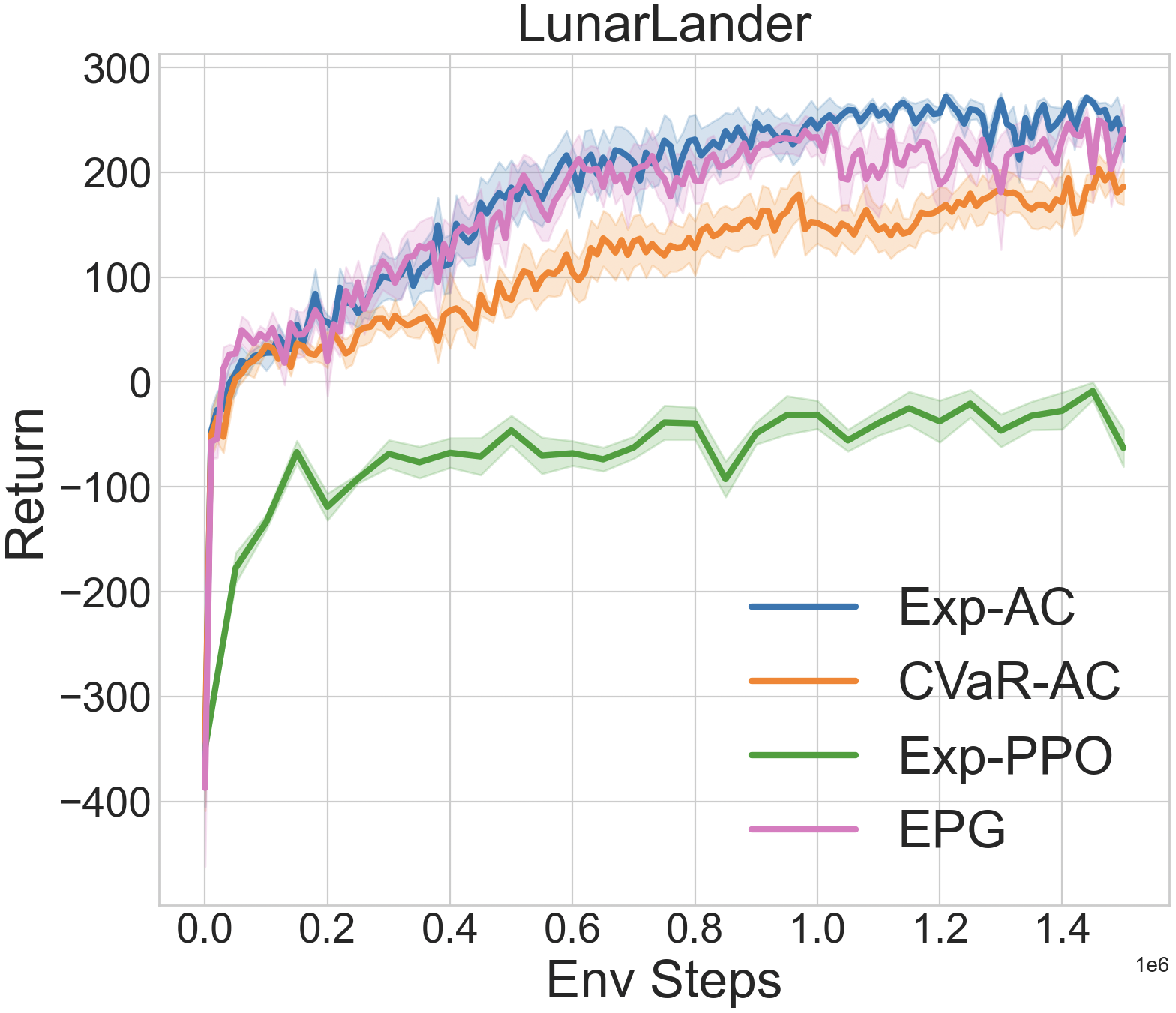}
        \caption{Expected return in Lunarlander}
        \label{fig:sub3}
    \end{subfigure}
    \hspace{0.02\textwidth}
    \begin{subfigure}[t]{0.31\textwidth}
        \centering
        \includegraphics[width=\linewidth]{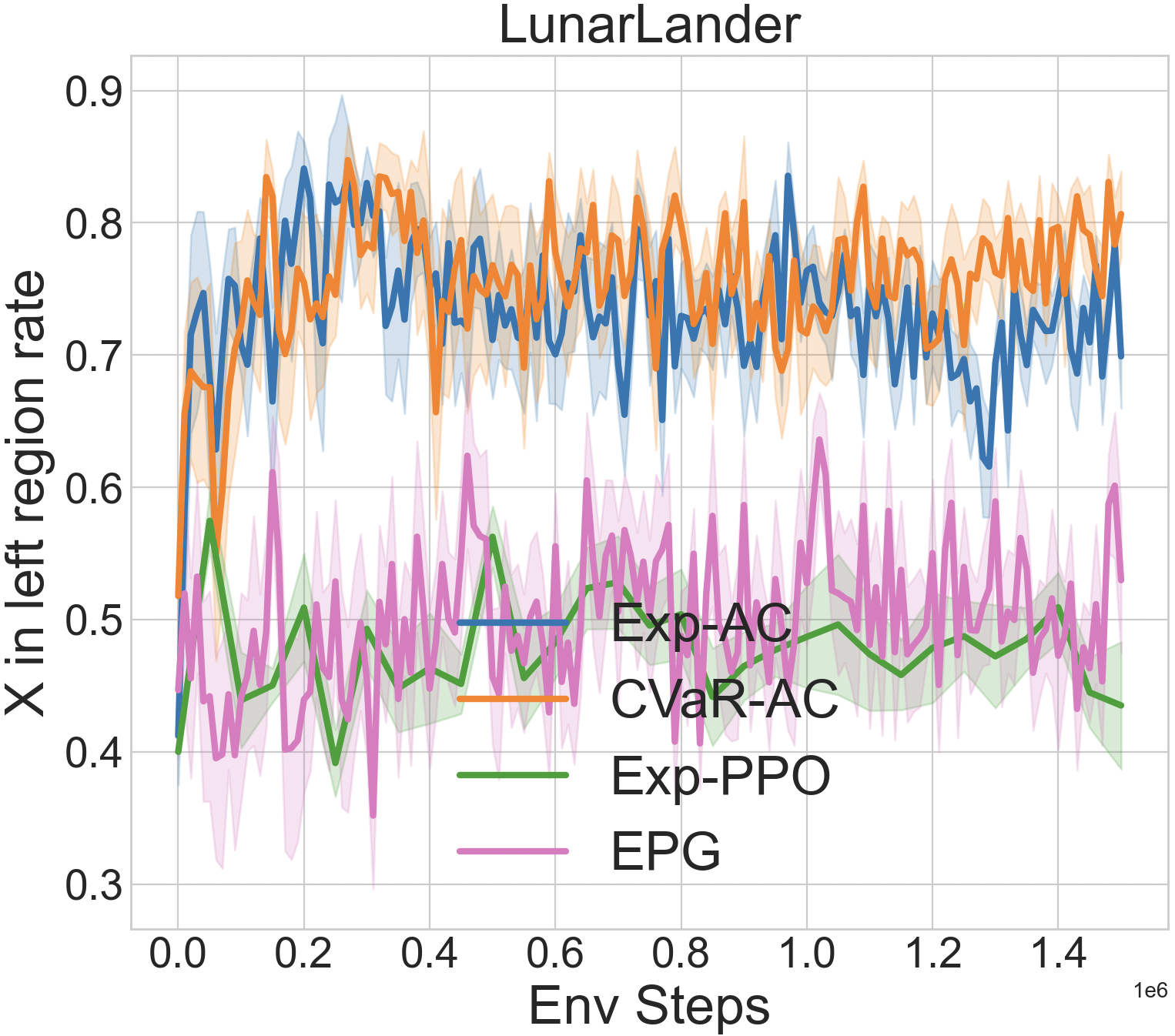}
        \caption{Risk-averse rate in Lunarlander}
        \label{fig:sub4}
    \end{subfigure}
    \hspace{0.02\textwidth}
    \begin{subfigure}[t]{0.31\textwidth}
        \centering
        \includegraphics[width=\linewidth]{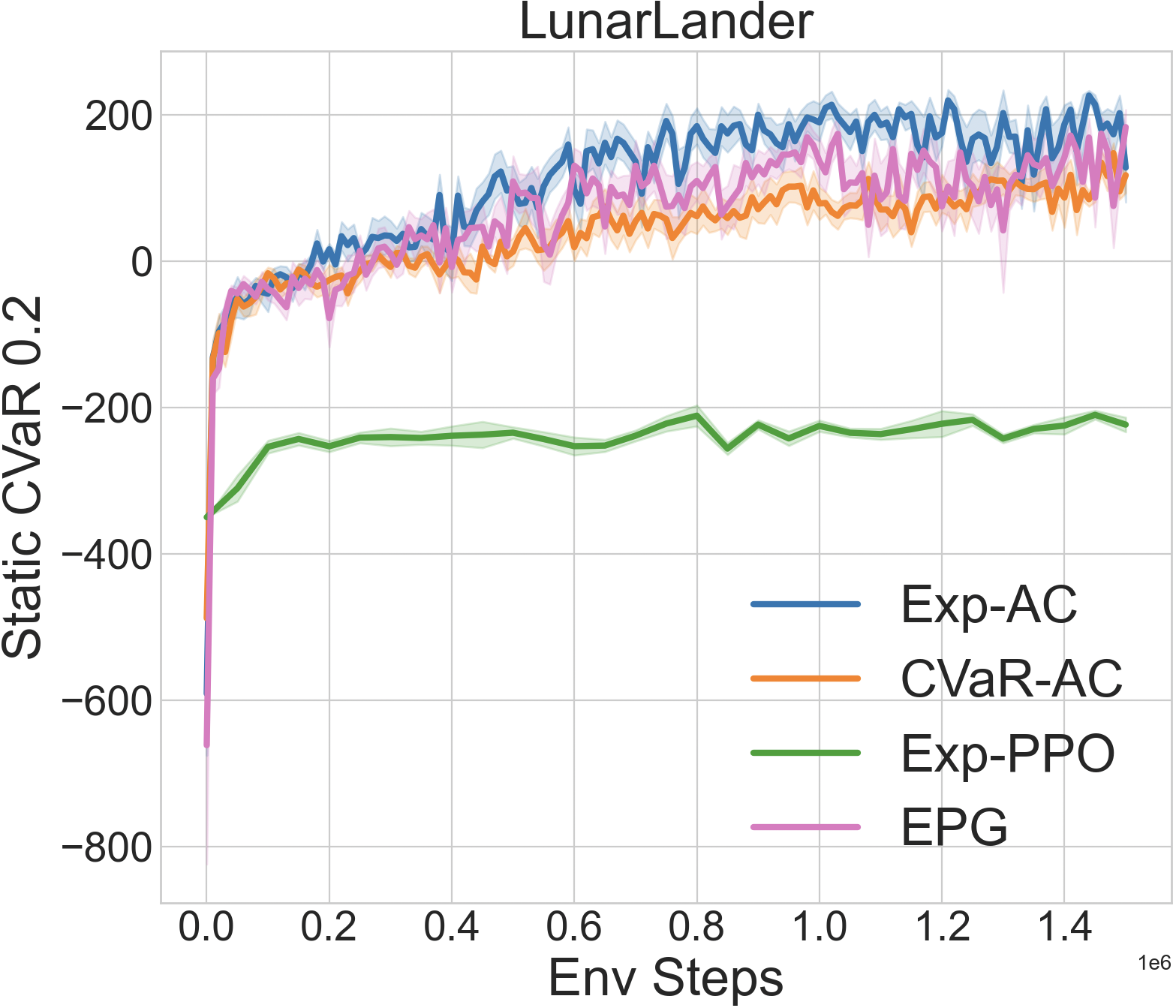}
        \caption{CVaR 0.2 of return in Lunarlander}
        \label{fig:sub5}
    \end{subfigure}

    \begin{subfigure}[t]{0.31\textwidth}
        \centering
        \includegraphics[width=\linewidth]{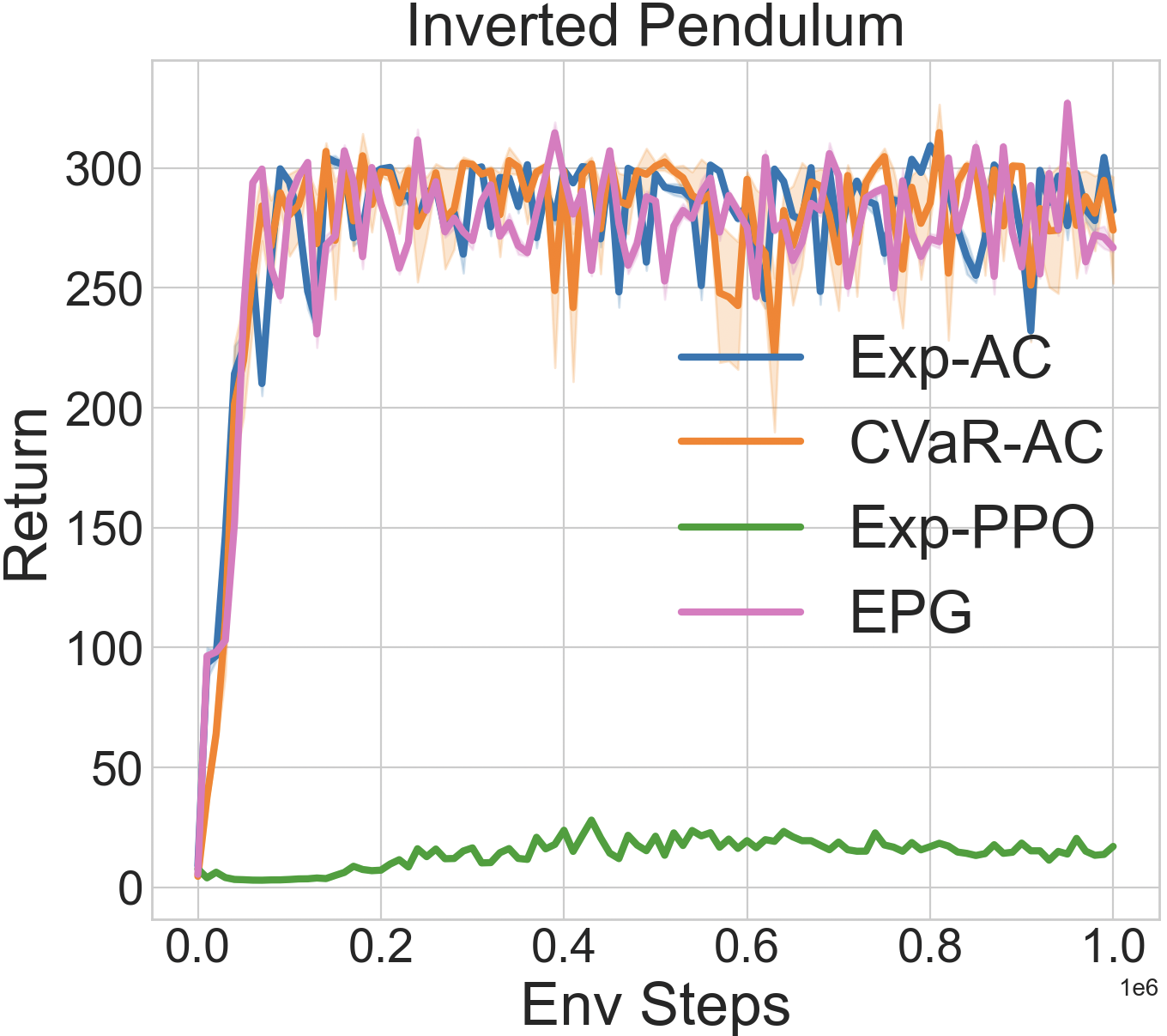}
        \caption{Expected return in Pendulum}
        \label{fig:sub6}
    \end{subfigure}
    \hspace{0.02\textwidth}
    \begin{subfigure}[t]{0.31\textwidth}
        \centering
        \includegraphics[width=\linewidth]{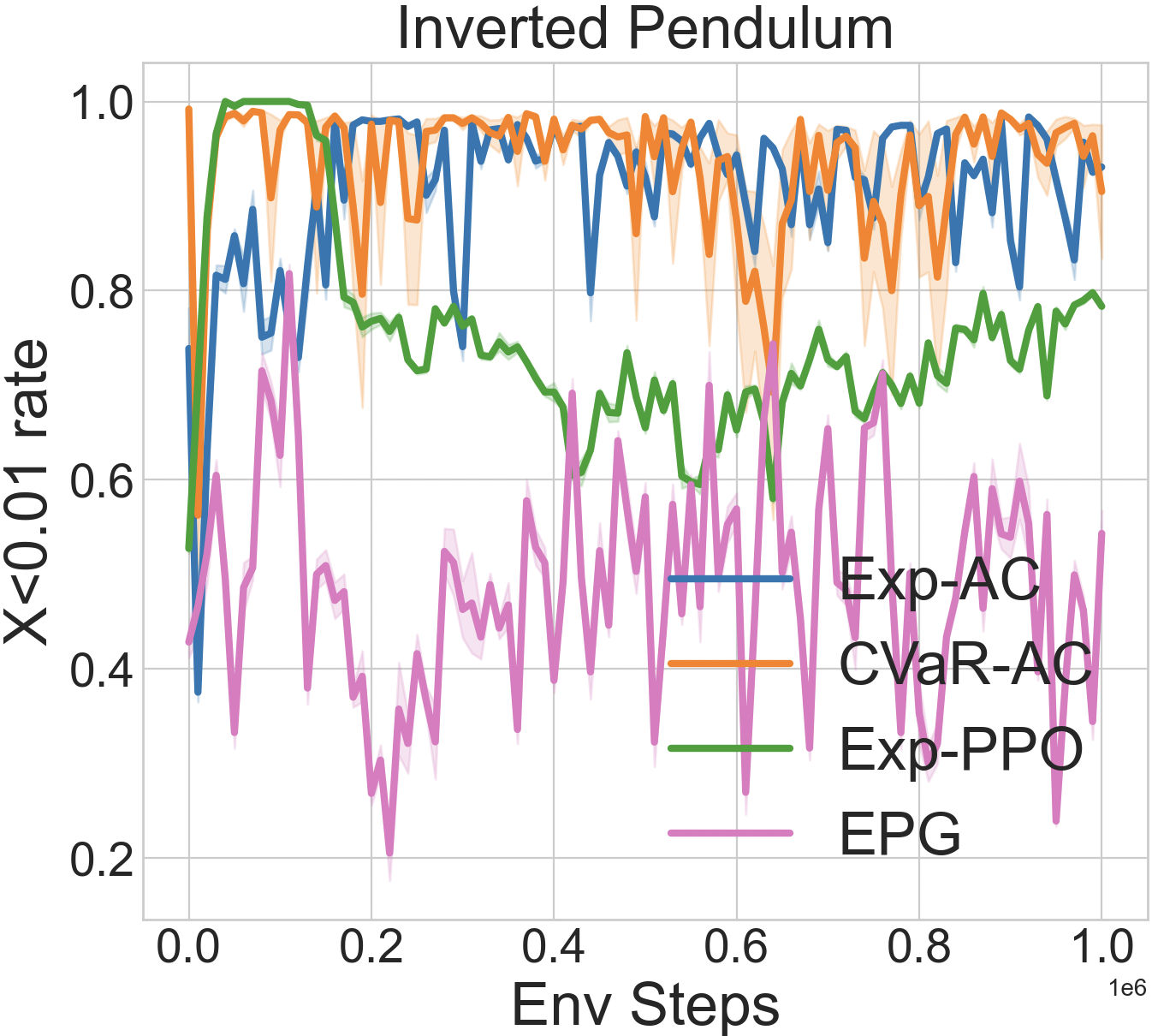}
        \caption{Risk-averse rate in Pendulum}
        \label{fig:sub7}
    \end{subfigure}
    \hspace{0.02\textwidth}
    \begin{subfigure}[t]{0.31\textwidth}
        \centering
        \includegraphics[width=\linewidth]{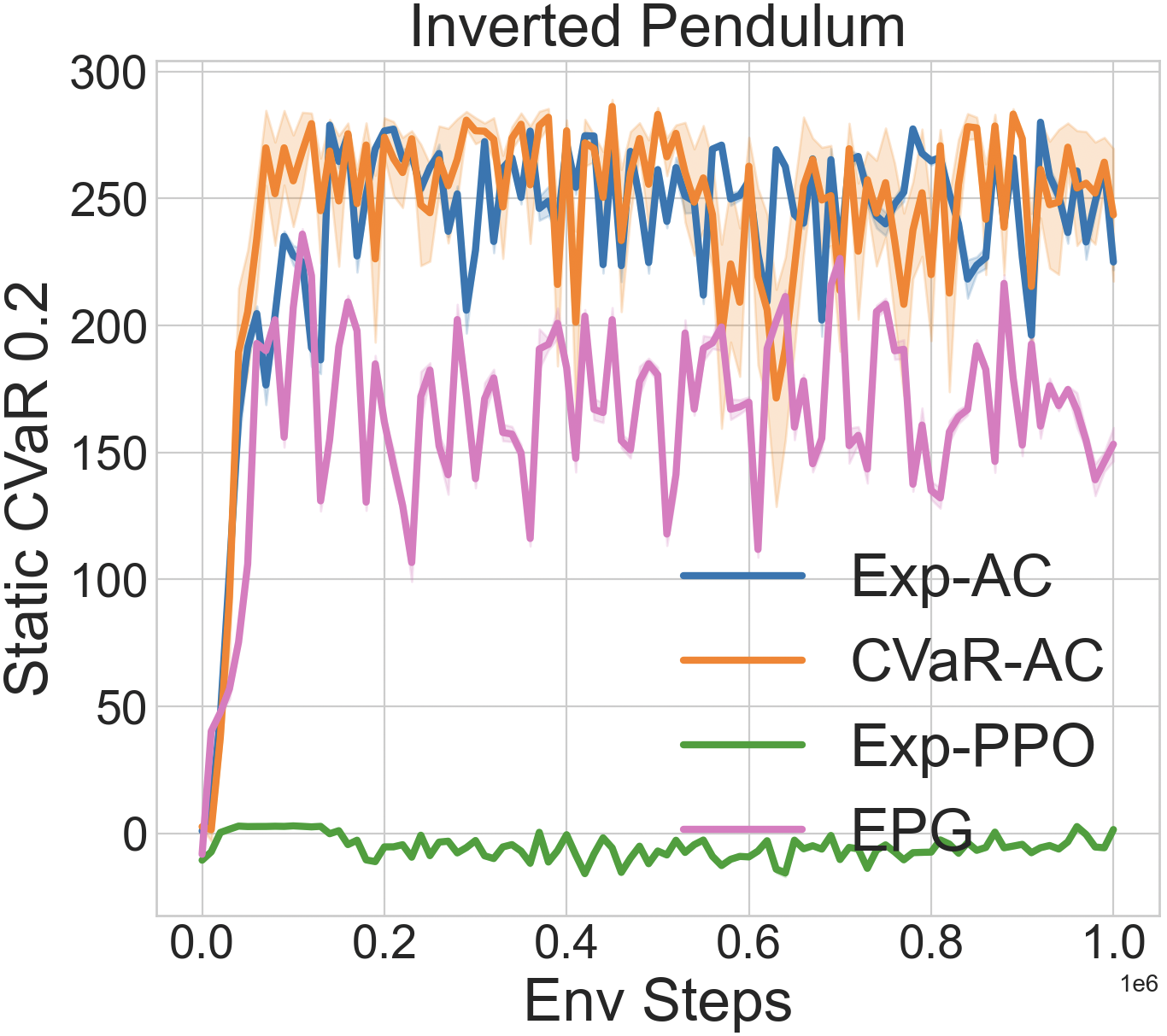}
        \caption{CVaR 0.2 of return in Pendulum}
        \label{fig:sub8}
    \end{subfigure}

    \caption{Risk-averse path rate in Maze and Cliffwalk. Expected return, risk-averse rate, and CVaR 0.2 of return in LunarLander and Inverted Pendulum. Curves are averaged over 10 seeds with shaded regions indicating standard errors.}
    \label{fig:ll-ivp-res}
\end{figure}
\section{Conclusion and Future Work}
\label{sec:conclusion}
This paper proposes a model-free off-policy actor–critic algorithm with stochastic policies for dynamic expectile and CVaR under softmax policy parameterization. The main advantages are that it avoids transition perturbation required by most stochastic-policy methods and improves sample efficiency via off-policy learning. We evaluate the method on both discrete and continuous action domains, under tabular settings and function approximation. Empirically, we show that our method succeeds in settings where existing approaches fail to learn a risk-averse or a reasonable policy.

\textbf{Limitations and future work.} The value learning methods in Sec.~\ref{sec:learn-value} are specific to expectile and CVaR and are not directly applicable to other risk measures. Moreover, optimizing dynamic CVaR leads to a three-timescale update scheme, which increases the difficulty of hyperparameter tuning. Observing the two limitations, extending the framework to other risk measures and improving the stability of dynamic CVaR training remain valuable for future work.


\section*{Acknowledgements}
The computing resources used in this work were provided by the Digital Research Alliance of Canada (alliancecan.ca). Yudong Luo is partially supported by a FRQNT postdoctoral scholarship and IVADO. Erick Delage is partially supported by the Canadian Natural Sciences and Engineering Research Council [Grant RGPIN-2022-05261].

\appendix
\section{Derivations and Proofs}
\subsection{SPG for Coherent Dynamic Risk}
\label{app:spg}
The objective we consider for dynamic risk with $\pi(\cdot|s)$ is
\begin{equation*}
    \rho_\infty^\pi(M)=\sum_{a_0} \pi(a_0|s_0)\rho\Big(\tilde{r}(s_0,a_0)+\gamma \sum_{a_1}\pi(a_1|\tilde{s}_1)\rho\big(\tilde{r}(\tilde{s}_1,a_1)+ \gamma \sum_{a_2}\pi(a_2|\tilde{s}_2)\rho(\tilde{r}(\tilde{s_2},a_2)+...)\big)\Big)
\end{equation*}
Similar to deterministic policy case, the action-value function and state value function are defined as $Q^\pi(s,a)=\rho^\pi_\infty(M|s_0=s,a_0=a)$ and $V^\pi(s)=\rho_\infty^\pi(M|s_0=s)$. It holds that $V^\pi(s)=\sum_a \pi(a|s)Q^\pi(s,a)$. The gradient of policy parameter $\theta$ w.r.t. $V^\pi(s_0)$ is computed as follows.
\begin{equation*}
\begin{aligned}
    \nabla_\theta V^{\pi_\theta}(s_0)&=\nabla_\theta \sum_{a_0}\pi_\theta(a_0|s_0)Q^{\pi_\theta}(s_0,a_0)\\
    &=\sum_{a_0}\nabla_\theta\pi_\theta(a_0|s_0)Q^{\pi_\theta}(s_0,a_0)+\pi_\theta(a_0|s_0)\nabla_\theta Q^{\pi_\theta}(s_0,a_0)\\
    &\overset{(a)}{=}\mathbb{E}_{a_0\sim\pi_\theta(\cdot|s_0)}\Big[Q^{\pi_\theta}(s_0,a_0)\nabla_\theta\log\pi_\theta(a_0|s_0)+\nabla_\theta Q^{\pi_\theta}(s_0,a_0)\Big]\\
    &\overset{(b)}{=}\mathbb{E}_{a_0\sim\pi_\theta(\cdot|s_0)}\Big[Q^{\pi_\theta}(s_0,a_0)\nabla_\theta\log\pi_\theta(a_0|s_0)+\nabla_\theta \rho(\tilde{r}_0+\gamma V^{\pi_\theta}(\tilde{s}_1))\Big]\\
    &\overset{(c)}{=}\mathbb{E}_{a_0\sim\pi_\theta(\cdot|s_0)}\Big[Q^{\pi_\theta}(s_0,a_0)\nabla_\theta\log\pi_\theta(a_0|s_0)+\nabla_\theta \mathbb{E}_{r_0,s_1\sim P^{\xi^*}(\cdot,\cdot|s_0,a_0)}\big[r_0+\gamma V^{\pi_\theta}(s_1)\big]\Big]\\
    &=\mathbb{E}_{a_0\sim\pi_\theta(\cdot|s_0)}\Big[Q^{\pi_\theta}(s_0,a_0)\nabla_\theta\log\pi_\theta(a_0|s_0)+ \gamma \mathbb{E}_{r_0,s_1\sim P^{\xi^*}(\cdot,\cdot|s_0,a_0)}\nabla_\theta V^{\pi_\theta}(s_1)\Big],
\end{aligned}
\end{equation*}
where (a) uses $\nabla_\theta \pi_\theta(a|s)=\pi_\theta(a|s)\nabla_\theta\log\pi(a|s)$; (b) follows the Bellman equation of $Q^\pi(s,a)$; (c) uses the dual representation of $\rho(\cdot)$ with the optimal risk envelope $P^{\xi^*}$.

Calculating $\nabla_\theta V^{\pi_\theta}(s_1)$ follows the same step as calculating $\nabla_\theta V^{\pi_\theta}(s_0)$. Putting all gradients back yields Eq.~\ref{eq:dynamic-risk-pg}.

\subsection{Surrogate SPG with Gaussian Policy}
\label{app:spg-gaussian}
Define $\pi_\theta(a|s)=\mathcal{N}\big(a;\mu_{\theta_1}(s),\Sigma_{\theta_2}(s)\big)$ with $\theta=\{\theta_1,\theta_2\}$. By the definition of Gaussian distribution
\begin{equation*}
\pi_\theta(a|s)=\frac{1}{\sqrt{(2\pi)^k|\Sigma_{\theta_2}(s)|}}\exp\Big(-\frac{1}{2}(a-\mu_{\theta_1}(s))^\top \Sigma_{\theta_2}(s)^{-1} (a-\mu_\theta(s))\Big),
\end{equation*}
where $k$ is the action dimension. Since we focus on taking gradient for $\theta_1$, we omit $\theta_2$ for $\Sigma$ in the derivation. The gradient of $\log\pi_\theta$ takes the form
\begin{equation*}
\begin{aligned}
    \log\pi_\theta(a|s)&=-\frac{k}{2}\log(2\pi)-\frac{1}{2}\log|\Sigma|-\frac{1}{2}(a-\mu_{\theta_1}(s))^\top \Sigma^{-1}(a-\mu_{\theta_1}(s)),\\
    \nabla_{\theta_1} \log\pi_\theta(a|s)&=\nabla_{\theta_1} \mu_{\theta_1}(s) \Sigma^{-1}(a-\mu_{\theta_1}(s)).
\end{aligned}
\end{equation*}
Putting back to the surrogate policy gradient
\begin{equation*}
\begin{aligned}
\nabla_{\theta_1}\int_{\mathcal{A}}\pi_\theta(a|s)Q^{\pi}(s,a) da&=\int_{\mathcal{A}}\pi_\theta(a|s)\nabla_{\theta_1}\log\pi_\theta(a|s)Q^\pi(s,a)ds\\
&= \nabla_{\theta_1}\mu_{\theta_1}(s) \Sigma^{-1}\int_{\mathcal{A}}\pi_\theta(a|s)(a-\mu_{\theta_1}(s))Q^\pi(s,a) da
\end{aligned}
\end{equation*}
By Stein's Lemma~\citep{stein1981estimation}, if $\tilde{x}$ follows $\mathcal{N}(\mu,\sigma^2)$ and $f$ is differentiable, then $\mathbb{E}[(\tilde{x}-\mu)f(\tilde{x})]=\sigma^2 \mathbb{E}[f'(\tilde{x})]$. Applying this to the $Q$ function, we have
\[
\int_{\mathcal{A}}\pi_\theta(a|s)(a-\mu_{\theta_1}(s))Q^\pi(s,a)=\Sigma \mathbb{E}_{a\sim\pi_\theta(\cdot|s)}[\nabla_a Q^\pi(s,a)].
\]
Therefore,
\begin{equation*}
\begin{aligned}
\nabla_{\theta_1}\int_{\mathcal{S}} d(s)\int_{\mathcal{A}}\pi_\theta(a|s)Q^\pi(s,a) dads&=\int_{\mathcal{S}}d(s)\nabla_{\theta_1}\mu_{\theta_1}(s)\Sigma^{-1}\Sigma \mathbb{E}_{a\sim\pi_\theta(\cdot|s)}[\nabla_aQ^\pi(s,a)]ds\\
&=\int_{\mathcal{S}}d(s)\nabla_{\theta_1}\mu_{\theta_1}(s) \mathbb{E}_{a\sim\pi_\theta(\cdot|s)}[\nabla_aQ^\pi(s,a)]ds
\end{aligned}
\end{equation*}
As a result, if $\Sigma=0$, then $\mathbb{E}_{a}[\nabla_a Q^\pi(s,a)]=\nabla_a Q^\pi(s,a)|_{a=\mu_{\theta_1}(s)}$, and the above gradient recovers DPG in Sec.~\ref{sec:deter-policy-case}. This calculation is the same as in the risk-neutral setting.

\subsection{Surrogate SPG with Softmax Policy}
\label{app:spg-softmax}
When $\pi(\cdot|s)$ is a softmax policy, $\pi_\theta(a|s)=\exp(\theta(s,a))/\sum_b \exp(\theta(s,b))$. Therefore
\begin{equation*}
    \frac{\partial \log \pi_\theta(a|s)}{\partial \theta(s,i)} = \mathbb{I}\{a=i\} - \pi_\theta(i|s)
\end{equation*}
For each $s$, the gradient to compute in Eq.~\ref{eq:surrogate-pg} is $\nabla_\theta d(s)\sum_a\pi_\theta(a|s)Q^\pi(s,a)$. Consider the partial derivative for $\theta(s,i)$
\begin{equation*}
\begin{aligned}
    \frac{\partial \sum_a \pi_\theta(a|s)Q^\pi(s,a)}{\partial \theta(s,i)} &=\sum_a Q^\pi(s,a)\frac{\partial \pi_\theta(a|s)}{\partial\theta(s,i)}=\sum_a Q^\pi(s,a)\pi_\theta(a|s)\frac{\partial \log\pi_\theta(a|s)}{\partial \theta(s,i)}\\
    &=\sum_a Q^\pi(s,a)\pi_\theta(a|s)\big(\mathbb{I}\{a=i\}-\pi_\theta(i|s)\big)\\
    &=\sum_a Q^\pi(s,a)\pi_\theta(a|s)\mathbb{I}\{a=i\} - \sum_a Q^\pi(s,a)\pi_\theta(a|s)\pi_\theta(i|s)\\
    &=Q^\pi(s,i)\pi_\theta(i|s) - \pi_\theta(i|s) V^\pi(s)\\
    &= \pi_\theta(i|s)\big(Q^\pi(s,a)-V^\pi(s)\big)
\end{aligned}
\end{equation*}
Therefore, $\forall (s,a)\in \mathcal{S}\times \mathcal{A}$, $\theta(s,a)$ is updated by $\theta(s,a)\leftarrow \theta(s,a)+\beta d(s)\pi_\theta(a|s)(Q^\pi(s,a)-V^\pi(s))$. This calculation is the same as in the risk-neutral setting.

\subsection{Proof of Lemma~\ref{lemma:monotone}}
\label{app:monotone}
We first recall a Lemma from~\cite{agarwal2021theory}.
\begin{lemma}{[Lemma 52 of \cite{agarwal2021theory}]}
\label{lemma:smooth}
    Fix a state $s$. Let $\theta_s\in \mathbb{R}^{|\mathcal{A}|}$ be the column vector of parameters for state $s$. Let $\pi_\theta(\cdot|s)$ be the corresponding vector of action probabilities given by the softmax parameterization. For some fixed vector $c\in\mathbb{R}^{|\mathcal{A}|}$, define:
    \[
    F(\theta):=\sum_a \pi_\theta(a|s) c_a.
    \]
    Then $||\nabla_{\theta_s}F(\theta_s) - \nabla_{\theta_s} F(\theta'_s)||_2 \leq \beta ||\theta_s - \theta'_s||_2$ where $\beta=5 ||c||_\infty$.
\end{lemma}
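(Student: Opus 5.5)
The plan is to prove this smoothness statement by bounding the Hessian of $F$ in spectral norm. By the mean value theorem applied to $\nabla_{\theta_s}F$ along the segment between $\theta_s$ and $\theta'_s$, it suffices to show that $\|\nabla^2_{\theta_s}F(\theta_s)\|_2\le 5\|c\|_\infty$ for every $\theta_s$.

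\textbf{Gradient.} Write $\pi=\pi_\theta(\cdot|s)$ and $\bar c=\pi^\top c$. The computation in Appendix~\ref{app:spg-softmax} gives $\partial F/\partial\theta_s(i)=\pi_i(c_i-\bar c)$. In vector form this is $\nabla F=(\mathrm{diag}(\pi)-\pi\pi^\top)c$.

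\textbf{Hessian.} Differentiate again, using $\partial\pi_i/\partial\theta_s(j)=\pi_i(\mathbb{I}\{i=j\}-\pi_j)$ and $\partial\bar c/\partial\theta_s(j)=\pi_j(c_j-\bar c)$. Introduce the vector $u$ with entries $u_i:=\pi_i(c_i-\bar c)$. Then
\begin{equation*}
\nabla^2F=\mathrm{diag}(u)-\pi u^\top-u\pi^\top .
\end{equation*}
This should follow by a direct entrywise check: $H_{ij}=\pi_i(\mathbb{I}\{i=j\}-\pi_j)(c_i-\bar c)-\pi_i\pi_j(c_j-\bar c)$.

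\textbf{Bounding the three terms.}
\begin{itemize}
\item The rank-one terms satisfy $\|\pi u^\top\|_2=\|u\pi^\top\|_2=\|\pi\|_2\|u\|_2\le\|u\|_1$, using $\|\pi\|_2\le\|\pi\|_1=1$.
\item For each $i$, $|c_i-\bar c|\le 2\|c\|_\infty$. Hence $\|u\|_1\le\sum_i\pi_i\cdot 2\|c\|_\infty=2\|c\|_\infty$, so the two rank-one terms together contribute at most $4\|c\|_\infty$.
\item The diagonal term is bounded by $\max_i|u_i|$, where a sharper estimate is needed. Write $\bar c=\pi_ic_i+(1-\pi_i)m_i$, with $m_i$ the $\pi$-average of $c$ over the remaining actions. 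Then $c_i-\bar c=(1-\pi_i)(c_i-m_i)$, so $|u_i|\le\pi_i(1-\pi_i)\cdot 2\|c\|_\infty\le\tfrac12\|c\|_\infty$.
\end{itemize}

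Summing the three bounds gives $\|\nabla^2F\|_2\le 4.5\|c\|_\infty\le 5\|c\|_\infty$, which yields the claim with $\beta=5\|c\|_\infty$.

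The main obstacle is the constant. The naive bound $|u_i|\le 2\|c\|_\infty$ on the diagonal term gives only $6\|c\|_\infty$. The refinement through $\pi_i(1-\pi_i)\le 1/4$ is what brings the total under $5$. If that step proved delicate, I would instead bound the quadratic form $x^\top\nabla^2F\,x$ directly for unit vectors $x$.
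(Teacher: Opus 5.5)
Your proof is correct. The Hessian $\nabla^2F=\mathrm{diag}(u)-\pi u^\top-u\pi^\top$ checks out entrywise, and each rank-one term is at most $\|u\|_1\le 2\|c\|_\infty$. The refinement $|u_i|=\pi_i(1-\pi_i)|c_i-m_i|\le\frac12\|c\|_\infty$ then gives the slightly sharper constant $4.5\|c\|_\infty$. One small precision: since $\nabla F$ is vector-valued, use the integral form $\nabla F(\theta'_s)-\nabla F(\theta_s)=\int_0^1\nabla^2F(\theta_s+t(\theta'_s-\theta_s))(\theta'_s-\theta_s)\,dt$, i.e.\ the mean value inequality, rather than an equality-form mean value theorem.

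There is nothing in the paper to compare against: it gives no proof of this lemma and simply quotes Lemma 52 of \cite{agarwal2021theory}. Your uniform Hessian-norm bound is a complete, self-contained verification of that cited smoothness result.
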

To prove Lemma~\ref{lemma:monotone}, we first prove the following lemma.
\begin{lemma}
\label{lemma:local-update}
    With assumption~\ref{ass:1} (specifically, $\tilde{r}\in[0,1]$ and $d(s)\in(0,1]$). Let $\pi_\theta$ be a softmax policy. For update rule in Eq.~\ref{eq:surrogate-pg}, if $\eta\leq (1-\gamma)/5$, then $\mathbb{E}_{a\sim\pi_{t+1}(\cdot|s)}[Q^{\pi_t}(s,a)]\geq \mathbb{E}_{a\sim\pi_t(\cdot|s)}[Q^{\pi_t}(s,a)]$ for all $s\in\mathcal{S}$.
\end{lemma}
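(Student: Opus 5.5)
Fix a state $s$ and work with the per-state function
\[
F_s(\theta_s):=\sum_a \pi_{\theta}(a|s)\,Q^{\pi_t}(s,a),
\]
with $Q^{\pi_t}$ held fixed as the vector $c$ in Lemma~\ref{lemma:smooth}. Under the softmax parameterization the parameters $\theta(s,\cdot)$ for different states are decoupled. The objective in Eq.~\ref{eq:surrogate-pg} is $\sum_{s'} d(s')F_{s'}(\theta_{s'})$, and its gradient with respect to $\theta_s$ is $d(s)\nabla_{\theta_s}F_s(\theta_s)$. So, restricted to state $s$, the update in Eq.~\ref{eq:surrogate-pg} is a single gradient ascent step on $F_s$ with effective step size $\eta d(s)$:
\[
\theta_{t+1,s}=\theta_{t,s}+\eta\, d(s)\,\nabla_{\theta_s}F_s(\theta_{t,s}).
\]
The claim $\mathbb{E}_{a\sim\pi_{t+1}(\cdot|s)}[Q^{\pi_t}(s,a)]\geq \mathbb{E}_{a\sim\pi_t(\cdot|s)}[Q^{\pi_t}(s,a)]$ is then exactly $F_s(\theta_{t+1,s})\geq F_s(\theta_{t,s})$. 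This is the standard ascent property of gradient steps on smooth functions.

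The first step is to bound the smoothness constant. Since $\tilde r\in[0,1]$, iterating the Bellman operator in Eq.~\ref{eq:bellman-op-st} gives $Q^{\pi_t}(s,a)\in[0,1/(1-\gamma)]$. This uses monotonicity (A2) and translation invariance (A3) of $\rho$, together with $\rho(c)=c$ for constants. Hence $\|c\|_\infty\leq 1/(1-\gamma)$, and Lemma~\ref{lemma:smooth} gives that $\nabla F_s$ is $L$-Lipschitz with $L=5/(1-\gamma)$.

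The second step applies the descent lemma, in its ascent form, to the $L$-smooth $F_s$:
\[
F_s(\theta')\geq F_s(\theta)+\langle\nabla F_s(\theta),\theta'-\theta\rangle-\tfrac{L}{2}\|\theta'-\theta\|_2^2 .
\]
Plugging in $\theta'-\theta=\eta d(s)\nabla F_s(\theta)$ gives
\[
F_s(\theta_{t+1,s})-F_s(\theta_{t,s})\geq \eta d(s)\Bigl(1-\tfrac{L\eta d(s)}{2}\Bigr)\|\nabla F_s(\theta_{t,s})\|_2^2 .
\]
Since $d(s)\leq 1$ and $\eta\leq(1-\gamma)/5=1/L$, we get $L\eta d(s)\leq 1$. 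The bracket is therefore at least $1/2>0$, and the right-hand side is nonnegative. This proves the inequality for every $s$.

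The only delicate point is the first step: the bound $0\le Q^{\pi_t}\le 1/(1-\gamma)$ must hold for the risk-sensitive $Q$. I would show it by noting that $\mathcal{T}^{\pi}$ is a $\gamma$-contraction in sup-norm, which follows from monotonicity plus translation invariance. Then $\mathcal{T}^\pi$ maps the box $[0,1/(1-\gamma)]^{|\mathcal{S}||\mathcal{A}|}$ into itself, because $\rho(\tilde r+\gamma V)\in[0,1+\gamma/(1-\gamma)]$ by monotonicity. The box therefore contains the unique fixed point. Everything else is a direct invocation of Lemma~\ref{lemma:smooth}.
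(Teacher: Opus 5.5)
Your proposal is correct and follows essentially the same route as the paper: per-state smoothness of the softmax objective via Lemma~\ref{lemma:smooth} with $\|Q^{\pi_t}\|_\infty\le 1/(1-\gamma)$, giving constant $5/(1-\gamma)$, and then the ascent property of a gradient step with $\eta\le 1/\beta$. The differences are cosmetic. You fold $d(s)$ into the step size rather than into the objective, which also spares you dividing by $d(s)$ at the end. You also justify the sup-norm bound on the risk-sensitive $Q$ via contraction and invariance of the box $[0,1/(1-\gamma)]$, which the paper only asserts.
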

\begin{proof}
    Let $F^Q_s(\theta):=d(s)\sum_a \pi_\theta(a|s)Q(s,a)$. By Lemma~\ref{lemma:smooth}, we have
    \begin{equation*}
    \begin{aligned}
        \|\nabla_{\theta_s}F_{s}^Q(\theta)-\nabla_{\theta_s'}F_{s}^Q(\theta')\|_2 &= \|d(s)(\nabla_{\theta_s}\sum_a \pi_{\theta}(a|s) Q(s,a)- \nabla_{\theta_s'}\sum_a \pi_{\theta'}(a|s) Q(s,a))\|_2\\
    &=d(s)\|\nabla_{\theta_s}\sum_a \pi_{\theta}(a|s) Q(s,a)- \nabla_{\theta_s'}\sum_a \pi_{\theta'}(a|s) Q(s,a)\|_2\\
    &\overset{(a)}{\leq} \|\nabla_{\theta_s}\sum_a \pi_{\theta}(a|s) Q(s,a)- \nabla_{\theta_s'}\sum_a \pi_{\theta'}(a|s) Q(s,a)\|_2 \\
    &\overset{(b)}{\leq} 5\|Q\|_\infty\|\theta(s,\cdot)-\theta'(s,\cdot)\|_2 \\
    &\overset{(c)}= 5/(1-\gamma)\|\theta(s,\cdot)-\theta'(s,\cdot)\|_2,
    \end{aligned}
    \end{equation*}
    where (a) follows $d(s)\leq 1$; (b) follows Lemma~\ref{lemma:smooth}; (c) is due to the assumption $\tilde{r}\in[0,1]$, then $||Q||_\infty\leq \frac{1}{1-\gamma}$.

    Hence, $F_{s}^Q(\theta)$ is $\beta:=5/(1-\gamma)$ smooth with respect to $\theta_s$. We further observe that the policy update can be rewritten as: $\theta_{t+1}(s,\cdot) := \theta_t(s,\cdot)+\eta\nabla_{\theta_s} F_s^{Q^{\pi_t}}(\theta_t),\;\forall s$. Recall that for a $\beta$ smooth function, gradient ascent will increase the function value provided that $\eta\leq 1/\beta$~\citep{beck2017first}. So we get that if $\beta \leq (1-\gamma)/5$, then $F_s^{Q^{\pi_t}}(\theta_{t+1})\geq F_s^{Q^{\pi_t}}(\theta_{t})$, which means $\sum_a\pi_{t+1}(a|s)Q^{\pi_t}(s,a)\geq \sum_a \pi_t(a|s)Q^{\pi_t}(s,a), \forall s$.
\end{proof}

Next, we prove Lemma~\ref{lemma:monotone}.
\Lemmamonotone*
\begin{proof}
    From Lemma~\ref{lemma:local-update}, we have $\mathbb{E}_{a\sim \pi_{t+1}(\cdot|s)}[Q^{\pi_t}(s,a)] \geq \mathbb{E}_{a\sim \pi_{t}(\cdot|s)}[Q^{\pi_t}(s,a)], \forall s$. Recall the Bellman operator in Eq.~\ref{eq:bellman-op-st}
    \[
    \mathcal{T}^\pi Q(s,a)=\rho\big(\tilde{r}(s,a)+\gamma \mathbb{E}_{a'\sim\pi(\cdot|\tilde{s}')}[Q(\tilde{s}',a')]\big)
    \]
    By the monotonicity of coherent risk measure and Lemma~\ref{lemma:local-update}, we have
    \[
    \rho\big(r(s,a) + \gamma \mathbb{E}_{a'\sim \pi_{t+1}(\cdot|\tilde{s}')}[Q^{\pi_{t}}(\tilde{s}',a')]\big) \geq \rho\big(r(s,a) + \gamma \mathbb{E}_{a'\sim \pi_t(\cdot|\tilde{s}')}[Q^{\pi_t}(\tilde{s}',a')]\big),
    \]
    which means $\mathcal{T}^{\pi_{t+1}}Q^{\pi_t}\geq \mathcal{T}^{\pi_t}Q^{\pi_t}=Q^{\pi_t}$.

    Futhermore, by the monotonicity of coherent risk measure, we have 
    \[
    Q^{\pi_t}\leq \mathcal{T}^{\pi_{t+1}} Q^{\pi_t}\leq (\mathcal{T}^{\pi_{t+1}})^2 Q^{\pi_t}\leq ...\leq\lim_{k\rightarrow \infty}(\mathcal{T}^{\pi_{t+1}})^k Q^{\pi_t}=Q^{\pi_{t+1}}.
    \]
    The monotone increasing of $V$ is due to
\[
\begin{aligned}
&V^{\pi_{t+1}}(s)-V^{\pi_t}(s)\\
=&\sum_a \pi_{t+1}(a|s)[Q^{\pi_{t+1}}(s,a)-Q^{\pi_t}(s,a)] + \sum_a [\pi_{t+1}(a|s)-\pi_t(a|s)]Q^{\pi_t}(s,a)
\end{aligned}
\]
The first term is non-negative since we have shown $Q^{\pi_{t+1}}\geq Q^{\pi_t}$. The second term is also non-negative from Lemma~\ref{lemma:local-update}. Thus $V^{\pi_{t+1}} \geq V^{\pi_t}$.
\end{proof}

\subsection{Proof of Theorem~\ref{thm:optimal}}
\label{app:policy-converge}

Our proof follows similar lines as the proof of Theorem 10 in \cite{agarwal2021theory}. One can start with a lemma that confirms the convergence of the iterates.

\begin{lemma}[Lemma 42 in \cite{agarwal2021theory}]
For all states $s \in \mathcal{S}$ and actions $a \in \mathcal{A}$, the sequences
$\{V^{\pi_t}(s)\}_{t \ge 0}$ and $\{Q^{\pi_t}(s,a)\}_{t \ge 0}$ are non-decreasing and converge:
\[
V^{\pi_t}(s) \rightarrow V^{\infty}(s),
\qquad
Q^{\pi_t}(s,a) \rightarrow Q^{\infty}(s,a).
\]
Define the limiting advantage function $A^{\infty}(s,a)
= Q^{\infty}(s,a) - V^{\infty}(s)$,
and let $\Delta
:= \min_{(s,a):\, A^{\infty}(s,a) > 0}
A^{\infty}(s,a)$.
Then $\Delta > 0$. Moreover, there exists a finite iteration index $T_0$ such that
for all $t > T_0$, all $s \in \mathcal{S}$, and all $a \in \mathcal{A}$, $Q^{\pi_t}(s,a)
> Q^{\infty}(s,a) - \Delta/4$.
\end{lemma}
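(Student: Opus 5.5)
The plan is to combine monotonicity from Lemma~\ref{lemma:monotone} with boundedness, and then use finiteness of $\mathcal{S}\times\mathcal{A}$ to obtain a positive gap and a uniform finite time index.

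\textbf{Convergence.} By Lemma~\ref{lemma:monotone}, for every $(s,a)$ the sequences $\{Q^{\pi_t}(s,a)\}_t$ and $\{V^{\pi_t}(s)\}_t$ are non-decreasing. They are also bounded above. Since $\tilde{r}\in[0,1]$, the constant function $1/(1-\gamma)$ dominates $\mathcal{T}^\pi$ applied to any $Q\le 1/(1-\gamma)$. This uses monotonicity (A2), translation invariance (A3) and $\rho(c)=c$ for constants:
\[
\rho\bigl(\tilde r+\gamma\textstyle\sum_{a'}\pi(a'|\tilde s')Q(\tilde s',a')\bigr)\le 1+\gamma/(1-\gamma)=1/(1-\gamma).
\]
Iterating the contraction $\mathcal{T}^{\pi_t}$ from such a $Q$ gives $Q^{\pi_t}\le 1/(1-\gamma)$, and hence $V^{\pi_t}\le 1/(1-\gamma)$. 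The monotone convergence theorem for real sequences then gives limits $Q^\infty(s,a)$ and $V^\infty(s)$ for each $(s,a)$.

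\textbf{Positive gap.} The set $\{(s,a): A^\infty(s,a)>0\}$ is a subset of the finite set $\mathcal{S}\times\mathcal{A}$. If it is empty, $\Delta$ is a minimum over the empty set and we take the convention $\Delta=+\infty$, or any positive constant; the remaining claim is then trivial or unchanged. Otherwise $\Delta$ is a minimum of finitely many strictly positive numbers, so $\Delta>0$.

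\textbf{Uniform time index.} Fix $(s,a)$. Because $Q^{\pi_t}(s,a)\uparrow Q^\infty(s,a)$, there is $T_{s,a}$ such that $Q^{\pi_t}(s,a)>Q^\infty(s,a)-\Delta/4$ for all $t>T_{s,a}$. Monotonicity makes the inequality persist once it holds. Setting $T_0=\max_{s,a}T_{s,a}$, which is finite because the state–action space is finite, gives the uniform statement.

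\textbf{Main obstacle.} The only delicate step is the upper bound on $Q^{\pi_t}$. It needs the fact that the fixed point of $\mathcal{T}^{\pi}$ inherits bounds from the operator. I would handle this by noting that $\mathcal{T}^\pi$ is monotone and a $\gamma$-contraction in sup-norm. The contraction follows from (A2) and (A3): if $Q\le Q'+c$ then $\mathcal{T}^\pi Q\le \mathcal{T}^\pi Q'+\gamma c$. Consequently the limit of the iterates from the constant $1/(1-\gamma)$ stays below that constant.
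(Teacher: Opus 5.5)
Your proposal is correct and follows essentially the same route as the paper: monotonicity from Lemma~\ref{lemma:monotone} together with the upper bound $1/(1-\gamma)$ gives convergence, and finiteness of $\mathcal{S}\times\mathcal{A}$ then yields $\Delta>0$ and a uniform finite $T_0$. You also spell out two points the paper leaves implicit: the justification of $Q^{\pi_t}\le 1/(1-\gamma)$ via (A2), (A3) and the $\gamma$-contraction of $\mathcal{T}^\pi$, and the empty-set convention for $\Delta$.
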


The existence of $Q^\infty$ and $V^\infty$ is ensured by the monotone increasing of the value function (Lemma~\ref{lemma:monotone}) and that value functions are bounded above by $1/(1-\gamma)$.

The proof relies on defining for each state $s$, a partition of the actions into three sets:
\[
\begin{aligned}
    I^0_s:=&\{a|Q^\infty(s,a)=V^\infty(s)\},\\
    I_s^+:=&\{a|Q^\infty(s,a)>V^\infty(s)\},\\
    I_s^-:=&\{a|Q^\infty(s,a)<V^\infty(s)\},
\end{aligned}
\]
and showing in Part 1 that $\pi_t(a|s)\rightarrow 0$ for $a\in I^+_s\cup I^-_s$ followed with $I^+_s = \emptyset$ in Part 2. With the latter property in hand, one can show that 
\begin{align*}
\rho\big(\tilde{r}(s,a)+\gamma\cdot\max_{a'} Q^\infty(\tilde{s}',a)\big) &= \rho\big(\tilde{r}(s,a)+\gamma\cdot\max_{a'\in I^0_s\cup I_s^-} Q^\infty(\tilde{s}',a)\big)\\
&\leq \rho\big(\tilde{r}(s,a)+\gamma\cdot\max_{a'\in I^0_s\cup I_s^-} V^\infty(\tilde{s}')\big)    \\
&=  \rho\big(\tilde{r}(s,a)+\gamma V^\infty(\tilde{s}')\big) = Q^\infty(s,a),
\end{align*}
where we first exploit the fact that $I_s^+$ is empty and where the final equality follows form the fact that $Q^{\pi_t}(s,a) = \rho\big(\tilde{r}(s,a)+\gamma V^{\pi_t}(\tilde{s}')\big)$ for all $t$, hence must be satisfied in the limit. This can be combined to 
\begin{align*}
\rho\big(\tilde{r}(s,a)+\gamma\cdot\max_{a'} Q^{\pi_t}(\tilde{s}',a)\big) &\geq \rho\big(\tilde{r}(s,a)+\gamma\cdot\sum_{a'} \pi_t(a'|\tilde{s}') Q^{\pi_t}(\tilde{s}',a')\big) \\
&= \rho\big(\tilde{r}(s,a)+\gamma V^{\pi_t}(\tilde{s}')\big)   = Q^{\pi_t}(s,a),
\end{align*}
and therefore in the limit to obtain that $Q^\infty$ satisfies Eq. \ref{eq:bellman-opt-eq} and thus both $Q^\infty=Q^*$ and (by construction) $V^\infty = V^*$. This completes the proof of Theorem~\ref{thm:optimal}.

\subsubsection{Part 1}
We first show that $\pi_t(a|s)\rightarrow 0$ for $a\in I^+_s\cup I^-_s$ as $t\rightarrow \infty$. The proof of this part differs from that in \cite{agarwal2021theory}. In particular, \cite{agarwal2021theory} establishes the vanishing of the policy gradient by exploiting the smoothness of the associated antiderivative (see Lemma 44 therein), whereas our surrogate policy gradient does not admit a corresponding antiderivative with such structure.

Define $J(\theta) :=\sum_s d(s)\sum_a \pi_\theta(a|s)Q^{\pi_\theta}(s,a)$. 

Define the local primitive function as $\mathcal{L}_t(\theta):=\sum_s d(s)\sum_a \pi_\theta(a|s)Q^{\pi_{\theta_t}}(s,a)$. As a result, the surrogate policy gradient (Eq.~\ref{eq:surrogate-pg}) can be expressed as $g_t(\theta_t) := \nabla_\theta \mathcal{L}_t(\theta)|_{\theta = \theta_t}$.

\begin{lemma}
\label{lemma:gradient-0}
    With the same conditions in Lemma~\ref{lemma:monotone}, $g_t(\theta_t)\rightarrow 0$ as $t\rightarrow \infty$.
\end{lemma}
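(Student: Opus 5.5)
We prove that $g_t(\theta_t)\to 0$ with a telescoping argument built on the per-step ascent guaranteed by smoothness. We never need a global antiderivative; the local primitives $\mathcal{L}_t$ suffice. By Lemma~\ref{lemma:local-update}, each $F^{Q^{\pi_t}}_s$ is $\beta$-smooth in $\theta_s$ with $\beta=5/(1-\gamma)$. Since $\mathcal{L}_t$ is a sum over states of functions of disjoint parameter blocks, $\mathcal{L}_t$ is also $\beta$-smooth in $\theta$. The descent lemma for gradient ascent with step $\eta\le 1/\beta$ then gives
\begin{equation*}
\mathcal{L}_t(\theta_{t+1})-\mathcal{L}_t(\theta_t)\;\ge\;\eta\|g_t(\theta_t)\|_2^2-\frac{\beta\eta^2}{2}\|g_t(\theta_t)\|_2^2\;\ge\;\frac{\eta}{2}\|g_t(\theta_t)\|_2^2 .
\end{equation*}
This sufficient-increase inequality is the first step.

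The second step relates the increase of $\mathcal{L}_t$ to the increase of the true objective $J$. Note that $\mathcal{L}_t(\theta_t)=J(\theta_t)$. Moreover,
\begin{equation*}
\mathcal{L}_t(\theta_{t+1})=\sum_s d(s)\sum_a\pi_{t+1}(a|s)Q^{\pi_t}(s,a)\le\sum_s d(s)\sum_a\pi_{t+1}(a|s)Q^{\pi_{t+1}}(s,a)=J(\theta_{t+1}).
\end{equation*}
The inequality holds because Lemma~\ref{lemma:monotone} gives $Q^{\pi_{t+1}}\ge Q^{\pi_t}$ pointwise and $\pi_{t+1}(a|s),d(s)\ge0$. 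Combining the two displays yields $J(\theta_{t+1})-J(\theta_t)\ge\frac{\eta}{2}\|g_t(\theta_t)\|_2^2$.

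The final step is to telescope. Summing from $t=0$ to $T-1$ gives
\begin{equation*}
\frac{\eta}{2}\sum_{t=0}^{T-1}\|g_t(\theta_t)\|_2^2\le J(\theta_T)-J(\theta_0).
\end{equation*}
Because $\tilde r\in[0,1]$ and $\sum_s d(s)=1$, we have $0\le J\le 1/(1-\gamma)$, so the right-hand side is bounded uniformly in $T$. Hence the series of squared gradient norms converges, which forces $\|g_t(\theta_t)\|_2\to0$. This holds for any fixed $\eta\in(0,(1-\gamma)/5]$.

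The step I expect to need the most care is the second one. The surrogate gradient is not the gradient of $J$, so the ascent on $\mathcal{L}_t$ has to be transferred to a quantity that is monotone and bounded across iterations. Monotonicity of $Q^{\pi_t}$ from Lemma~\ref{lemma:monotone} is exactly what makes the comparison $\mathcal{L}_t(\theta_{t+1})\le J(\theta_{t+1})$ go through. I will also check that the smoothness constant of Lemma~\ref{lemma:smooth} applies blockwise to the full vector $\theta$; it does, because the Hessian of $\mathcal{L}_t$ is block diagonal across states with each block weighted by $d(s)\le1$. Finally, the descent-lemma constant needs $\eta\le1/\beta$, which the hypothesis guarantees.
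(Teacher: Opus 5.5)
Your proposal is correct and follows essentially the same route as the paper. Both arguments get sufficient increase of the local primitive $\mathcal{L}_t$ from $\beta$-smoothness with $\eta\le 1/\beta$, transfer it to $J$ via $\mathcal{L}_t(\theta_t)=J(\theta_t)$ and $\mathcal{L}_t(\theta_{t+1})\le J(\theta_{t+1})$ (from $Q^{\pi_{t+1}}\ge Q^{\pi_t}$), and telescope against the bound $J\le 1/(1-\gamma)$; your explicit check that the per-state smoothness constant carries over to the full parameter vector fills in a detail the paper leaves implicit.
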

\begin{proof}
From Lemma~\ref{lemma:monotone},
\[
\begin{aligned}
&J(\theta_{t+1}) - \mathcal{L}_t(\theta_{t+1})\\
=&\sum_s d(s)\sum_a \pi_{\theta_{t+1}}(a|s) Q^{\pi_{\theta_{t+1}}}(s,a) - \sum_s d(s)\sum_a \pi_{\theta_{t+1}}(a|s) Q^t(s,a)\\
=&\sum_s d(s)\sum_a \pi_{\theta_{t+1}}(a|s)[Q^{\pi_{\theta_{t+1}}}(s,a)-Q^t(s,a)] \geq 0
\end{aligned}
\]
Also, from Lemma~\ref{lemma:smooth} and Lemma~\ref{lemma:local-update}, we have $\mathcal{L}_t(\theta)$ is $\beta:= 5/(1-\gamma)$ smooth. For a $\beta$-smooth function $f$, it holds that (see Lemma 3.4 of \cite{bubeck2015convex})
\[
|f(\theta_{t+1}) - f(\theta_t)-\nabla_\theta f(\theta_t)^\top(\theta_{t+1}-\theta_t)|\leq \frac{\beta}{2}\|\theta_{t+1}-\theta_t\|^2.
\]
When $\theta_{t+1}=\theta_t + \eta \nabla_\theta f(\theta_t)$,
\[
f(\theta_{t+1})\geq f(\theta_t) + \eta \|\nabla_\theta f(\theta_t)\|^2 - \frac{\beta}{2}\|\eta \nabla_\theta f(\theta_t)\|^2,
\]
or equivalently,
\[
f(\theta_{t+1})-f(\theta_t)\geq \eta(1-\frac{\beta\eta}{2})\|\nabla_\theta f(\theta_t)\|^2.
\]
If learning rate $\eta\leq \frac{1}{\beta}$, $f(\theta_{t+1})-f(\theta_t)\geq \frac{\eta}{2}\|\nabla_\theta f(\theta_t)\|^2$. Therefore,
\[
\mathcal{L}_t(\theta_{t+1}) - \mathcal{L}_t(\theta_t) \geq \frac{\eta}{2}\|g_t(\theta_t)\|^2.
\]
As a result,
\[
J(\theta_{t+1}) - J(\theta_t) = J(\theta_{t+1}) - \mathcal{L}_t(\theta_t) \geq \mathcal{L}_t(\theta_{t+1}) - \mathcal{L}_t(\theta_t) \geq \frac{\eta}{2}\|g_t(\theta_t)\|^2 \geq 0.
\]
Since $J(\theta_t)$ is non-decreasing in $t$ and $J(\theta)$ is bounded above by $1/(1-\gamma)$, it must hold that
\[\sum_{t=0}^T \frac{\eta}{2}\|g_t(\theta_t)\|^2 \leq J(\theta_T)-J(\theta_0)\leq \frac{1}{1-\gamma}-J(\theta_0)< \infty,\;\forall T\]
which means $\lim_{t\rightarrow \infty} \|g_t(\theta_t)\|^2 =0 $, and therefore $\lim_{t\rightarrow \infty} g_t(\theta_t)=0$. 
\end{proof}

From Appendix~\ref{app:spg-softmax}, we know
\[
[g_t(\theta_t)](s,a) = d(s)\pi_{\theta_t}(a|s)A^t(s,a).
\]
The following lemma states the magnitude of $A^t(s,a)$ for $a\in I^+_s\cup I^-_s$.
\begin{lemma}{[Lemma 43 of \cite{agarwal2021theory}]}
\label{lemma:A_bound}
Define $A^t(s,a):= Q^t(s,a)-V^t(s)$. There exists a $T_1$ such that for all $t>T_1$, for all states $s$ and actions $a$, we have
\[
\begin{aligned}
A^t(s,a) < -\frac{\Delta}{4},~ \forall a \in I_s^{-};~~~A^t(s,a)>\frac{\Delta}{4}, ~\forall a \in I_s^+ ; ~\mathrm{where}\\
\Delta = \min_{\{s,a|A^\infty(s,a)\neq 0\}} |A^\infty(s,a)|, ~\mathrm{and}~ A^\infty(s,a) = Q^\infty(s,a) - V^\infty(s)
\end{aligned}
\]
\end{lemma}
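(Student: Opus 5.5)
The claim is a pure limit statement. It follows from the convergence $Q^{\pi_t}(s,a)\to Q^\infty(s,a)$ and $V^{\pi_t}(s)\to V^\infty(s)$, which the preceding lemma (Lemma 42 of \cite{agarwal2021theory}) provides through the monotonicity of Lemma~\ref{lemma:monotone} and boundedness by $1/(1-\gamma)$. The plan is to use finiteness of $\mathcal{S}\times\mathcal{A}$ to make the convergence uniform, and then compare $A^t$ with $A^\infty$.

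First I would check that $\Delta>0$ is well defined. The set $\{(s,a): A^\infty(s,a)\neq 0\}$ is finite. If it is empty, then $I_s^+\cup I_s^-=\emptyset$ for every $s$ and the statement holds vacuously. Otherwise $\Delta$ is a minimum of finitely many strictly positive numbers, so $\Delta>0$.

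Next, since $A^t(s,a)=Q^{\pi_t}(s,a)-V^{\pi_t}(s)$ and both terms converge pointwise, $A^t(s,a)\to A^\infty(s,a)$ for each $(s,a)$. Because there are finitely many pairs, there is a single index $T_1$ with $|A^t(s,a)-A^\infty(s,a)|<\Delta/4$ for all $t>T_1$ and all $(s,a)$. One concrete way to get this is to choose $T_1$ so that $|Q^{\pi_t}-Q^\infty|<\Delta/8$ and $|V^{\pi_t}-V^\infty|<\Delta/8$ uniformly for $t>T_1$, and take the maximum of the finitely many individual indices. Monotonicity even gives one-sided bounds such as $Q^{\pi_t}\le Q^\infty$, but these are not needed.

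Finally, fix $t>T_1$. For $a\in I_s^+$ we have $A^\infty(s,a)\ge\Delta$, so $A^t(s,a)>A^\infty(s,a)-\Delta/4\ge 3\Delta/4>\Delta/4$. For $a\in I_s^-$ we have $A^\infty(s,a)\le-\Delta$, so $A^t(s,a)<-\Delta+\Delta/4<-\Delta/4$.

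No step is a real obstacle. The only point needing care is that the uniform choice of $T_1$ relies on finiteness of $\mathcal{S}\times\mathcal{A}$ (Assumption~\ref{ass:1}). The risk measure plays no role here beyond what Lemma~\ref{lemma:monotone} already supplies: the convergence of $Q^{\pi_t}$ and $V^{\pi_t}$.
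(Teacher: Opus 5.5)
Your proof is correct and is essentially the argument the paper defers to (Lemma 43 of \cite{agarwal2021theory}): use the convergence $A^t(s,a)\to A^\infty(s,a)$, make it uniform via finiteness of $\mathcal{S}\times\mathcal{A}$, and compare with the gap $\Delta$. The only cosmetic difference is that the original uses monotonicity ($Q^{\pi_t}\le Q^\infty$, $V^{\pi_t}\le V^\infty$), so it needs only one-sided $\Delta/4$-closeness (e.g.\ $Q^{\pi_t}>Q^\infty-\Delta/4$ from the $T_0$ of the preceding Lemma 42). You instead use two-sided $\Delta/8$-closeness, which works equally well.
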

The proof of Lemma~\ref{lemma:A_bound} follows the same steps as in \cite{agarwal2021theory}.

Since when $t>T_1$, $|A^t(s,a)|>\Delta / 4$ for $a\in I^+_s\cup I^-_s$ (Lemma~\ref{lemma:A_bound}), we must also have that
\[|[g_t(\theta_t)](s,a)|=d(s)\pi_{\theta_t}(s,a)|A^t(s,a)| \geq (\min_{s'} d(s'))\frac{\Delta}{4}\pi_{\theta_t}(s,a).\]
Hence, $[g_t(\theta_t)](s,a)\rightarrow 0$ as $t\rightarrow \infty$ (Lemma~\ref{lemma:gradient-0}), which implies that $\pi_t(a|s)\rightarrow 0$ for $a\in I^+_s\cup I^-_s$.
This completes the proof of this part.

\subsubsection{Part 2}
In this part, we show that $I^+_s =\emptyset$. This is proved by contradiction and follows the same steps as in \cite{agarwal2021theory}. To shorten our discussion we omit to present the proof of the lemmas below since they follow directly the steps used in \cite{agarwal2021theory} with $\nabla_\theta V^\pi(s_0)$ replaced with $g_t(\theta_t)$ and exploiting the vanishing property of $g_t(\theta_t)$ established in Part 1.

\begin{lemma}{[Lemma 47 of \cite{agarwal2021theory}]}
\label{lemma:47}
    Suppose $a_+\in I^+_s$. For any $a\in I^0_s$, if there exists a $t>T_0$, such that $\pi_t(a_+|s)\geq \pi_t(a|s) $, then for all $\tau > t$, $\pi_{\tau}(a_+|s)\geq \pi_\tau(a|s)$.
\end{lemma}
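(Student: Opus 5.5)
The plan is to use the softmax update rule from Appendix~\ref{app:spg-softmax}. For each state $s$ and action $b$ it reads
\[
\theta_{t+1}(s,b)=\theta_t(s,b)+\eta\, d(s)\,\pi_t(b|s)\,A^t(s,b).
\]
Because the policy is softmax, $\pi_\tau(a_+|s)\geq \pi_\tau(a|s)$ holds if and only if $\theta_\tau(s,a_+)\geq \theta_\tau(s,a)$. So it suffices to show that the gap $\theta_\tau(s,a_+)-\theta_\tau(s,a)$ is nondecreasing for all $\tau\geq t$ whenever it starts out nonnegative. I will argue by induction on $\tau$: assume $\pi_\tau(a_+|s)\geq\pi_\tau(a|s)$ and show that the increment of $\theta(s,a_+)$ at step $\tau$ is at least the increment of $\theta(s,a)$.

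The key comparison is between the advantages. Take $\tau>T_0$, where $T_0$ comes from the restated Lemma 42, so that $Q^{\pi_\tau}(s,b)>Q^\infty(s,b)-\Delta/4$ for all $b$. The values are monotone nondecreasing (Lemma~\ref{lemma:monotone}), so $Q^{\pi_\tau}\leq Q^\infty$ and $V^{\pi_\tau}\leq V^\infty$. Since $a\in I^0_s$, this gives $Q^\tau(s,a)\leq Q^\infty(s,a)=V^\infty(s)$. Since $a_+\in I^+_s$, we have $Q^\infty(s,a_+)\geq V^\infty(s)+\Delta$, hence $Q^\tau(s,a_+)>V^\infty(s)+3\Delta/4$. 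Subtracting the same $V^\tau(s)$ from both yields $A^\tau(s,a_+)>A^\tau(s,a)+3\Delta/4$, so in particular $A^\tau(s,a_+)>A^\tau(s,a)$. It also gives $A^\tau(s,a_+)>0$, because $V^\tau(s)\leq V^\infty(s)$. This is the same as Agarwal et al.'s argument, except that the monotonicity now comes from Lemma~\ref{lemma:monotone} rather than from the risk-neutral result.

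The main obstacle is combining the $\pi$ factors with the advantages, since $\pi_\tau(b|s)A^\tau(s,b)$ is not obviously ordered. I split on the sign of $A^\tau(s,a)$.
\begin{itemize}
\item If $A^\tau(s,a)\leq 0$, then $a$'s increment is nonpositive while $a_+$'s increment $\eta d(s)\pi_\tau(a_+|s)A^\tau(s,a_+)$ is positive, so the gap grows.
\item If $A^\tau(s,a)>0$, then using $\pi_\tau(a_+|s)\geq\pi_\tau(a|s)>0$ and $A^\tau(s,a_+)>A^\tau(s,a)>0$ we get $\pi_\tau(a_+|s)A^\tau(s,a_+)\geq \pi_\tau(a|s)A^\tau(s,a)$.
\end{itemize}
In either case
\[
\theta_{\tau+1}(s,a_+)-\theta_{\tau+1}(s,a)\geq\theta_\tau(s,a_+)-\theta_\tau(s,a)\geq 0,
\]
which closes the induction.

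Note that $d(s)>0$ and $\eta>0$ are common factors of both increments, so they do not affect the comparison. No property of $\rho$ is used beyond the monotone convergence of the values, which is why the statement transfers verbatim from the risk-neutral case.
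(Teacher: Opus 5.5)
Your proposal is correct and is essentially the argument the paper defers to, namely Agarwal et al.'s Lemma 47 with $\nabla_\theta V^{\pi}$ replaced by $g_t(\theta_t)$. That argument inducts on $\tau$, uses $Q^{\tau}(s,a_+)>Q^{\tau}(s,a)$ for $\tau>T_0$ (from the $\Delta/4$ bound plus monotonicity of the values), and compares the componentwise updates $\eta\, d(s)\pi_\tau(b|s)A^\tau(s,b)$. Your explicit check that $A^\tau(s,a_+)>0$ and your case split on the sign of $A^\tau(s,a)$ just make rigorous a step the original proof leaves implicit.
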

Therefore, for any action $a_+\in I^+_s$, we can partition the set $I^0_s$ into $B^0_s(a_+)$ and $\bar{B}^0_s(a_+)$ such that $\forall a\in B^0_s(a_+)$, we have that $ \pi_t(a_+|s) < \pi_t(a|s)$ for all $t\geq T_0$, and $\bar{B}^0_s(a_+)$ is the complement, i.e. $\forall a\in\bar{B}^0_s(a_+)$, there exists a $t'>T_0$ such that $\pi_t(a_+|s)\geq \pi_t(a|s)$ for all $t\geq t'$. In addition, for any action $a_+\in I^+_s$, the set $B^0_s(a_+)\neq \emptyset$, as stated in the following lemma together with properties about $\pi_t(a|s)$ and $\theta_t(s,a)$ for $a\in B^0_s(a_+)$.
\begin{lemma}{[Lemma 48 of \cite{agarwal2021theory}]}
\label{lemma:48}
    Suppose $I_s^+\neq\emptyset$. For all $a_+\in I^+_s$, we have that $B^0_s(a_+)\neq \emptyset$ and that
    \[\sum_{a\in B^0_s(a_+)} \pi_t(a|s)\rightarrow 1, \mathrm{as}~t\rightarrow \infty\]
    This implies that
    \[\max_{a\in B^0_s(a_+)} \theta_t(s,a)\rightarrow \infty.\]
\end{lemma}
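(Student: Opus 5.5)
We follow the argument of Lemma 48 in \cite{agarwal2021theory}, using only Part 1 and Lemma~\ref{lemma:47}. Fix $s$ with $I^+_s\neq\emptyset$ and $a_+\in I^+_s$. By Part 1, $\pi_t(a|s)\to 0$ for every $a\in I^+_s\cup I^-_s$. Since the three sets partition $\mathcal{A}$, this gives $\sum_{a\in I^0_s}\pi_t(a|s)\to 1$.

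\textbf{Step 1: every action of $\bar B^0_s(a_+)$ has vanishing probability.} Take $a\in\bar B^0_s(a_+)$. By definition (via Lemma~\ref{lemma:47}) there is a $t'$ such that $\pi_t(a|s)\le \pi_t(a_+|s)$ for all $t\ge t'$. Since $a_+\in I^+_s$, Part 1 gives $\pi_t(a_+|s)\to0$, so $\pi_t(a|s)\to 0$. The set $\bar B^0_s(a_+)$ is finite, hence $\sum_{a\in \bar B^0_s(a_+)}\pi_t(a|s)\to0$.

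\textbf{Step 2: nonemptiness and mass concentration.} Combining Step 1 with $\sum_{a\in I^0_s}\pi_t(a|s)\to 1$ and $I^0_s=B^0_s(a_+)\cup\bar B^0_s(a_+)$ gives $\sum_{a\in B^0_s(a_+)}\pi_t(a|s)\to 1$. In particular $B^0_s(a_+)\neq\emptyset$, because an empty sum cannot tend to $1$.

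\textbf{Step 3: divergence of $\max\theta$.} This is the step where the update structure matters, and I expect it to be the main obstacle. For $t>T_1$ we have $A^t(s,a_+)>\Delta/4>0$ by Lemma~\ref{lemma:A_bound}, so the update $\theta_{t+1}(s,a_+)=\theta_t(s,a_+)+\eta d(s)\pi_t(a_+|s)A^t(s,a_+)$ makes $\theta_t(s,a_+)$ nondecreasing after $T_1$. It is therefore bounded below by a constant $c$.

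The ratio $\pi_t(a_+|s)/\pi_t(a|s)=\exp(\theta_t(s,a_+)-\theta_t(s,a))$ is the same for every $a$, so we can compare logits. Pick $a\in B^0_s(a_+)$ with the largest logit. Since $\sum_{a'\in B^0_s(a_+)}\pi_t(a'|s)\to1$ while $\pi_t(a_+|s)\to0$, we have
\[
\max_{a\in B^0_s(a_+)}\pi_t(a|s)\ge \frac{1}{|\mathcal{A}|}\sum_{a'\in B^0_s(a_+)}\pi_t(a'|s)
\]
for large $t$, which is bounded away from $0$. Hence
\[
\exp\Big(\max_{a\in B^0_s(a_+)}\theta_t(s,a)-\theta_t(s,a_+)\Big)=\frac{\max_{a\in B^0_s(a_+)}\pi_t(a|s)}{\pi_t(a_+|s)}\to\infty .
\]
So $\max_{a\in B^0_s(a_+)}\theta_t(s,a)-\theta_t(s,a_+)\to\infty$. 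Adding the lower bound $\theta_t(s,a_+)\ge c$ yields $\max_{a\in B^0_s(a_+)}\theta_t(s,a)\to\infty$.

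Two points need care. First, the monotonicity of $\theta_t(s,a_+)$ depends on the sign of $A^t(s,a_+)$ holding for all $t>T_1$, which Lemma~\ref{lemma:A_bound} provides. Second, the indices $T_0$ and $T_1$ must be aligned so that Lemma~\ref{lemma:47} and Lemma~\ref{lemma:A_bound} apply simultaneously; taking $t>\max(T_0,T_1)$ suffices.
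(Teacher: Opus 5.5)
Your proof is correct and follows the same route as the argument of Lemma 48 in \cite{agarwal2021theory}, which the paper cites in place of a proof. Actions in $\bar B^0_s(a_+)$ are eventually dominated by $\pi_t(a_+|s)\to 0$, so the mass of $I^0_s$ concentrates on $B^0_s(a_+)$; the ratio $\pi_t(a|s)/\pi_t(a_+|s)\to\infty$ together with a lower bound on $\theta_t(s,a_+)$ then gives divergence of the maximal logit, and you obtain that lower bound directly from the sign of $A^t(s,a_+)$ in $\theta_{t+1}(s,a)=\theta_t(s,a)+\eta\, d(s)\pi_t(a|s)A^t(s,a)$ rather than citing Lemma~\ref{thm:thetaseq}, which the paper only states afterward, so the argument stays non-circular.
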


The following lemma also establishes useful properties about the sequence $\theta_t(s,a)$ depending on whether $a$ is in $I_s^+$ or in $I_s^-$.

\begin{lemma}{[Lemma 50 of \cite{agarwal2021theory}]}\label{thm:thetaseq}
For all actions $a_+ \in I_s^+$, the sequence $\theta_t(s,a)$ is bounded from below as
$t \to \infty$. Moreover, for all actions $a_- \in I_s^-$, we have that $\theta_t(s,a) \rightarrow -\infty$ as $t \rightarrow \infty$.
\end{lemma}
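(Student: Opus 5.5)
We work directly with the softmax update from Appendix~\ref{app:spg-softmax}: for every $t$,
\[
\theta_{t+1}(s,a)=\theta_t(s,a)+\eta\, d(s)\,\pi_t(a|s)\,A^t(s,a).
\]
The plan uses only the sign information on $A^t$ from Lemma~\ref{lemma:A_bound}, valid for $t>T_1$, together with the vanishing results of Part~1.

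\textbf{First claim: $\theta_t(s,a_+)$ is bounded below for $a_+\in I^+_s$.} For $t>T_1$, Lemma~\ref{lemma:A_bound} gives $A^t(s,a_+)>\Delta/4>0$. The increment $\eta d(s)\pi_t(a_+|s)A^t(s,a_+)$ is therefore strictly positive, so $\theta_t(s,a_+)$ is non-decreasing for $t>T_1$. Hence it is bounded below by $\min_{t\le T_1+1}\theta_t(s,a_+)$, which is a minimum over finitely many values and so is finite.

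\textbf{Second claim: $\theta_t(s,a_-)\to-\infty$ for $a_-\in I^-_s$.} For $t>T_1$ we have $A^t(s,a_-)<-\Delta/4$, so $\theta_t(s,a_-)$ is strictly decreasing and has a limit in $[-\infty,\infty)$. Suppose, for contradiction, that the limit is finite. Then $\theta_t(s,a_-)\ge L$ for some finite $L$ and all $t$.

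I would get the contradiction from the normalizer. Lemma~\ref{lemma:48} (which presupposes $I^+_s\neq\emptyset$, the case under contradiction in Part~2) gives $\max_{a\in B^0_s(a_+)}\theta_t(s,a)\to\infty$. Since $\pi_t(a_-|s)=\exp(\theta_t(s,a_-))/\sum_b\exp(\theta_t(s,b))$, this yields
\[
\pi_t(a_-|s)\le \exp\bigl(\theta_t(s,a_-)\bigr)\exp\bigl(-\max_b\theta_t(s,b)\bigr)\to 0,
\]
but it does not by itself force $\theta_t(s,a_-)\to-\infty$. The more robust route, and the one in Agarwal et al., avoids the normalizer entirely. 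Because $\sum_a \pi_t(a|s)A^t(s,a)=0$, the softmax updates preserve $\sum_a\theta_t(s,a)$. Here $\theta_t(s,a_+)$ is bounded below and $\max_{a\in B^0_s}\theta_t(s,a)\to\infty$, but $I^0_s$ parameters could still drift arbitrarily, so conservation of the sum alone is insufficient. I therefore expect this step to be the main obstacle.

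\textbf{Plan for the obstacle.} Sum the decrements:
\[
\theta_t(s,a_-)-\theta_{T_1}(s,a_-)\le -\eta\,d(s)\,\frac{\Delta}{4}\sum_{\tau=T_1}^{t-1}\pi_\tau(a_-|s).
\]
If $\theta_t(s,a_-)\ge L$, then $\sum_\tau\pi_\tau(a_-|s)<\infty$. Next compare with some $a_0\in B^0_s(a_+)$. For large $t$,
\[
\theta_{t+1}(s,a_0)-\theta_t(s,a_0)=\eta d(s)\pi_t(a_0|s)A^t(s,a_0),
\]
and $V^t(s)=\sum_a\pi_t(a|s)Q^t(s,a)$ gives
\[
\sum_{a\in I^0_s}\pi_t(a|s)A^t(s,a)=-\sum_{a\in I^+_s\cup I^-_s}\pi_t(a|s)A^t(s,a).
\]
The $I^0_s$ actions collectively gain only what the non-$I^0_s$ actions lose. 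Since $\theta_t(s,a_-)\ge L$ while $\max_b\theta_t(s,b)\to\infty$, the ratio $\pi_t(a_-|s)/\pi_t(a_0|s)$ tends to $0$ for the leading $a_0$. Moreover $A^t(s,a_0)\to0$ and $|A^t(s,a_-)|$ is bounded.

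I would try to show that $\sum_t \pi_t(a_+|s)A^t(s,a_+)<\infty$ and that the total upward drift of $\max_{a\in B^0_s}\theta_t(s,a)$ is controlled by $\sum_t\sum_{a\notin I^0_s}\pi_t(a|s)|A^t(s,a)|$. Finiteness of these sums contradicts $\max_{a\in B^0_s(a_+)}\theta_t(s,a)\to\infty$. If this bookkeeping fails, the fallback is to mimic Agarwal et al.'s Lemma~50 argument verbatim, replacing $\nabla_\theta V^\pi$ by $g_t(\theta_t)$. Their argument likewise uses only the sign of $A^t$ and the softmax form of the update, so it should transfer.
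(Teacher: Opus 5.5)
Your first claim is correct and is the same argument as in Agarwal et al. By Lemma~\ref{lemma:A_bound}, the increments $\eta d(s)\pi_t(a_+|s)A^t(s,a_+)$ are positive for $t>T_1$.

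The gap is in the second claim: the bookkeeping you sketch cannot close. For $a\in I^0_s$ the individual increments $\eta d(s)\pi_t(a|s)A^t(s,a)$ have no known sign. Only their $\pi$-weighted sum is tied to $-\sum_{a\notin I^0_s}\pi_t(a|s)A^t(s,a)$. Nothing in your bookkeeping therefore stops two actions of $I^0_s$ from trading parameter mass indefinitely, one drifting to $+\infty$ and another to $-\infty$. Such trading contributes nothing to $\sum_t\sum_{a\notin I^0_s}\pi_t(a|s)|A^t(s,a)|$, so that quantity does not control the growth of $\max_{a\in B^0_s(a_+)}\theta_t(s,a)$. Your other target, $\sum_t\pi_t(a_+|s)A^t(s,a_+)<\infty$, is equivalent (for $t>T_1$) to $\theta_t(s,a_+)$ being bounded above, and nothing available gives you that. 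Your fallback, transcribing Agarwal et al.'s proof with $g_t(\theta_t)$ in place of $\nabla_\theta V$, is indeed what the paper does, but you have not supplied the step. That proof also uses more than the sign of $A^t$. It needs $A^t(s,a_-)<-\Delta/4$, the bound $|A^t(s,a)|\le 1/(1-\gamma)$, and conservation of $\sum_b\theta_t(s,b)$, which holds here because $V^t(s)=\sum_a\pi_t(a|s)Q^t(s,a)$.

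The missing idea is that the conservation law you set aside is exactly what produces the right comparison action. Together with $\max_b\theta_t(s,b)\to\infty$ (Lemma~\ref{lemma:48}), it forces $\min_b\theta_t(s,b)\to-\infty$. Hence there is some $a'\neq a_-$ with $\liminf_t\theta_t(s,a')=-\infty$, and you should compare $a'$ with $a_-$, not with the leading action.

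Fix $\delta>0$. Whenever $\theta_t(s,a')<L-\delta$, you have $\pi_t(a'|s)=e^{\theta_t(s,a')-\theta_t(s,a_-)}\pi_t(a_-|s)\le e^{-\delta}\pi_t(a_-|s)$. So the one-step decrease of $\theta(s,a')$ is at most $\eta d(s)e^{-\delta}\pi_t(a_-|s)/(1-\gamma)$. Your own summation gives $\eta d(s)\frac{\Delta}{4}\sum_{t>T_1}\pi_t(a_-|s)\le\theta_{T_1+1}(s,a_-)-L$. Therefore, during any excursion below $L-\delta$ after $T_1$, the total descent of $\theta(s,a')$ is at most $\frac{4e^{-\delta}}{(1-\gamma)\Delta}\bigl(\theta_{T_1+1}(s,a_-)-L\bigr)$. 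Each excursion starts from a bounded level, because a single step moves a parameter by at most $\eta/(1-\gamma)$. Thus $\theta_t(s,a')$ is bounded below, contradicting $\liminf_t\theta_t(s,a')=-\infty$. This is Agarwal et al.'s $\tau(t)$, $Z_t$ construction.

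A compact variant of the same idea also works. From $\theta_t(s,a_-)\ge L$ and $\sum_t\pi_t(a_-|s)<\infty$ you get $\sum_t\bigl(\sum_b e^{\theta_t(s,b)}\bigr)^{-1}<\infty$. This makes every $\theta_t(s,b)$ bounded below, hence bounded above by conservation, which contradicts $\max_b\theta_t(s,b)\to\infty$.
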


Based on Lemma \ref{thm:thetaseq}, we must have that for any $a_-\in I^-_s$, $\frac{\pi(a_-|s)}{\pi(a_+|s)}=\exp(\theta(s,a_-)-\theta(s,a_+))\rightarrow 0$ since $\theta(s,a_+)$ is lower bounded and $\theta(s,a_-)\rightarrow -\infty$. There must therefore exist a $T_2>T_0$ such that for all $t>T_2$
\[ \frac{\pi_t(a_-|s)}{\pi_t(a_+|s)}<\frac{(1-\gamma)\Delta}{16|\mathcal{A}|},\]
or, equivalently,
\begin{equation}
\label{eq:a-}
    -\sum_{a\in I^-_s} \frac{\pi_t(a_-|s)}{1-\gamma} > -\pi_t(a_+|s) \frac{\Delta}{16}.
\end{equation}

For $\bar{a}\in \bar{B}^0_s$, we have $A^t(s,\bar{a})\rightarrow 0$ by the definition of $I^0_s$. From Lemma~\ref{lemma:47}, there must exist a $T_{a_+}$ such that when $t>T_{a_+}$, $\pi_t(a_+|s)\geq \pi_t(\bar{a}|s), \forall \bar{a}\in \bar{B}^0_s$. Thus, there exists a $T_3 > \max(T_2, T_{a_+})$ such that
\[ |A^t(s,\bar{a})|<\frac{\Delta}{16|\mathcal{A}|}\leq\frac{\pi_t(a_+|s)}{\pi_t(\bar{a}|s)}\frac{\Delta}{16|\mathcal{A}|},\;\forall \bar{a}\in \bar{B}^0_s,\;\forall t>T_3,\]
which implies
\[ \sum_{\bar{a}\in \bar{B}^0_s(a_+)} \pi_t(\bar{a}|s) |A^t(s,\bar{a})| < \pi_t(a_+|s) \frac{\Delta}{16},\;\forall t>T_3.\]
We thus obtain
\begin{equation}
\label{eq:a-bar}
    -\pi_t(a_+|s)\frac{\Delta}{16}<\sum_{\bar{a}\in \bar{B}^0_s(a_+)} \pi_t(\bar{a}|s) A^t(s,\bar{a}) < \pi_t(a_+|s) \frac{\Delta}{16},\;\forall t>T_3.
\end{equation}

Putting the different bounds together, we get that for all $t>T_3$,
\[
\begin{aligned}
&0=\sum_{a \in \mathcal{A}} \pi_t(a|s)A^t(s,a)\\
&=\sum_{a\in I^0_s} \pi_t(a|s)A^t(s,a)+\sum_{a_+\in I^+_s} \pi_t(a_+|s)A^t(s,a_+) + \sum_{a_-\in I^-_s} \pi_t(a_-|s)A^t(s,a_-)\\
&\overset{(a)}{\geq} \sum_{a\in B^0_s(a_+)} \pi_t(a|s)A^t(s,a) + \sum_{\bar{a}\in \bar{B}^0_s(a_+)} \pi_t(\bar{a}|s)A^t(s,\bar{a}) + \pi_t(a_+|s) A^t(s,a_+)\\
&~~~~~~~~~~~~~~~~~~~~~~~~~~~~~~~~~~~~~~~~~~~~~+ \sum_{a_-\in I^-_s}\pi_t(a_-|s)A^t(s,a_-)\\
&\overset{(b)}{\geq} \sum_{a\in B^0_s(a_+)}\pi_t(a|s)A^t(s,a) + \sum_{\bar{a}\in \bar{B}^0_s(a_+)}\pi_t(\bar{a}|s)A^t(s,\bar{a}) + \pi_t(a_+|s)\frac{\Delta}{4} - \sum_{a_-\in I^-_s} \frac{\pi_t(a_-|s)}{1-\gamma}\\
&\overset{(c)}{>}\sum_{a\in B^0_s} \pi_t(a|s)A^t(s,a) -\pi_t(a_+|s)\frac{\Delta}{16} + \pi^t(a_+|s)\frac{\Delta}{4} - \pi_t(a_+|s)\frac{\Delta}{16}\\
&> \sum_{a\in B^0_s(a_+)} \pi_t(a|s)A^t(s,a),
\end{aligned}
\]
where (a) uses $A^t(s,a_+)>0, \forall a_+\in I^+_s$ and only one $a_+$ is chosen; (b) uses $A^t(s,a_+)\geq \frac{\Delta}{4}$ for $t > T_1$ from Lemma~\ref{lemma:A_bound}, and $A^t(s,a_-)\geq -\frac{1}{1-\gamma}$; (c) uses Eq.~\ref{eq:a-} and left inequality in Eq.~\ref{eq:a-bar}. This implies that for all $t > T_3$
\begin{equation}
\label{eq:grad_a}
    \sum_{a\in B^0_s(a_+)} [g_t(\theta_t)](s,a) = \sum_{a\in B^0_s(a_+)} d(s)\pi_t(a|s)A^t(s,a) < 0.
\end{equation}

Since for $a\in B^0_s(a_+), \pi_t(a_+|s)<\pi_t(a|s)$ for all $t>T_0$. This implies by the softmax parameterization that $\theta_t(s,a_+)<\theta_t(s,a)$. Since $\theta_t(s,a_+)$ is lower bounded (Lemma \ref{thm:thetaseq}), this implies $\theta_t(s,a)$ is also lower bounded. This in conjunction with $\max_{a\in B^0_s(a_+)}\theta_t(s,a)\rightarrow \infty$ (Lemma~\ref{lemma:48}) implies that $\sum_{a\in B^0_s(a_+)}\theta_t(s,a)\rightarrow \infty$. This contradicts with Eq.~\ref{eq:grad_a} that the sum of updates for these same parameters is negative for all $t>T_3$. This contradiction completes the proof that $I^+_s=\emptyset$.

\subsection{Joint Elicitability of CVaR}
\label{app:joint-elic-cvar}
Corollary 5.5 of \cite{fissler2016higher} gives a general form of loss function for $(\mathrm{VaR}_\alpha, \mathrm{CVaR}_\alpha)$,
\[
\begin{aligned}
&\mathrm{VaR}_\alpha(\tilde{x}),\mathrm{CVaR}_\alpha(\tilde{x})=\arg\min_{y_1,y_2}\mathbb{E}[L(\tilde{x},y_1,y_2)], ~\mathrm{with}\\
&L(x,y_1,y_2)=\Big(\mathbb{I}_{\{x\leq y_1\}}-\alpha\Big)G_1(y_1) - \mathbb{I}_{\{x\leq y_1\}} G_1(x) + G_2(y_2)\Big(y_2-y_1+\frac{1}{\alpha} \mathbb{I}_{\{x\leq y_1\}}(y_1-x)\Big) - G_3(y_2) + \phi(x),
\end{aligned}
\]
where $G_1$ is increasing, $G'_3=G_2$, $G_3$ is strictly increasing and strictly convex. The partial derivatives of $L$ are
\[
\begin{aligned}
\frac{\partial L}{\partial y_1}&= (\mathbb{I}\{x\leq y_1\}-\alpha)\big(G'_1(y_1) + \frac{G_2(y_2)}{\alpha}\big)\\
\frac{\partial L}{\partial y_2}&=G'_2(y_2)\big(y_2-y_1 + \frac{1}{\alpha}\mathbb{I}\{x\leq y_1\}(y_1-x)\big).
\end{aligned}
\]
Note that in $\frac{\partial L}{\partial y_1}$, the $\mathbb{I}\{x\leq y_1\}-\alpha$ term is the gradient of quantile regression. But it also contains a term that depends on the value of $y_2$ (which is the CVaR estimator). In $\frac{\partial L}{\partial y_2}$, $\mathbb{E}[y_2 - y_1 + \frac{1}{\alpha}\mathbb{I}\{\tilde{x}\leq y_1\}(y_1-\tilde{x})] = y_2 - \mathrm{CVaR}_\alpha(\tilde{x})$ if $y_1=\mathrm{VaR}_\alpha(\tilde{x})$, which directly quantifies the difference between $y_2$ and $\mathrm{CVaR}_\alpha(\tilde{x})$, but it also contains a term  $G'_2(y_2)$. This motivates us to investigate if those additional terms can be removed so as to improve the stability of gradient updates.

In practice, quantile is elicitable via quantile regression loss, and $\alpha$-CVaR can be written as an expectation if $\alpha$-quantile is known (cf. Eq.~\ref{eq:def-cvar}), i.e.,
\[
\begin{aligned}
    \mathrm{VaR}_\alpha(\tilde{x}) &= \arg\min_{y_1} \mathbb{E}[l_\alpha(\tilde{x}-y_1)],~~l_\alpha(x-y):=(\alpha-\mathbb{I}\{x< y\})(x-y)\\
    \mathrm{CVaR}_\alpha(\tilde{x}) &= \arg\min_{y_2}\frac12\Big(y_2 - \big(\mathrm{VaR}_\alpha-\frac{1}{\alpha}\mathbb{E}[(\mathrm{VaR}_\alpha-\tilde{x})_+]\big)\Big)^2.
\end{aligned}
\]
This suggests to update $y_1$ and $y_2$ via
\[
\begin{aligned}
    y_1&\leftarrow y_1 - \zeta_1 \cdot \partial_{y_1} \mathbb{E}[l_\alpha(\tilde{x}-y_1)] = y_1 - \zeta_1 \cdot \mathbb{E}[\mathbb{I}\{\tilde{x}< y_1\}-\alpha]\\
    y_2 &\leftarrow y_2 - \zeta_2\cdot \big(y_2 - (y_1 - \frac{1}{\alpha}\mathbb{E}[(y_1 - \tilde{x})_+])\big).
\end{aligned}
\]
When using a single sample $x$ of $\tilde{x}$, the update rules become
\[
\begin{aligned}
    y_1&\leftarrow  y_1 - \zeta_1 \cdot (\mathbb{I}\{\tilde{x}< y_1\}-\alpha)\\
    y_2 &\leftarrow y_2 - \zeta_2\cdot \big(y_2 - y_1 + \frac{1}{\alpha}(y_1 - \tilde{x})_+\big).
\end{aligned}
\]
Comparing with the gradient of the joint elicitable loss function, the terms $G'_1(y_1)+G_2(y_2)/\alpha$ and $G'_2(y_2)$ are removed. We will show in Proposition \ref{prop:cvar} that proper convergence guarantees of this learning procedure can be obtained using two-time-scale stochastic optimization arguments.

\subsection{Proof of Proposition~\ref{prop:expectile}}
The convergence analysis follows the framework presented in Sec. 4 of \cite{bertsekas1996neuro} and is similar to \cite{shen2014risk}. We first recall the standard operator convergence results in \cite{bertsekas1996neuro}.

Consider the update rule
\begin{equation}
\label{eq:general-update}
    x_{t+1}(i) = (1-\beta_t(i))x_t(i) + \beta_t(i)[(Hx_t)(i)+w_t(i)],
\end{equation}

where $x_t\in\mathbb{R}^d$, $H:\mathbb{R}^d\rightarrow \mathbb{R}^d$ is an operator, $w_t$ denotes some random noise and $\beta_t$ is learning rate. Denote by $\mathcal{F}_t$ the history of the update up to time $t$,
\[
\mathcal{F}_t:=\{x_0,...,x_t, w_0,...,w_t,\beta_0,...,\beta_t\}
\]
\begin{proposition}{[Proposition 4.4 of \cite{bertsekas1996neuro}]}
\label{prop:single-time-scale-convergence}
    Let $x_t$ be the sequence generated by the iteration~\ref{eq:general-update}. Assume that
    
    1. The learning rates $\beta_t(i)$ are non-negative and satisfy
    \[
    \sum_{t=0}^\infty \beta_t(i)=\infty, ~\sum_{t=0}^\infty \beta^2_t(i)<\infty, ~\forall i
    \]

    2. The noise terms $w_t(i)$ satisfy 1) for all $i$ and $t$, $\mathbb{E}[w_t(i)|\mathcal{F}_i]=0$; 2) There exists a norm $||\cdot||$ on $\mathbb{R}^d$ and $a,b\in\mathbb{R}$ such that
    \[
    \mathbb{E}[w_t^2(i)|\mathcal{F}_t]\leq a + b ||Q_t||^2.
    \]
    
    3. The operator $H$ in~\ref{eq:general-update} is a contraction under sup-norm.

    Then $x_t$ converges to $x^*$, a fixed point of $H$ (i.e., $Hx^*=x^*$) with probability 1.
\end{proposition}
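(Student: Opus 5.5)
The statement is Proposition 4.4 of \cite{bertsekas1996neuro}, which the paper invokes as a black box. To prove it directly I would use the classical ``contraction plus vanishing noise'' argument. Let $\gamma_H<1$ be the sup-norm contraction modulus of $H$ and $x^*$ its unique fixed point. The plan has two stages: first show that the iterates $x_t$ stay bounded almost surely, then show that a shrinking sequence of sup-norm boxes around $x^*$ eventually traps $x_t$. Throughout I read the variance condition as $\mathbb{E}[w_t^2(i)\mid\mathcal{F}_t]\le a+b\|x_t\|^2$ and the martingale condition as $\mathbb{E}[w_t(i)\mid\mathcal{F}_t]=0$.

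\textbf{Noise lemma.} The basic tool concerns the scalar recursion
\[
W_{t+1}(i)=(1-\beta_t(i))W_t(i)+\beta_t(i)w_t(i),\qquad W_{t_0}(i)=0.
\]
If the conditional variances of $w_t(i)$ are bounded by a constant on an event, then $W_t(i)\to 0$ almost surely on that event. I would prove this by expanding $\mathbb{E}[W_{t+1}^2\mid\mathcal{F}_t]$, using $\sum_t\beta_t^2<\infty$ together with the Robbins--Siegmund supermartingale convergence theorem to get convergence of $W_t^2$, and then using $\sum_t\beta_t=\infty$ to force the limit to be $0$.

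\textbf{Boundedness.} Since $H$ is a sup-norm contraction, $\|Hx\|_\infty\le\gamma_H\|x\|_\infty+\|Hx^*-\gamma_H x^*\|_\infty$. Choose $\gamma_H<\bar\gamma<1$ and $G_0$ large enough that $\|Hx\|_\infty\le\bar\gamma\|x\|_\infty$ whenever $\|x\|_\infty\ge G_0$. Define a scaling sequence $G_t$ that is enlarged (multiplied by $(1+\epsilon)^k$) only when $\|x_t\|_\infty$ exceeds the current level, and consider the scaled noise $\tilde w_t=w_t/G_t$. The variance condition gives $\mathbb{E}[\tilde w_t^2\mid\mathcal{F}_t]\le a'+b'$, a uniform bound, so by the noise lemma the scaled noise accumulator tends to $0$. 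Once that accumulator is below $\epsilon$, the update can push $\|x_t\|_\infty$ only up to $(\bar\gamma+\epsilon)G_t<G_t(1+\epsilon)$. Hence $G_t$ is increased only finitely often, and $\sup_t\|x_t\|_\infty<\infty$ almost surely. I expect this to be the main obstacle, because the variance bound grows with $\|x_t\|^2$ and the renormalization bookkeeping must be done carefully.

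\textbf{Convergence.} With boundedness in hand, the conditional variances are bounded by some random constant, so the noise lemma applies to the unscaled noise. Set $D_0=\sup_t\|x_t-x^*\|_\infty$ and $D_{k+1}=(\gamma_H+\epsilon)D_k$ with $\gamma_H+\epsilon<1$. Inductively, suppose $\|x_t-x^*\|_\infty\le D_k$ for all $t\ge\tau_k$. Compare $x_t-x^*$ coordinatewise with $Y_t\pm W_t$, where
\[
Y_{t+1}=(1-\beta_t)Y_t+\beta_t\gamma_H D_k,\qquad Y_{\tau_k}=D_k.
\]
Then $Y_t\to\gamma_H D_k$ because $\sum_t\beta_t=\infty$, and $W_t\to 0$ by the noise lemma. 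So after some $\tau_{k+1}$ we get $\|x_t-x^*\|_\infty\le D_{k+1}$. Since $D_k\to 0$, this gives $x_t\to x^*$ almost surely.
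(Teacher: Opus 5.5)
The paper gives no proof of this statement; it imports Proposition 4.4 from Bertsekas and Tsitsiklis as a black box. Your outline is correct and is essentially the proof given in that reference: a noise lemma via supermartingale convergence, almost-sure boundedness through the $G_t$ rescaling, then shrinking sup-norm boxes via the $Y_t\pm W_t$ comparison.

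The one slip is the affine bound, which should read $\|Hx\|_\infty\le\gamma_H\|x\|_\infty+\|H0\|_\infty$. Your constant $(1-\gamma_H)\|x^*\|_\infty$ fails, for example, for $Hx=(1+\gamma_H)x^*-\gamma_H x$ at $x=0$, but any finite constant suffices for the rest of the argument.
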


To prove Proposition~\ref{prop:expectile}, we first reformulate the updating rule in Eq.~\ref{eq:expectile-qlearning} as
\[
Q_{t+1}(s,a) = Q_t(s,a)+2\zeta_t(s,a)\cdot u(r+\gamma V_t(s')-Q_t(s,a)),
\]
where $u(x)=(1-\alpha)(x)_{-}+\alpha(x)_+$ with $\alpha\in(0,1)$. It holds that $0<\epsilon<\frac{u(x)-u(y)}{x-y}\leq L$ with $\epsilon:=\min(\alpha,1-\alpha)$ and $L:=\max(\alpha,1-\alpha)$ when $x\neq y$. 

To align with Eq.~\ref{eq:general-update}, the update rule can be further expressed as
\[
Q_{t+1}(s,a) = (1-\frac{2\zeta_t(s,a)}{l})Q_t(s,a) + \frac{2\zeta_t(s,a)}{l}[l\cdot u(r+\gamma V_t(s')-Q_t(s,a))+Q_t(s,a)],
\]
where $l\in(0,\min(L^{-1},1)]$ (The range of $l$ is chosen to obtain the contraction of $H^\pi$ in the following).

With this formulation, define the operator $H^\pi$ as 
\begin{equation}
\label{eq:H-expectile}
\begin{aligned}
    (H^\pi Q)(s,a)&=l\cdot \sum_{r,s'}P(r,s'|s,a)u(r+\gamma V(s')-Q(s,a)) + Q(s,a)\\
    &\mathrm{with}~V(s')=\sum_{a'}\pi(a'|s')Q(s',a'),
\end{aligned}
\end{equation}
and noise $w_t$ as
\begin{equation}
\label{eq:noise-expectile}
w_t(s,a) = l \cdot u(r+\gamma V_t(s')-Q_t(s,a))-l\cdot\mathbb{E}_{r,s'} \big[u(r+\gamma V_t(s')-Q_t(s,a))\big].
\end{equation}

\textbf{First}, show that $H^\pi$ in Eq.~\ref{eq:H-expectile} is a contraction. We start with
\[
\begin{aligned}
|V_1(s) - V_2(s)| &= |\sum_a \pi(a|s)(Q_1(s,a)-Q_2(s,a))|\leq \sum_a \pi(a|s)|Q_1(s,a)-Q_2(s,a)|\\
&\leq \sum_a \pi(a|s)\|Q_1-Q_2\|_\infty = \|Q_1-Q_2\|_\infty.
\end{aligned}
\]
Next, we can show
\[
\begin{aligned}
(H^\pi Q_1)(s,a) - (H^\pi Q_2)(s,a)=&l\cdot \sum_{r,s'}P(r,s'|s,a)\big(u(r+\gamma V_1(s')-Q_1(s,a))-u(r+\gamma V_2(s')-Q_2(s,a))\big)\\
&+ Q_1(s,a) - Q_2(s,a)\\
=& l\cdot \sum_{r,s'} P(r,s'|s,a)\xi_{(s,a,r,s',Q_1,Q_2)}\big(\gamma V_1(s')-\gamma V_2(s')-Q_1(s,a)+Q_2(s,a)\big)\\
&+ Q_1(s,a) - Q_2(s,a)\\
\overset{(a)}{=}&l\gamma \sum_{r,s'}P(r,s'|s,a)\xi_{(s,a,r,s',Q_1,Q_2)} (V_1(s')-V_2(s'))\\
&+\big(1-l\sum_{r,s'}P(r,s'|s,a)\xi_{(s,a,r,s',Q_1,Q_2)}\big)\big(Q_1(s,a)-Q_2(s,a)\big)\\
\overset{(b)}{\leq}&l\gamma \sum_{r,s'}P(r,s'|s,a)\xi_{(s,a,r,s',Q_1,Q_2)} \|Q_1-Q_2\|_\infty\\
&+\big(1-l\sum_{r,s'}P(r,s'|s,a)\xi_{(s,a,r,s',Q_1,Q_2)}\big)\|Q_1-Q_2\|_\infty\\
=& \big(1-l(1-\gamma)\sum_{r,s'}P(r,s'|s,a)\xi_{(s,a,r,s',Q_1,Q_2)}\big)\|Q_1-Q_2\|_\infty\\
\leq& (1-l(1-\gamma)\epsilon)\|Q_1-Q_2\|_\infty,
\end{aligned}
\]
where (a) is due to $0<\epsilon < \frac{u(x)-u(y)}{x-y}\leq L$, so there exists $\xi_{(x,y)}\in(\epsilon,L]$ such that $u(x)-u(y)=\xi_{(x,y)}(x-y)$; (b) is due to $V_1(s')-V_2(s')\leq\|Q_1-Q_2\|_\infty$ and $\big(1-l\sum_{r,s'}P(r,s'|s,a)\xi_{(s,a,r,s',Q_1,Q_2)}\big)\geq 0$ when $l\in(0,\min(L^{-1},1)]$. Therefore, $H^\pi$ is a contraction.

$Q^\pi$ is the fixed point of $H^\pi$, since when $Q=Q^\pi$, the gradient of the loss function $(1-\alpha)\mathbb{E}\big[(\tilde{r}+\gamma V(\tilde{s}')-y)^2_{-}\big] + \alpha \mathbb{E}\big[(\tilde{r}+V(\tilde{s}')-y)^2_{+}\big]$ is zero, which is $\mathbb{E}_{r,s'}[u(r+\gamma V(s')-Q(s,a))]=0$. As a result, Eq.~\ref{eq:H-expectile} remains $H^\pi Q^\pi = Q^\pi$.

\textbf{Second}, check the conditions of the noise term $w_t(s,a)$ in Eq.~\ref{eq:noise-expectile}. $\mathbb{E}[w_t(s,a)|\mathcal{F}_t]=0$ by its definition, thus condition 1) holds. Denote $\delta_t(s,a,r,s')=r + \gamma V_t(s')-Q_t(s,a)$,
\[
\mathbb{E}\big[w_t^2(s,a)|\mathcal{F}_t\big]=l^2\mathbb{E}_{r,s'}\big[u^2(\delta_t(s,a,r,s'))\big] - l^2\big(\mathbb{E}_{r,s'}\big[u(\delta_t(s,a,r,s'))\big]\big)^2\leq l^2\mathbb{E}_{r,s'}\big[u^2(\delta_t(s,a,r,s'))\big]
\]
Let $\bar{R}$ be the bound of $|\tilde{r}|$. Then $|\delta_t(s,a,r,s')|\leq \bar{R} + 2\|Q_t\|_\infty$, which implies $|u(\delta_t(s,a,r,s'))-u(0)|\leq L|\delta_t(s,a,r,s')-0|\leq L(\bar R + 2\|Q_t\|_\infty)$. Therefore
\[
\big(u(\delta_t(s,a,r,s'))\big)^2=\big(\big|u(\delta_t(s,a,r,s'))-u(0)\big|\big)^2\leq \big(L\bar R + 2L\|Q_t\|_\infty\big)^2\leq 2\big(L\bar R\big)^2 + 8L^2\|Q_t\|^2_\infty,
\]
where the last inequality is due to Cauchy-Schwarz inequality $(x+y)^2\leq 2x^2+2y^2$. As a result
\[
\mathbb{E}\big[w^2_t(s,a)|\mathcal{F}_t\big]\leq l^2\cdot\mathbb{E}_{r,s'}\big[u^2(\delta_t(s,a,r,s'))\big] \leq 2l^2\big(L\bar R\big)^2 + 8l^2L^2\|Q_t\|^2_\infty.
\]
Hence, condition 2) holds.

The contraction of $H^\pi$ and the conditions being satisfied by $w_t(s,a)$ indicate $Q_t(s,a)\rightarrow Q^\pi(s,a)$, which completes the proof for Proposition~\ref{prop:expectile}.

\subsection{Proof of Proposition~\ref{prop:cvar}}
The convergence analysis follows the two-time scale framework presented in Chapter 8 of \cite{borkar2023stochastic}. We first recall this convergence results.

Consider the iterations
\begin{equation}
\label{eq:borkar2008}
\begin{aligned}
x_{n+1} &= x_n + a_n [h(x_n,y_n) + M_{n+1}^{(1)}],\\
y_{n+1} &= y_n + b_n [g(x_n,y_n) + M_{n+1}^{(2)}],
\end{aligned}
\end{equation}
where $h:\mathbb{R}^{d_x+d_y}\rightarrow \mathbb{R}^{d_x}, g:\mathbb{R}^{d_x+d_y}\rightarrow \mathcal{R}^{d_y}$ ($d_x$ and $d_y$ represent the dimension of $x$ and $y$) are Lipschitz, and $\{M_n^{(1)}\},\{M_n^{(2)}\}$ are martingale difference sequences with respect to the increasing $\sigma$-fields
$$
\mathcal{F}_n := \sigma(x_m, y_m,M^{(1)}_m,M^{(2)}_m, m\leq n), n\geq0,
$$
satisfying
$$
\mathbb{E}[\|M^{(i)}_{n+1}\|^2 |\mathcal{F}_n] \leq K(1+\|x_n\|^2 + \|y_n\|^2), i=1,2.
$$
Stepsizes $\{a_n\},\{b_n\}$ are positive scalars satisfying 
$$
\sum_t a_n = \sum_t b_n =\infty, \sum_t(a_n^2 + b_n^2)<\infty, \frac{b_n}{a_n}\rightarrow 0. 
$$

\begin{proposition}{[Theorem 8.1 in Chapter 8 of \cite{borkar2023stochastic}]}
\label{prop:two-time-scale}
    Let $x_n$, $y_n$ be the sequences generated by iterations~\ref{eq:borkar2008}. Assume that
    
    1. $\dot{x}(t)=h(x(t),y)$ has a globally asymptotically stable equilibrium $\lambda(y)$, where $\lambda:\mathbb{R}^{d_y}\rightarrow \mathbb{R}^{d_x}$ is a Lipschitz map.

    2. $\dot{y}(t) = g(\lambda(y(t)), y(t))$ has a globally asymptotically stable equilibrium $y^*$.

    3. $\sup_n(\|x_n\|+\|y_n\|)<\infty$ \EDmodified{almost surely}.

    Then $(x_n,y_n)\rightarrow (\lambda(y^*),y^*)$ almost surely.
\end{proposition}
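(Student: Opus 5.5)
The statement is the classical two-time-scale result, and I would prove it with the ODE (asymptotic pseudotrajectory) method, handling the fast and slow iterates separately. Throughout I work on the almost-sure event where assumption 3 holds, so all iterates stay in a compact set $C$.

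\textbf{Step 1 (noise).} First I control the martingale noise. Since $\sup_n(\|x_n\|+\|y_n\|)<\infty$, the conditional second-moment bound gives $\mathbb{E}[\|M^{(i)}_{n+1}\|^2|\mathcal{F}_n]\le K'$ on the event where the iterates stay in $C$. I would make this rigorous with stopping times or a localization argument. Then $\sum_n a_n^2<\infty$ and $\sum_n b_n^2<\infty$, together with the $L^2$ martingale convergence theorem, show that $\sum_n a_n M^{(1)}_{n+1}$ and $\sum_n b_n M^{(2)}_{n+1}$ converge almost surely. In particular, their tail sums over any window of fixed ODE-time length tend to zero.

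\textbf{Step 2 (fast time scale).} On the fast scale I rewrite the slow recursion as
\[
y_{n+1}=y_n+a_n\Big[\tfrac{b_n}{a_n}\big(g(x_n,y_n)+M^{(2)}_{n+1}\big)\Big].
\]
Because $b_n/a_n\to0$ and $g$ is bounded on $C$, the bracketed term is an asymptotically vanishing perturbation. Hence the pair $(x_n,y_n)$ is a perturbed Euler scheme for the coupled ODE
\[
\dot x=h(x,y),\qquad \dot y=0,
\]
with step sizes $a_n$.

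I then form the piecewise-linear interpolation $\bar z(t)$ on the time grid $t_n=\sum_{m<n}a_m$. Using the Lipschitz continuity of $h$, Step 1, and Gronwall's inequality, I would show that for every $T>0$,
\[
\sup_{s\in[t,t+T]}\|\bar z(s)-z^{t}(s)\|\to0 \quad\text{as } t\to\infty,
\]
where $z^{t}$ is the ODE solution started at $\bar z(t)$.

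By Benaïm's theorem, the limit set of $\bar z$ is internally chain transitive for this ODE. Every such set is contained in $\{(\lambda(y),y)\}$, because for each frozen $y$ the point $\lambda(y)$ is globally asymptotically stable (assumption 1). This gives $\|x_n-\lambda(y_n)\|\to0$ almost surely. I expect this step to be the main obstacle. It requires care with the Arzelà–Ascoli compactness of the shifted trajectories, and with the claim that chain transitivity, combined with global asymptotic stability uniform over $y$ in the compact set $C$, forces the limit set into the graph of $\lambda$. I would first try the standard Lyapunov-function argument over compact sets of $y$ for this.

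\textbf{Step 3 (slow time scale).} On the slow scale I write
\[
y_{n+1}=y_n+b_n\big[g(\lambda(y_n),y_n)+\epsilon_n+M^{(2)}_{n+1}\big],\qquad \epsilon_n:=g(x_n,y_n)-g(\lambda(y_n),y_n).
\]
Since $g$ is Lipschitz, $\|\epsilon_n\|\le L_g\|x_n-\lambda(y_n)\|$, and this tends to zero by Step 2. The map $y\mapsto g(\lambda(y),y)$ is Lipschitz because both $\lambda$ and $g$ are.

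Repeating the interpolation argument on the grid $\sum_{m<n} b_m$ shows that $y_n$ is an asymptotic pseudotrajectory of $\dot y=g(\lambda(y),y)$. By assumption 2 the only internally chain transitive set of this ODE is $\{y^*\}$, so $y_n\to y^*$. Finally, continuity of $\lambda$ gives $x_n\to\lambda(y^*)$, which completes the argument.
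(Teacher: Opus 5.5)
The paper does not prove this result itself; it is quoted from Borkar's book. Your outline is essentially the standard proof given there: fast-scale tracking of $\dot x=h(x,y)$, $\dot y=0$ to get $\|x_n-\lambda(y_n)\|\to 0$, then slow-scale tracking of $\dot y=g(\lambda(y),y)$ with the vanishing perturbation $g(x_n,y_n)-g(\lambda(y_n),y_n)$, with the noise handled by square-integrable martingale convergence. The step you flag as the main obstacle is easier than you fear, because $y$ is frozen along trajectories: each $y$-slice of a compact invariant set of the coupled ODE is compact and invariant for $\dot x=h(x,y)$, and Lyapunov stability plus global attractivity of $\lambda(y)$ (applied at a limit point of a backward orbit) force that slice to be $\{\lambda(y)\}$, so you need neither uniformity in $y$ nor a converse-Lyapunov construction.
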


We reformulate the update rule for $q$ and $Q$ in Eq.~\ref{eq:cvar-qlearning} to align with Eq.~\ref{eq:borkar2008}. To distinguish with the $t$ in ODE, we use $n$ as the subscript for $q$ and $Q$ instead.
$$
\begin{aligned}
q_{n+1}(s,a) &= q_n(s,a) + \zeta_n^q(s,a)\big(h_{s,a}(q_n,Q_n) + M^q_{n+1}(s,a)\big),~\mathrm{with}\\
h_{s,a}(q_n,Q_n)&=-\Big(\mathbb{P}\big(\tilde{r}(s,a)+\gamma V_n(\tilde{s}')< q_n(s,a)\big)-\alpha\Big),\\
M_{n+1}^q(s,a) &=  \mathbb{P}\big(\tilde{r}(s,a)+\gamma V_n(\tilde{s}')< q_n(s,a)\big)-\mathbb{I}\{r+\gamma V_n(s')< q_n(s,a)\}.
\end{aligned}
$$

$$
\begin{aligned}
Q_{n+1}(s,a) &= Q_n(s,a) + \zeta_n^Q(s,a)(g_{s,a}(q_n,Q_n)+M^Q_{n+1}(s,a)), ~\mathrm{with}\\
g_{s,a}(q_n,Q_n) &= -\Big(Q_n(s,a)-q_n(s,a)+\frac{1}{\alpha}\mathbb{E}_{r,s'}\big[(q_n(s,a)-r-\gamma V_n(s'))_+\big]\Big),\\
M_{n+1}^Q(s,a) &=  \frac{1}{\alpha}\mathbb{E}_{r,s'}\big[(q_n(s,a)-r-\gamma V_n(s'))_+\big] - \frac{1}{\alpha}(q_n(s,a)-r-\gamma V_n(s'))_+,
\end{aligned}
$$
where $V_n(s')=\sum_{a'}\pi(a'|s')Q_n(s,a)$. 

\textbf{First}, check the Lipschitz of $h$ and $g$. 

For $h$, when $Q$ is fixed, since we assume reward has bounded density, $f_{\tilde{r}(s,a)+\gamma V(\tilde{s}')}(z) < M$, then
\[
\begin{aligned}
|h_{s,a}(q_1,Q)-h(q_2,Q)|&=|F_{\tilde{r}(s,a)+
\gamma V(\tilde{s}')}(q_2(s,a))-F_{\tilde{r}(s,a)+
\gamma V(\tilde{s}')}(q_1(s,a))|\\
&=\Big|\int^{q_2(s,a)}_{q_1(s,a)} f_{\tilde{r}(s,a)+\gamma V(\tilde{s}')}(z) dz\Big|\\
&\leq M|q_1-q_2|,
\end{aligned}
\]
where $F$ is the CDF and $f$ is the density function.

When $q$ is fixed, 
\[
\begin{aligned}
|h_{s,a}(q,Q_1)-h_{s,a}(q,Q_2)| &= |\mathbb{P}(\tilde{r}(s,a)+\gamma V_1(\tilde{s}')<q(s,a)) - \mathbb{P}(\tilde{r}(s,a)+\gamma V_2(\tilde{s}')<q(s,a))|\\
&=|F_{\tilde{r}(s,a)+\gamma V_1(\tilde{s}')}(q(s,a))-F_{\tilde{r}(s,a)+\gamma V_2(\tilde{s}')}(q(s,a))|
\end{aligned}
\]
Denote $\tilde{z}_1:=\tilde{r}(s,a)+\gamma V_1(\tilde{s}')$, and $\tilde{z}_2:=\tilde{r}(s,a)+\gamma V(\tilde{s}') $, we have $|\tilde{z}_1-\tilde{z}_2|=|\gamma V_1(\tilde{s}')-\gamma V_2(\tilde{s}')|\leq \|Q_1-Q_2\|_\infty:= \epsilon$ almost surely. Thus $\tilde{z}_2 < q-\epsilon$ ensures $\tilde{z}_1 < q$, which implies $F_{\tilde{z}_2}(q-\epsilon)\leq F_{\tilde{z}_1}(q)$. By the same logic, we have $F_{\tilde{z}_1}(q)\leq F_{\tilde{z}_2}(q+\epsilon)$. This leads to
\[
|F_{\tilde{z}_1}(q(s,a))-F_{\tilde{z}_2}(q(s,a))| \leq|F_{\tilde{z}_2}(q(s,a)+\epsilon)-F_{\tilde{z}_2}(q(s,a))|\leq M\|Q_1-Q_2\|_\infty.
\]
As a result, 
\[
\begin{aligned}
|h_{s,a}(q_1,Q_1)-h_{s,a}(q_2,Q_2)|&\leq |h_{s,a}(q_1,Q_1)-h_{s,a}(q_2,Q_1)|+|h_{s,a}(q_2,Q_1)-h_{s,a}(q_2,Q_2)|\\
&\leq M|q_1-q_2| + M\|Q_1-Q_2\|_\infty
\end{aligned}
\]

For $g$, we need to show its third term, i.e., $k_{s,a}(q,Q):=\mathbb{E}_{r,s'}[(q(s,a)-r-\gamma V(s'))_+]$ is Lipschitz.
\[
\begin{aligned}
    |k_{s,a}(q_1,Q_1)-k_{s,a}(q_2,Q_2)|=&|\mathbb{E}[(q_1(s,a)-r-\gamma V_1(s'))_+]-\mathbb{E}[(q_2(s,a)-r-\gamma V_2(s'))]|\\
    \leq& \mathbb{E}[|q_1-q_2 - \gamma(V_1(s')-V_2(s'))|]\\
    \leq& |q_1-q_2| + \gamma \mathbb{E}[|V_1(s')-V_2(s')|]\\
    \leq& |q_1-q_2| + \|Q_1-Q_2\|_\infty
\end{aligned}
\]

\textbf{Second}, check the conditions of the difference terms. 

$|M_{n+1}^q(s,a)|\leq 1$, thus $\mathbb{E}[M_{n+1}^q(s,a)^2|\mathcal{F}_n]\leq 1$. Denote $\delta_n(s,a,r,s')=q_n(s,a)-r-\gamma V_n(s')$, 
$$
\mathbb{E}[M_{n+1}^Q(s,a)^2|\mathcal{F}_n]=\frac{1}{\alpha^2}\mathbb{E}_{r,s'}[(\delta_n(s,a,r,s'))_+^2]-\frac{1}{\alpha^2}\big(\mathbb{E}_{r,s'}\big[(\delta_n(s,a,r,s'))_+\big]\big)^2\leq\frac{1}{\alpha^2}\mathbb{E}_{r,s'}\big[\big(\delta_n(s,a,r,s')\big)_+^2\big].
$$
Let $\bar R$ be the bound of $|\tilde{r}|$. Since $\big(\delta_n(s,a,r,s')\big)_+\leq|\delta_n(s,a,r,s')|=|q_n(s,a)-r-\gamma V(s')|\leq \|q_n\|_\infty + \bar R + \|Q_n\|_\infty$, 
$$
\mathbb{E}[M_{n+1}^Q(s,a)^2|\mathcal{F}_n]\leq \frac{1}{\alpha^2}\mathbb{E}_{r,s'}\big[\big(\delta_n(s,a,r,s')\big)_+^2\big]\leq \frac{3}{\alpha^2}(\bar R^2+\|q_n\|^2_\infty+\|Q_n\|_\infty^2),
$$
where the last inequality follows Cauchy-Schwarz inequality $(x + y +z)^2 \leq 3 (x^2 + y^2 + z^2)$.

Therefore, there exists $K$, such that $\mathbb{E}[M_{n+1}(s,a)^2|\mathcal{F}_n]\leq K(1+\|q_n\|^2_\infty + \|Q_n\|^2_\infty)$ for both $M^q_{n+1}$ and $M^Q_{n+1}$.

\textbf{Third}, check the other three conditions in Proposition~\ref{prop:two-time-scale}. Recall that we assume the random variable is continuous and the quantile is unique in Sec.~\ref{sec:prelim}.

1) When $Q$ is fixed (therefore $V$ is also fixed), for $\EDmodified{\dot{q}_{s,a}(t)}=-\Big(\mathbb{P}\big(\tilde{r}(s,a)+\gamma V(\tilde{s}')\leq \EDmodified{q_{s,a}(t)}\big)-\alpha\Big)$, setting $\EDmodified{\dot{q}_{s,a}(t)}=0$ results in the equilibrium $q^*(s,a)=\mathrm{VaR}_\alpha(\tilde{r}(s,a)+\gamma V(\tilde{s}'))$. Denote $\tilde{z}:=\tilde{r}(s,a)+\gamma V(\tilde{s}')$. It is clear that when $\EDmodified{q_{s,a}(t)} < q^*(s,a)$, $\EDmodified{\dot{q}_{s,a}(t)}>0$ and when $\EDmodified{q_{s,a}(t)}>q^*(s,a)$, $\EDmodified{\dot{q}_{s,a}(t)}<0$.  Consider the case when $\EDmodified{q_{s,a}(0)}<q^*(s,a)$. Since $\EDmodified{\dot{q}_{s,a}(t)}>0$, and $\EDmodified{q_{s,a}(t)}$ is upper bounded by $q^*(s,a)$, we have $\lim_{t\rightarrow \infty} \EDmodified{q_{s,a}(t)}=\bar{q}(s,a)$ with $\bar{q}(s,a)\leq q^*(s,a)$. Suppose $\bar{q}(s,a)<q^*(s,a)$, then $\exists \epsilon >0$, $\EDmodified{\dot{q}_{s,a}(t)}=\alpha - F_{\tilde{z}}\big(\EDmodified{q_{s,a}(t)}\big)>\alpha-F_{\tilde{z}}\big(\bar{q}(s,a)\big)>\epsilon$, where $F_{\tilde{z}}$ is a CDF \EDmodified{with unique quantiles}. Therefore, for all $t\geq 0$, $\EDmodified{q_{s,a}(t)}\geq \EDmodified{q_{s,a}(0)} + \epsilon \EDmodified{t}$. Since $\lim_{t\rightarrow \infty} \big(\EDmodified{q_{s,a}(0)}+\epsilon \EDmodified{t}\big)\rightarrow \infty$, this \EDmodified{contradicts the fact that} $\lim_{t\rightarrow \infty}\EDmodified{q_{s,a}(t)}=\bar{q}(s,a)\leq q^*(s,a)$. Hence, it can only be the case that $\lim_{t\rightarrow \infty}\EDmodified{q_{s,a}(t)}=\EDmodified{\bar{q}(s,a)}=q^*(s,a)$. The same argument holds when $\EDmodified{q_{s,a}(0)}>q^*(s,a)$. This indicates the global attractivity \EDmodified{of $q^*(s,a)$}.

For the Lyapunov stability, define the Lyapunov function 
\[
\mathscr{L}(q) = \int^q_{q^*} (F_{\tilde{z}}(z)-\alpha) dz.
\]
We have $\mathscr{L}(q)\geq 0$ and $\mathscr{L}(q)=0$ only if $q=q^*$, and $\dot{\mathscr{L}}(\EDmodified{q_{s,a}(t)})=-(F_{\EDmodified{\tilde{z}}}(\EDmodified{q_{s,a}(t)})-\alpha)^2 \leq 0$ \EDmodified{for all $t$}, which verifies the stability.

The global attractivity \EDmodified{of $q^*$} and Lyapunov stability together imply that $q^*$ is globally asymptotically stable.

For the $\lambda$ function, in our case,
\[
\lambda_{s,a}(Q)=\mathrm{VaR}_\alpha(\tilde{r}(s,a)+\gamma \mathbb{E}_{a'\sim\pi(\cdot|\tilde{s}')}[Q(\tilde{s}',a')]).
\]
we have \EDmodified{for all $(s,a)$}
\[
\begin{aligned}
&|\lambda_{s,a}(Q_1)-\lambda_{s,a}(Q_2)|\\
=&|\mathrm{VaR}_\alpha(\tilde{r}(s,a)+\gamma \mathbb{E}_{a'\sim\pi(\cdot|\tilde{s}')}[Q_1(\tilde{s}',a')])-\mathrm{VaR}_\alpha(\tilde{r}(s,a)+\gamma \mathbb{E}_{a'\sim\pi(\cdot|\tilde{s}')}[Q_2(\tilde{s}',a')])|\\
\leq&\EDmodified{\max_{s'}}|\gamma \mathbb{E}_{a'\sim\pi(\cdot|\EDmodified{s'})}[Q_1(\EDmodified{s'},a')]-\gamma \mathbb{E}_{a'\sim\pi(\cdot|\EDmodified{s'})}[Q_2(\EDmodified{s'},a')]|\\
\leq&\gamma\|Q_1-Q_2\|_\infty,
\end{aligned}
\]
which means \EDmodified{$\lambda(\cdot)$} is Lipschitz.

2) When $\EDmodified{q_{s,a}(t)}=\mathrm{VaR}_\alpha(\tilde{r}(s,a)+\gamma \EDmodified{V_{\tilde{s}'}(t)})$, 
\begin{equation}\label{eq:odeQ}
\EDmodified{\dot{Q}_{s,a}(t)= q_{s,a}(t)-\frac{1}{\alpha}\mathbb{E}_{r,s'}[(q_{s,a}(t)-r-\gamma V_{s'}(t))_+] - Q_{s,a}(t)= (\mathcal{T}^\pi Q(t))(s,a) -Q_{s,a}(t)}.
\end{equation}
Setting $\dot{Q}\EDmodified{(t)}=0$, the equilibrium \EDmodified{satisfies} $\mathcal{T}^\pi Q=Q$, which is $Q^\pi$. Showing $Q^\pi$ is globally asymptotically stable relies on the following theorem in \cite{borkar2023stochastic}
\begin{theorem}[Theorem 12.1 in Chapter 12 of \cite{borkar2023stochastic}]
\label{thm:gase}
    Consider iterations of the type
    \[
    x_{n+1} = x_n + a_n[F(x_n)-x_n + M_{n+1}].
    \]
    For $x=[x_1,...,x_d]\in\mathbb{R}^d$ and $w=[w_1,...,w_d]$, define
    \[
    \|x\|_{w,p}\overset{def}{=}\Big(\sum_{i=1}^d w_i|x_i|^p\Big)^{1/p}, 1\leq p<\infty,
    \]
    and
    \[
    \|x\|_{w,\infty}\overset{def}{=}\max_i(w_i|x_i|).
    \]
    Assume that 
    \[\|F(x)-F(y)\|_{w,p}\leq \alpha \|x-y\|_{w,p}, \forall x,y\in \mathbb{R}^d, ~~(\mathrm{contraction})\]
    for some $w$, $1\leq q<\infty$ and $\alpha\in(0,1)$.
    
    With the contraction of $F$, if the limiting ODE is $\dot{x}(t)=F(x(t))-x(t)$, with unique $x^*$ such that $x^*=F(x^*)$, then $x^*$ is the globally asymptotically stable equilibrium.
\end{theorem}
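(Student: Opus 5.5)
The plan is to use the contraction norm itself as a Lyapunov function for the limiting ODE $\dot{x}(t)=F(x(t))-x(t)$. Throughout I take the weights $w_i$ to be strictly positive, so that $\|\cdot\|_{w,p}$ (and $\|\cdot\|_{w,\infty}$) is a genuine norm on $\mathbb{R}^d$, equivalent to the Euclidean norm.

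\textbf{Step 1: well-posedness.} Since $\|F(x)-F(y)\|_{w,p}\le\alpha\|x-y\|_{w,p}$, the map $F$ is Lipschitz. By norm equivalence, the vector field $x\mapsto F(x)-x$ is globally Lipschitz. Hence the ODE has a unique solution defined for all $t\ge0$ from any initial condition. By the Banach fixed point theorem on the complete space $(\mathbb{R}^d,\|\cdot\|_{w,p})$, $F$ has a unique fixed point $x^*$. This $x^*$ is exactly the unique equilibrium of the ODE.

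\textbf{Step 2: Lyapunov decrease.} Set $\mathcal{V}(t):=\|x(t)-x^*\|_{w,p}$. This function is only Lipschitz in $t$, not necessarily differentiable, so I will work with the upper right Dini derivative $D^+\mathcal{V}(t)=\limsup_{h\downarrow0}\big(\mathcal{V}(t+h)-\mathcal{V}(t)\big)/h$.

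By the ODE, $x(t+h)=x(t)+h\big(F(x(t))-x(t)\big)+o(h)$. Subtracting $x^*=F(x^*)$ and rearranging gives
\[
x(t+h)-x^*=(1-h)\big(x(t)-x^*\big)+h\big(F(x(t))-F(x^*)\big)+o(h).
\]
For $h\in(0,1)$, the triangle inequality and the contraction property yield $\mathcal{V}(t+h)\le(1-h)\mathcal{V}(t)+h\alpha\mathcal{V}(t)+o(h)$. Therefore
\[
D^+\mathcal{V}(t)\le-(1-\alpha)\mathcal{V}(t).
\]

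\textbf{Step 3: integrate the inequality.} I apply a comparison lemma (Gronwall's inequality in Dini-derivative form) to $e^{(1-\alpha)t}\mathcal{V}(t)$. Its upper Dini derivative is nonpositive, so it is nonincreasing. This gives
\[
\|x(t)-x^*\|_{w,p}\le e^{-(1-\alpha)t}\,\|x(0)-x^*\|_{w,p}.
\]

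\textbf{Step 4: conclude global asymptotic stability.} The bound in Step 3 delivers both required properties.
\begin{itemize}
\item Lyapunov stability: the distance to $x^*$ in $\|\cdot\|_{w,p}$ never increases. By norm equivalence, every ball around $x^*$ contains a smaller ball that remains forward invariant.
\item Global attractivity: every trajectory converges to $x^*$, in fact exponentially fast.
\end{itemize}
Together these make $x^*$ the globally asymptotically stable equilibrium. The same argument applies verbatim to $\|\cdot\|_{w,\infty}$, which covers the sup-norm case used in Eq.~\ref{eq:odeQ}.

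The main obstacle is Step 2, because the norm is not differentiable at points where some coordinate of $x(t)-x^*$ vanishes (and, for $p=\infty$, at ties in the max). This is why I avoid the chain rule and work directly with the difference quotient and Dini derivatives. The remaining care is in the comparison lemma: it must be valid for continuous functions whose upper Dini derivative satisfies a linear differential inequality. This is a classical result, which I would cite rather than reprove.
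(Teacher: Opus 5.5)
Your argument is correct and complete. It is the standard Lyapunov proof behind this result: the distance to $x^*$ in the contraction norm decays like $e^{-(1-\alpha)t}$. The paper does not prove the theorem itself but imports it from Borkar, and your Dini-derivative step has the merit of covering $p=\infty$ without a separate argument, which is the sup-norm case actually invoked for $\mathcal{T}^\pi$ in the proof of Proposition~\ref{prop:cvar}.

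Your explicit assumption $w_i>0$ is genuinely needed, because a zero weight leaves that coordinate of $F$ unconstrained: e.g.\ $w=(1,0)$, $F(x)=(0,2x_2)$ has a unique fixed point, yet $\dot x_2=x_2$ diverges. That assumption is missing from the statement as transcribed, as is $1\le p\le\infty$ in place of the typo ``$1\leq q<\infty$''.
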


In our case, $\mathcal{T^\pi}$ is a contraction under the sup-norm and $Q^\pi$ is the unique solution of $Q=\mathcal{T}^\pi(Q)$. Therefore, according to Theorem~\ref{thm:gase}, $Q^\pi$ is the globally asymptotically stable equilibrium \EDmodified{of \eqref{eq:odeQ}}.

3) $\sup_t(\|q_t\|_\infty+\|Q_t\|_\infty) <\infty $ holds \EDmodified{almost surely} by our assumption.

All the above analysis completes the proof for Proposition~\ref{prop:cvar}.

\subsection{Discussion on the Expectile Bellman Operator in Jiang et al. (2024)}
\label{app:aaai2024}
For a policy $\pi(\cdot|s)$, the Expectile Bellman operator proposed by \cite{jiang2024learning} is
\[
\mathcal{T}^\pi V(s)=V(s) + 2\zeta\mathbb{E}_{a\sim \pi(\cdot|s)}\mathbb{E}_{r,s'}\big[(1-\alpha)\big(\delta(s,a,r,s')\big)_{-}+\alpha\big(\delta(s,a,r,s')\big)_+ \big],
\]
where $\delta(s,a,r,s')=r(s,a) + \gamma V(s') - V(s)$. The contraction of this Bellman operator is established in \cite{jiang2024learning}, but its fixed point is not explicitly characterized.

For a specific action $a\sim \pi(\cdot|s)$, the policy is updated using the advantage $A^\pi(s,a):=\mathbb{E}_{r,s'}\big[(1-\alpha)\big(\delta(s,a,r,s')\big)_{-}+\alpha\big(\delta(s,a,r,s')\big)_+ \big]$ in \cite{jiang2024learning}. This naturally motivates considering the optimality version of this Bellman operator, i.e.,
\begin{equation}
\label{eq:ppo-expectile-opt-bellman}
    \mathcal{T}^* V(s)=V(s) + \max_a 2\zeta \mathbb{E}_{r,s'}\big[(1-\alpha)\big(\delta(s,a,r,s')\big)_{-}+\alpha\big(\delta(s,a,r,s')\big)_+ \big].
\end{equation}
We show that the fixed point of this optimality Bellman operator is $V^*$, which suggests that the method in \cite{jiang2024learning} optimizes dynamic expectile.
\begin{lemma}
    $V^*$ is the unique solution to the Bellman operator in Eq.~\ref{eq:ppo-expectile-opt-bellman}.
\end{lemma}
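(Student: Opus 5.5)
We need to show that $V^*$ is a fixed point of $\mathcal{T}^*$ in Eq.~\ref{eq:ppo-expectile-opt-bellman}, and that this fixed point is unique. Throughout, write $u(x)=(1-\alpha)(x)_-+\alpha(x)_+$ and $\zeta>0$. As in the proof of Proposition~\ref{prop:expectile}, $u$ is strictly increasing, and $u(x)-u(y)=\xi\,(x-y)$ with $\xi\in[\epsilon,L]$, where $\epsilon=\min(\alpha,1-\alpha)$ and $L=\max(\alpha,1-\alpha)$.

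\textbf{Step 1 (fixed-point characterization).} The equation $\mathcal{T}^*V=V$ holds if and only if, for every $s$,
\[
\max_a \phi_s(a,V)=0,\qquad \phi_s(a,V):=\mathbb{E}_{r,s'}\big[u(r+\gamma V(s')-V(s))\big].
\]
For fixed $s,a$ and fixed $V(\cdot)$ at the successor states, the map $y\mapsto \mathbb{E}[u(r+\gamma V(s')-y)]$ is continuous and strictly decreasing. Its unique zero is $\mathrm{Expectile}_\alpha(\tilde r(s,a)+\gamma V(\tilde s'))$, because this zero is the first-order condition of Eq.~\ref{eq:def-expectile}; call it $Q_V(s,a)$. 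Taking $V=V^*$ gives $Q_{V^*}(s,a)=Q^*(s,a)$ by Eq.~\ref{eq:bellman-opt-eq}, since $V^*(s')=\max_{a'}Q^*(s',a')$.

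By strict monotonicity, the sign of $\phi_s(a,V)$ equals the sign of $Q_V(s,a)-V(s)$. Hence $\max_a\phi_s(a,V)=0$ holds exactly when $\max_a Q_V(s,a)=V(s)$. Applied to $V^*$, this condition reads $\max_aQ^*(s,a)=V^*(s)$, which is true. So $V^*$ is a fixed point.

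\textbf{Step 2 (uniqueness).} Step 1 shows that any fixed point $V$ satisfies $V(s)=\max_a \mathrm{Expectile}_\alpha(\tilde r(s,a)+\gamma V(\tilde s'))=:(\mathcal{B}V)(s)$. This is the optimality Bellman operator for dynamic expectile. Expectile is monotone (A2) and translation invariant (A3), so for $V_1\le V_2+c$ with $c=\|V_1-V_2\|_\infty$ we get $\mathcal{B}V_1\le\mathcal{B}V_2+\gamma c$, and symmetrically. Taking the max over $a$ preserves this bound, so $\mathcal{B}$ is a $\gamma$-contraction in sup-norm and has a unique fixed point, namely $V^*$. Every fixed point of $\mathcal{T}^*$ is a fixed point of $\mathcal{B}$, so it equals $V^*$.

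As an alternative route to uniqueness, one could show directly that $\mathcal{T}^*$ is a contraction for small $\zeta$, copying the $\xi$-argument from the proof of Proposition~\ref{prop:expectile}. That argument only needs $\max_a$ to be $1$-Lipschitz, and it gives the factor $1-2\zeta(1-\gamma)\epsilon$ whenever $2\zeta L\le 1$. The reduction to $\mathcal{B}$ is cleaner because it does not depend on $\zeta$.

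The main obstacle is the sign-equivalence in Step 1: we need the maximum of the $\phi_s(a,V)$ to vanish precisely when the maximum of the $Q_V(s,a)$ equals $V(s)$. This follows from strict monotonicity of $u$ together with continuity of the expectation, which holds because rewards are bounded (Assumption~\ref{ass:1}). With that in place, both directions follow, since the strict sign relation holds for each action separately.
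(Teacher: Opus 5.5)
Your proof is correct. Its existence half is essentially the paper's argument. The paper observes that $V^*(s)\ge Q^*(s,a)$, the minimizer of the convex expectile loss, so the loss derivative $-2\,\mathbb{E}[u(r+\gamma V^*(s')-V^*(s))]$ is nonnegative at $y=V^*(s)$ for every $a$, with equality at $a=\pi^*(s)$. That is your sign relation specialized to $V=V^*$, and it needs only the weak monotonicity that comes from convexity.

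Where you go further is uniqueness. The paper's proof stops once $V^*$ is shown to satisfy $\max_a\mathbb{E}[u(\cdot)]=0$. The only contraction it mentions is the one from Jiang et al.\ for the policy-evaluation operator $\mathcal{T}^\pi$, not for $\mathcal{T}^*$, so the word ``unique'' in the statement is never actually argued there. You use strict monotonicity (slope at least $\epsilon$) to get the sign relation for arbitrary $V$. This shows that the fixed points of $\mathcal{T}^*$ are exactly those of $\mathcal{B}$, and the $\gamma$-contraction of $\mathcal{B}$ from (A2)--(A3) then gives uniqueness for every $\zeta>0$. Your argument therefore proves the full statement, while the paper leaves uniqueness implicit.

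Your alternative direct-contraction route also works, with one precaution. Write $\mathcal{T}^*V(s)=\max_a[V(s)+2\zeta\phi_s(a,V)]$ and apply the $1$-Lipschitz property of the max to these bracketed terms. Each of them is a $(1-2\zeta(1-\gamma)\epsilon)$-contraction when $2\zeta L\le 1$. Applying the Lipschitz property to $\max_a\phi_s(a,\cdot)$ alone would lose the cancellation with $V_1(s)-V_2(s)$. This route gives an explicit contraction rate, but only for small $\zeta$.
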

\begin{proof}
   Following Eq.~\ref{eq:bellman-opt-eq}, the Bellman optimality equation for $V^*$ is
    \begin{equation}
    \label{eq:v-star-bellman}
        V^*(s) = \max_a Q^*(s,a) = \max_a \rho(\tilde{r}(s,a)+\gamma V^*(\tilde{s}')).
    \end{equation}

    By subtracting $V(s)$ from both side of Eq.~\ref{eq:ppo-expectile-opt-bellman} and dividing by $2\zeta$, we obtain
    \begin{equation}
    \label{eq:ppo-expectile-proof-eq}
         \max_a  \mathbb{E}_{r,s'}\big[(1-\alpha)\big(\delta(s,a,r,s')\big)_{-}+\alpha\big(\delta(s,a,r,s')\big)_+ \big]  =0
    \end{equation}

    Eq.~\ref{eq:v-star-bellman} implies that for all $a$, we have that:
\[
\begin{aligned}
V^*(s)\geq& \rho(\tilde{r}(s,a) + \gamma~V^*(\tilde{s}'))\\
\overset{\mathrm{def}}{=}&\arg\min_y \mathbb{E}[L(\tilde{r}(s,a)+\gamma V^*(\tilde{s}'), y)], ~\forall a\\
=& \arg\min_y (1-\alpha)\mathbb{E}\big[\big(\tilde{r}(s,a)+\gamma V^*(\tilde{s}
)-y\big)^2_{-}\big]+\alpha\mathbb{E}\big[\big(\tilde{r}(s,a)+\gamma V^*(\tilde{s}
)-y\big)^2_{+}\big], ~\forall a
\end{aligned}
\]
which implies that for all $a$
\[
\begin{aligned}
0\leq& \nabla_y \mathbb{E}[L(\tilde{r}(s,a) + \gamma~V^*(\tilde{s}'), y)]|_{y=V^*(s)}\\
=&-2\mathbb{E}[(1-\alpha)(\tilde{r}(s,a)+\gamma V^*(\tilde{s}')-V^*(s))_{-}+\alpha(\tilde{r}(s,a)+\gamma V^*(\tilde{s}')-V^*(s))_{+}],~~~\forall a.
\end{aligned}
\]
Therefore
\begin{equation}
\label{eq:tmp1}
    \max_a \mathbb{E}[(1-\alpha)(\tilde{r}(s,a)+\gamma V^*(\tilde{s}')-V^*(s))_{-}+\alpha(\tilde{r}(s,a)+\gamma V^*(\tilde{s}')-V^*(s))_{+}]\leq 0.
\end{equation}

Moreover, Eq.~\ref{eq:bellman-opt-eq} implies that there exists an optimal action $\pi^*(s)$ such that 
\[
\begin{aligned}
V^*(s)=&\rho(r(s,\pi^*(s)) + \gamma V^*(\tilde{s}'))\\
    =& \arg\min_y \mathbb{E}\big[L\big(\tilde{r}(s,\pi^*(s))+\gamma V^*(\tilde{s}'), y\big)\big].
\end{aligned}
\]
Thus we can confirm that the maximum in Eq.~\ref{eq:tmp1} is $0$ and is achieved when $a=\pi^*(s)$, i.e.,
\[
\mathbb{E}[(1-\alpha)(\tilde{r}(s,\pi^*(s))+\gamma V^*(\tilde{s}')-V^*(s))_{-}+\alpha(\tilde{r}(s,\pi^*(s))+\gamma V^*(\tilde{s}')-V^*(s))_{+}]= 0.
\]
As a result,
\[
\max_a \mathbb{E}[(1-\alpha)(\tilde{r}(s,a)+\gamma V^*(\tilde{s}')-V^*(s))_{-}+\alpha(\tilde{r}(s,a)+\gamma V^*(\tilde{s}')-V^*(s))_{+}] = 0
\]
\end{proof}

\section{Experiment Details}
\label{app:exp-details}
\subsection{Policy and Value Function Parameterization}
For tabular domains (i.e., Maze, CliffWalk), states are represented by the coordinates, i.e. $s:=(x,y)$. Policy parameter is represented by a matrix $\Theta$ with size $[n\_row, n\_column, 4]$. Action probability is calculated by $\pi(a|s)=\frac{\exp(\Theta[x,y,a])}{\sum_b\exp(\Theta[x,y,b])}$. Similarly, value function is also represented by a matrix $\Phi$ with size $[n\_row, n\_column, 4]$ (if learning state-action value) or $[n\_row, n\_column]$ (if learning state value).

For LunarLander, both policy and value functions are implemented by deep neural networks using \texttt{Pytorch} with two hidden layers and \texttt{nn.ReLU()} activation. The hidden layer dimension is 128. Softmax is applied to the output of policy network then feed to \texttt{torch.distributions.Categorical()}. The output dimension of value network is $n\_actions$ (if learning state-action value) or $1$ (if learning state value).

For Inverted Pendulum, both policy and value functions are implemented by deep neural networks using \texttt{Pytorch} with two hidden layers and \texttt{nn.ReLU()} activation. The hidden layer dimension is 128. \texttt{nn.Tanh()} is applied to the output of policy network then multiplied by the maximal magnitude of action to produce the policy mean. Std is a separate learning parameter using \texttt{nn.Parameter()}. The mean and std are feed to \texttt{torch.distributions.Normal()}. The input of value function is state plus action (if learning state-action value) or state only (if learning state value).

Specifically for \cite{coache2023conditionally}, as discussed in Sec.~\ref{sec:related-work}, the original method works on the right tail CVaR. To align with our setting, we replace its value updates by the two-time scale rules as in Eq.~\ref{eq:cvar-qlearning}. Note that this method learns state-only value function.

\subsection{Maze and CliffWalk}
The maze domain is modified from the Github repo [https://github.com/ido90/CeSoR] under MIT license. The cliffwalk domain is implemented by authors.

Discount factor $\gamma=0.999$. For Exp-AC, CVaR-AC, and Exp-PPO, policy learning rate is searched from $\{$5e-4, 1e-3, 5e-3$\}$, value learning rate is searched from $\{$1,2,5$\}$ times the policy learning rate. The learning rates selected are summarized in Table~\ref{tab:maze-lr}.

\begin{table}
  \caption{Learning parameters in Maze and CliffWalk.}
  \label{tab:maze-lr}
  \centering
  \begin{tabular}{lccc}
    \toprule
    Method   & $\pi$ lr        & $Q$ (or $V$) lr & $q$ lr \\
    \midrule
    Exp-AC & 5e-3  & 5e-3 & - \\
    CVaR-AC  & 5e-4 & 1e-3   &  1e-2 \\
    Exp-PPO     & 5e-3       & 1e-2  & -\\
    QL & -& 5e-3 &-\\
    \bottomrule
  \end{tabular}
\end{table}

\subsection{LunarLander}
We use the discrete action version of LunarLander, which is modified from the Github repo [https://github.com/openai/gym] under MIT License. The state dimension is 8, the action space is 4. A detailed description is available at this webpage\footnote{https://gymnasium.farama.org/environments/box2d/lunar\_lander/}.

Discount factor $\gamma=0.99$. Optimizer is Adam. Policy learning rate is searched from $\{$3e-5, 5e-5, 7e-5, 1e-4, 3e-4, 5e-4$\}$. Value learning rate is searched from  $\{$1,2,5$\}$ times the policy learning rate. For Exp-AC, CVaR-AC, and EPG, the entropy coefficient starts from 0.1 and linearly decays to 0.01 in the beginning $15\%$ of steps. The entropy coefficient of Exp-PPO is 0.01. The learning rates selected are summarized in Table~\ref{tab:ll-lr}.

\begin{table}
  \caption{Learning parameters in LunarLander.}
  \label{tab:ll-lr}
  \centering
  \begin{tabular}{lccc}
    \toprule
    Method   & $\pi$ lr        & $Q$ (or $V$) lr & $q$ lr \\
    \midrule
    Exp-AC & 7e-5  & 3.5e-4 & - \\
    CVaR-AC  & 1e-4 & 2e-4   &  1e-3 \\
    Exp-PPO     & 1e-4       & 5e-4  & -\\
    EPG     &   5e-5 & 2.5e-4 &-\\
    \bottomrule
  \end{tabular}
\end{table}

\subsection{Inverted Pendulum}
This domain is modified from the Github repo [https://github.com/openai/gym] under MIT License. The state dimension is 4. The one dimension action is continuous in range $[-3,3]$. A detailed description is available at this webpage\footnote{https://gymnasium.farama.org/environments/mujoco/inverted\_pendulum/}.

Discount factor $\gamma=0.99$. Optimizer is Adam. For each update, Exp-AC, CVaR-AC, and EPG sample $10$ actions to compute policy gradient and the backup value. Policy learning rate is searched from $\{$3e-5, 5e-5, 7e-5, 1e-4, 3e-4, 5e-4$\}$. Value learning rate is searched from  $\{$1,2,5$\}$ times the policy learning rate. For Exp-AC, CVaR-AC, and EPG, the entropy coefficient starts from 0.1 and linearly decays to 0.01 in the beginning $15\%$ of steps. The entropy coefficient of Exp-PPO is 0.01. The learning rates selected are summarized in Table~\ref{tab:ip-lr}.

\begin{table}
  \caption{Learning parameters in Inverted Pendulum.}
  \label{tab:ip-lr}
  \centering
  \begin{tabular}{lccc}
    \toprule
    Method   & $\pi$ lr        & $Q$ (or $V$) lr & $q$ lr \\
    \midrule
    Exp-AC & 5e-5  & 2.5e-4 & - \\
    CVaR-AC  & 5e-5 & 2.5e-4   &  1e-3 \\
    Exp-PPO     & 5e-4       & 2.5e-3  & -\\
    EPG     &   7e-5 & 3.5e-4 &-\\
    \bottomrule
  \end{tabular}
\end{table}

\subsection{Additional Results}
The left landing rate of different methods in LunarLander is shown in Fig.~\ref{fig:ll-left}.

\begin{figure}
    \begin{center}
        \includegraphics[width=0.4\textwidth]{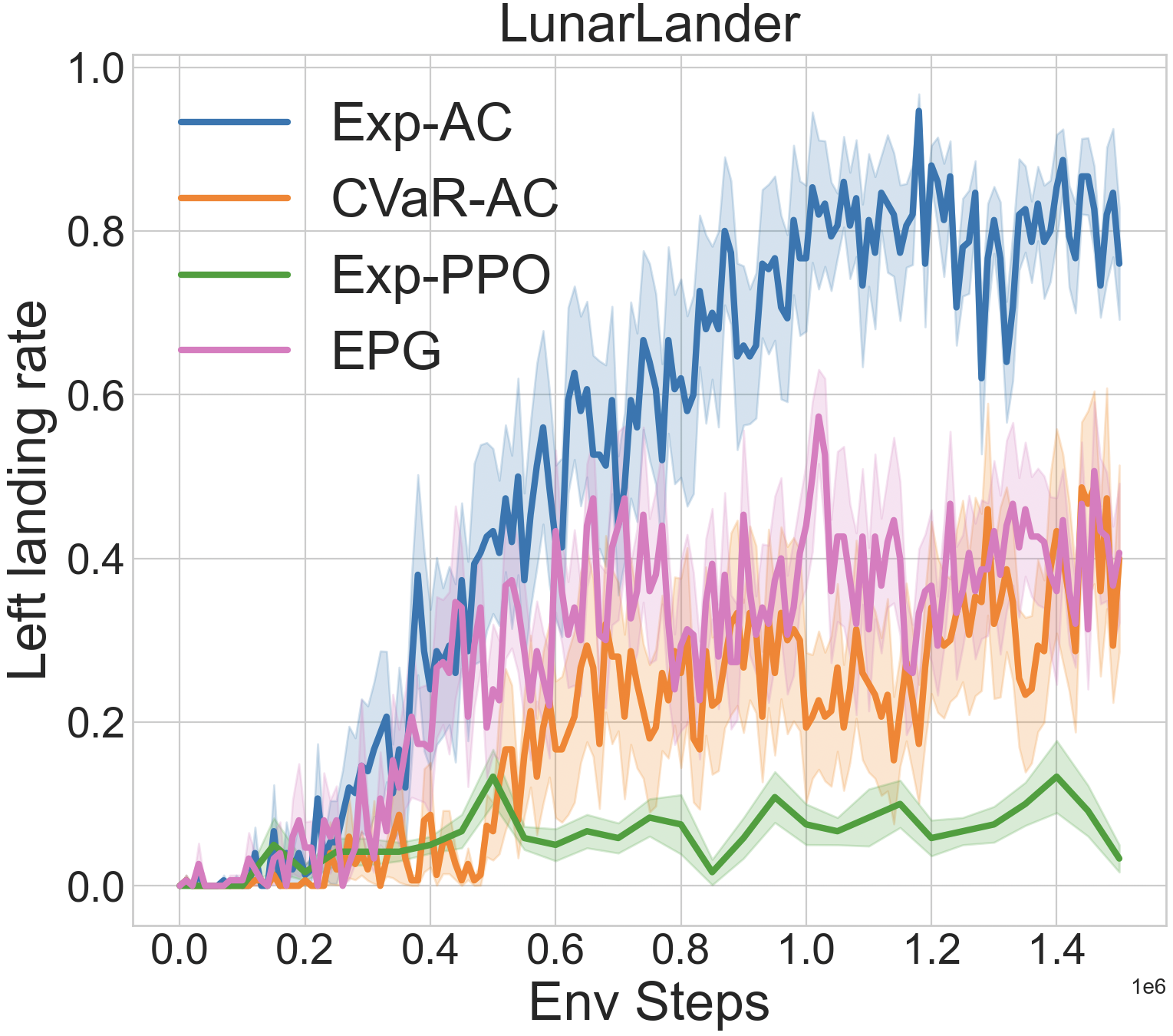}
    \end{center}
    \caption{Left landing rates in LunarLander. Curves are averaged over 10 seeds with shaded regions indicating standard errors.}
    \label{fig:ll-left}
\end{figure}

The learning curves with more environment steps for Exp-PPO in Inverted Pendulum are shown in Fig.~\ref{fig:exp-ppo-res}.

\begin{figure}
    \centering
    \begin{subfigure}[t]{0.31\textwidth}
        \centering
        \includegraphics[width=\linewidth]{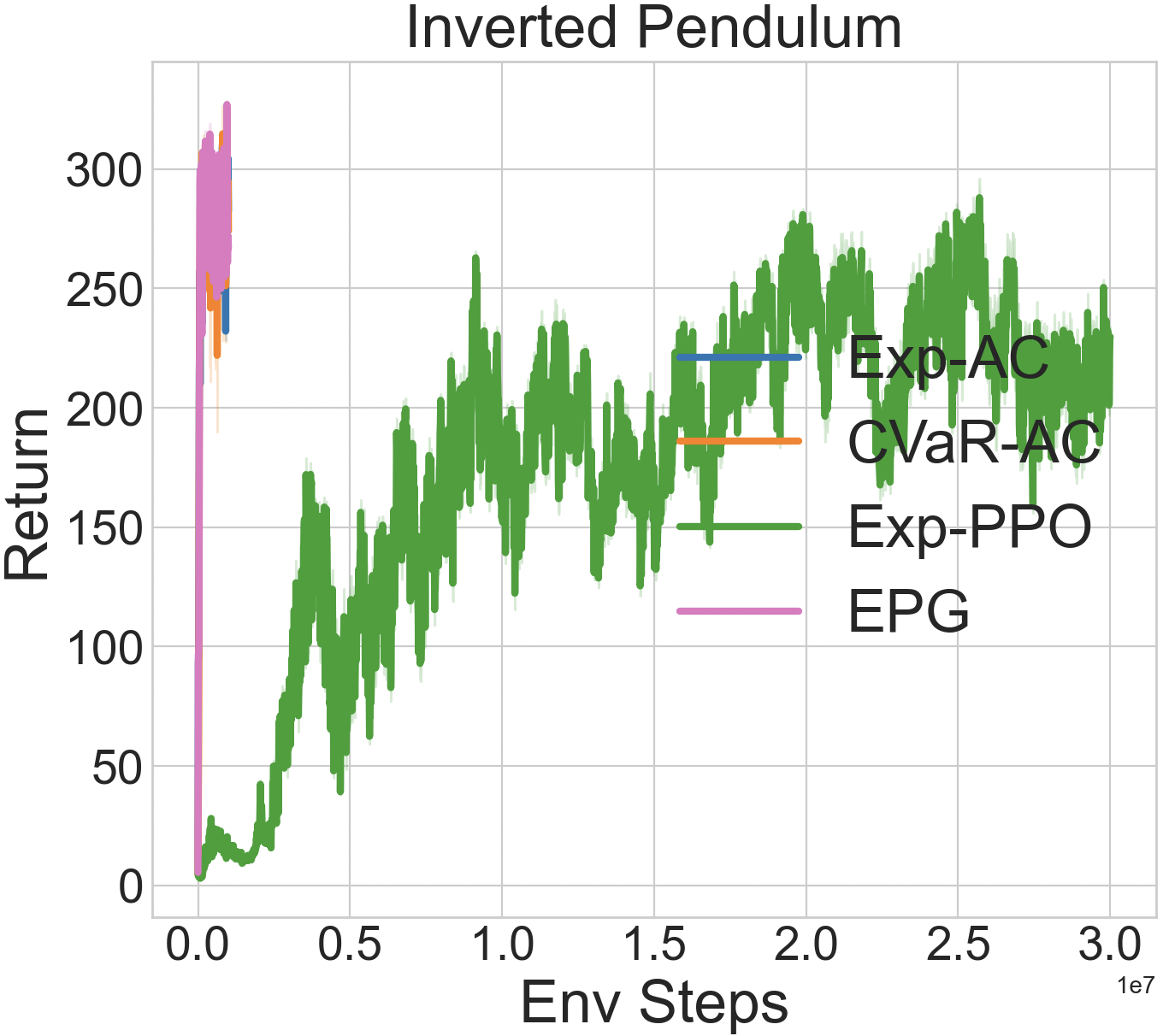}
        \caption{Expected return in Pendulum}
        \label{fig:sub-ppo1}
    \end{subfigure}
    \hspace{0.02\textwidth}
    \begin{subfigure}[t]{0.31\textwidth}
        \centering
        \includegraphics[width=\linewidth]{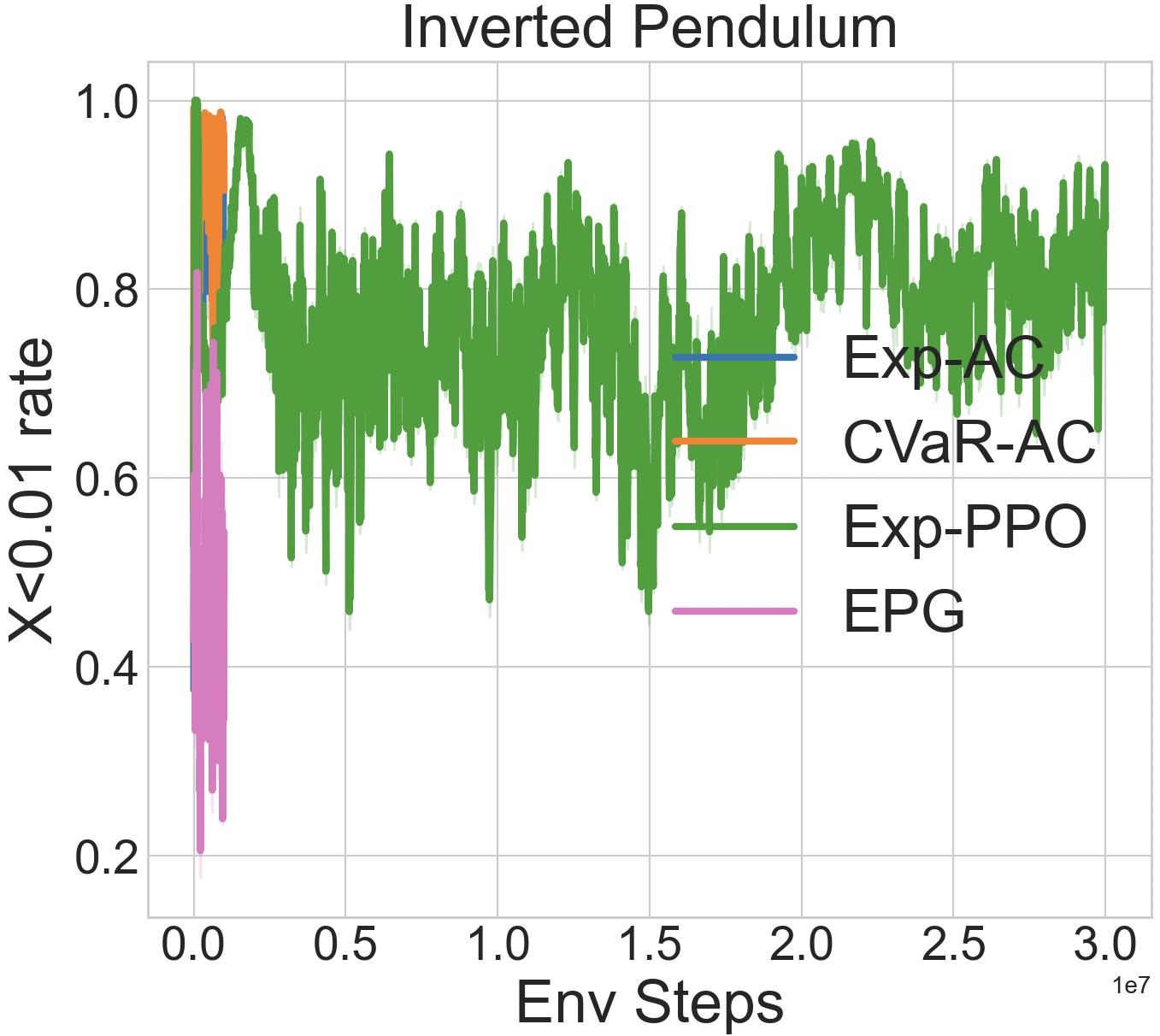}
        \caption{Risk-averse rate in Pendulum}
        \label{fig:sub-ppo2}
    \end{subfigure}
    \hspace{0.02\textwidth}
    \begin{subfigure}[t]{0.31\textwidth}
        \centering
        \includegraphics[width=\linewidth]{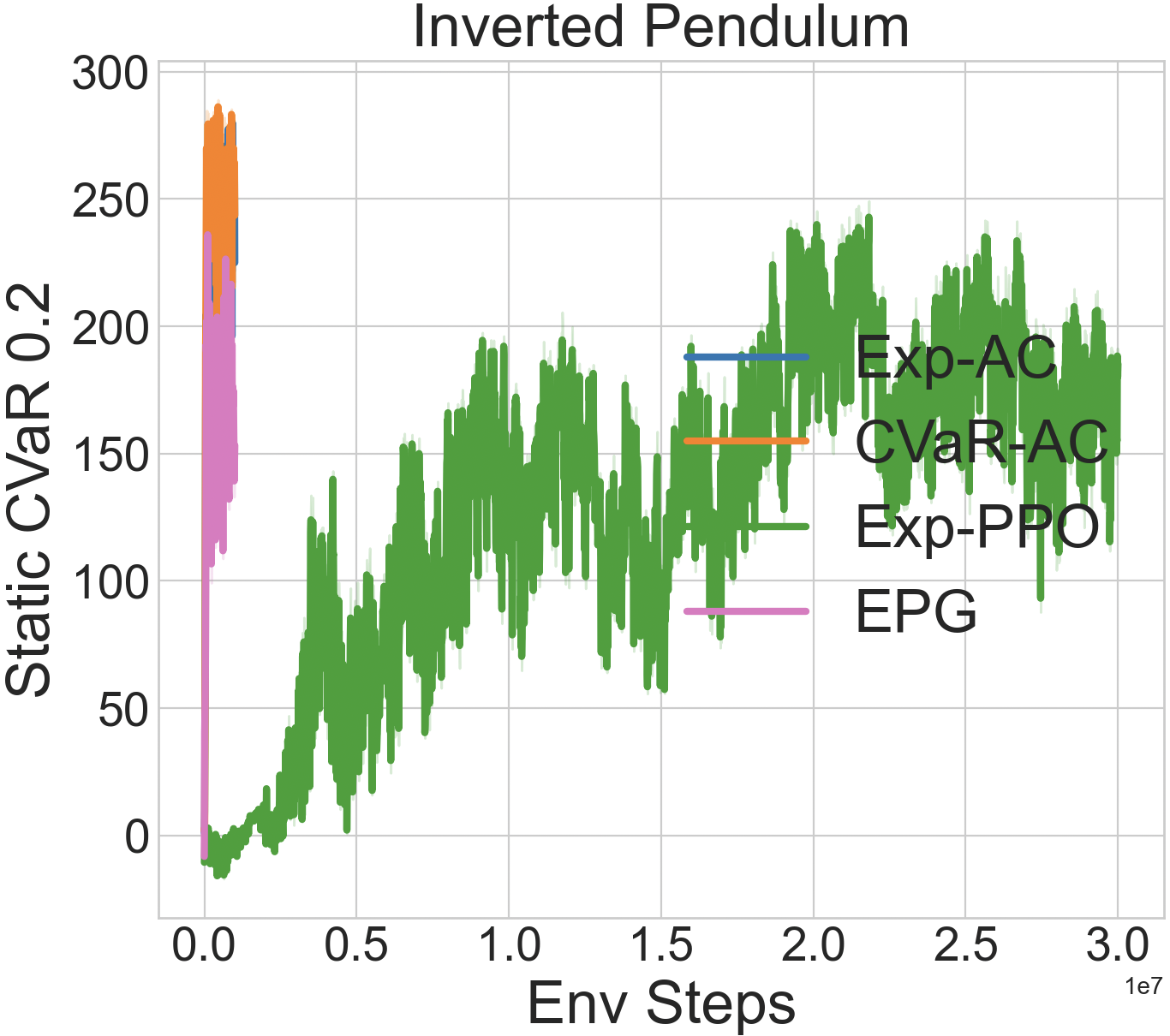}
        \caption{CVaR 0.2 of return in Pendulum}
        \label{fig:sub-ppo3}
    \end{subfigure}

    \caption{Expected return, risk-averse rate, and CVaR 0.2 of return in Inverted Pendulum. Curves are averaged over 10 seeds with shaded regions indicating standard errors.}
    \label{fig:exp-ppo-res}
\end{figure}

\begin{figure}
    \centering

    \begin{subfigure}[t]{0.3\textwidth}
        \centering
        \includegraphics[width=\linewidth]{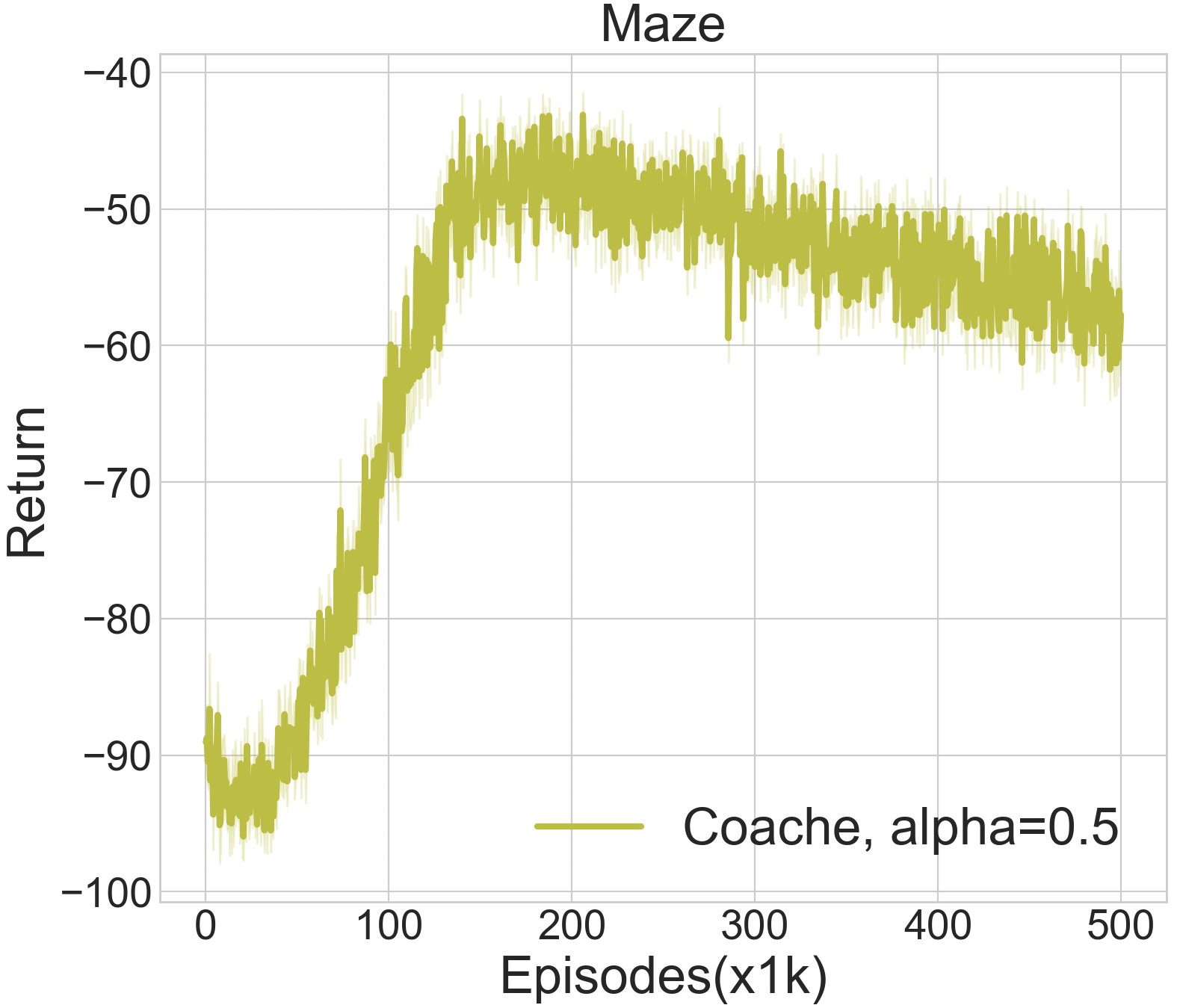}
        \caption{Expected return in maze}
        \label{fig:sub-c1}
    \end{subfigure}
    \hspace{0.05\textwidth}
    \begin{subfigure}[t]{0.3\textwidth}
        \centering
        \includegraphics[width=\linewidth]{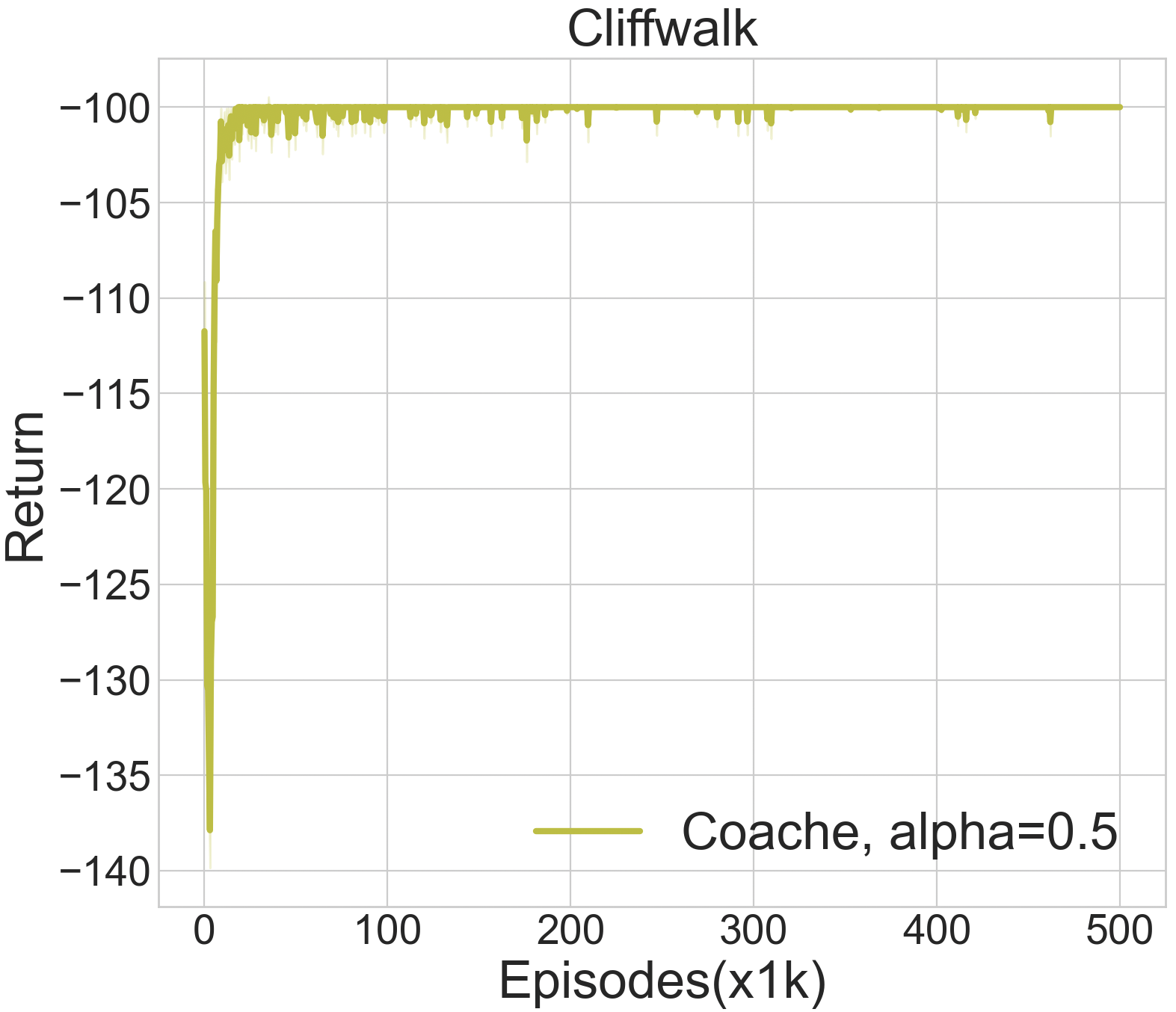}
        \caption{Expected return in cliffwalk}
        \label{fig:sub-c2}
    \end{subfigure}

    \begin{subfigure}[t]{0.3\textwidth}
        \centering
        \includegraphics[width=\linewidth]{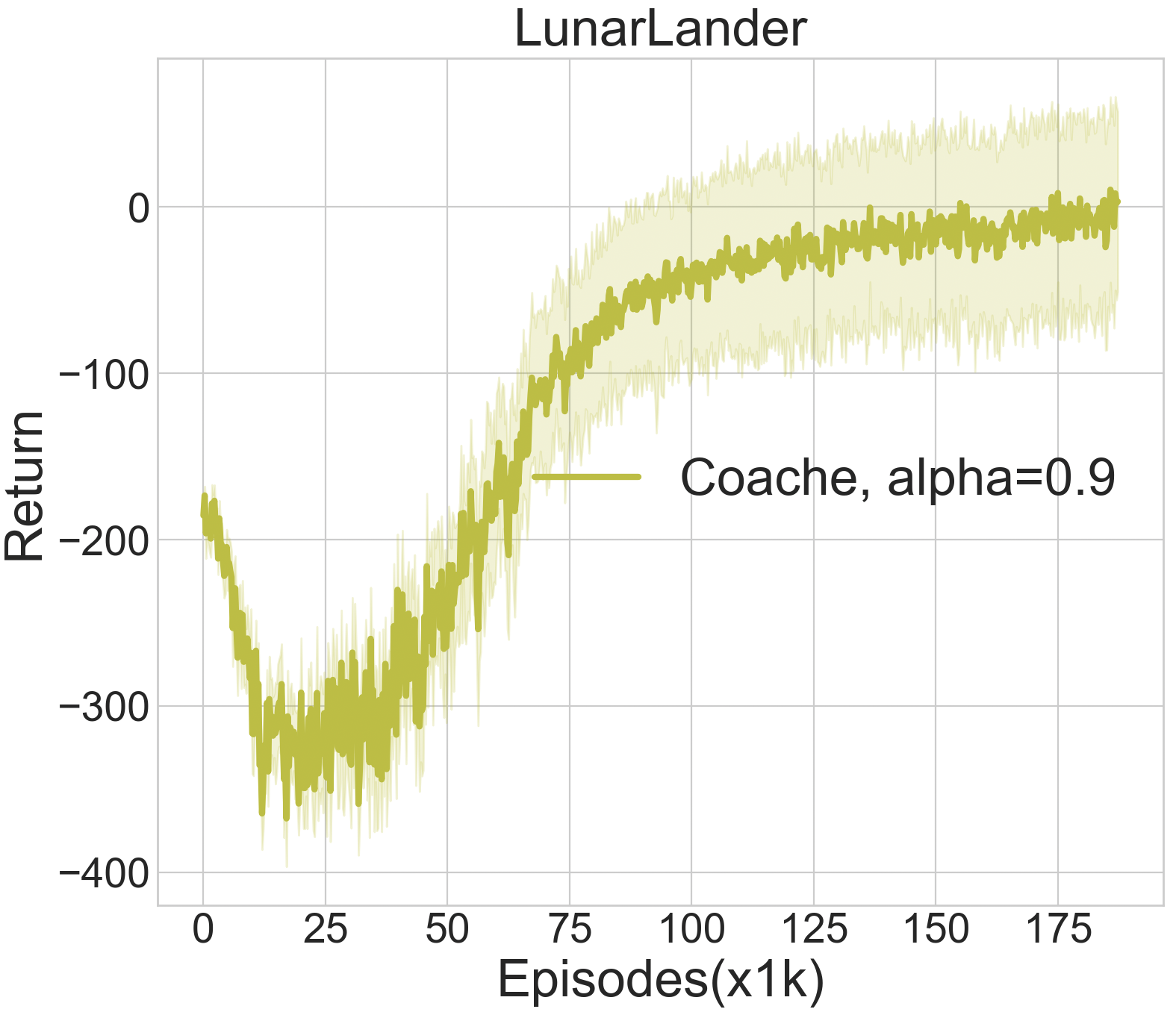}
        \caption{Expected return in LunarLander}
        \label{fig:sub-c3}
    \end{subfigure}
    \hspace{0.05\textwidth}
    \begin{subfigure}[t]{0.3\textwidth}
        \centering
        \includegraphics[width=\linewidth]{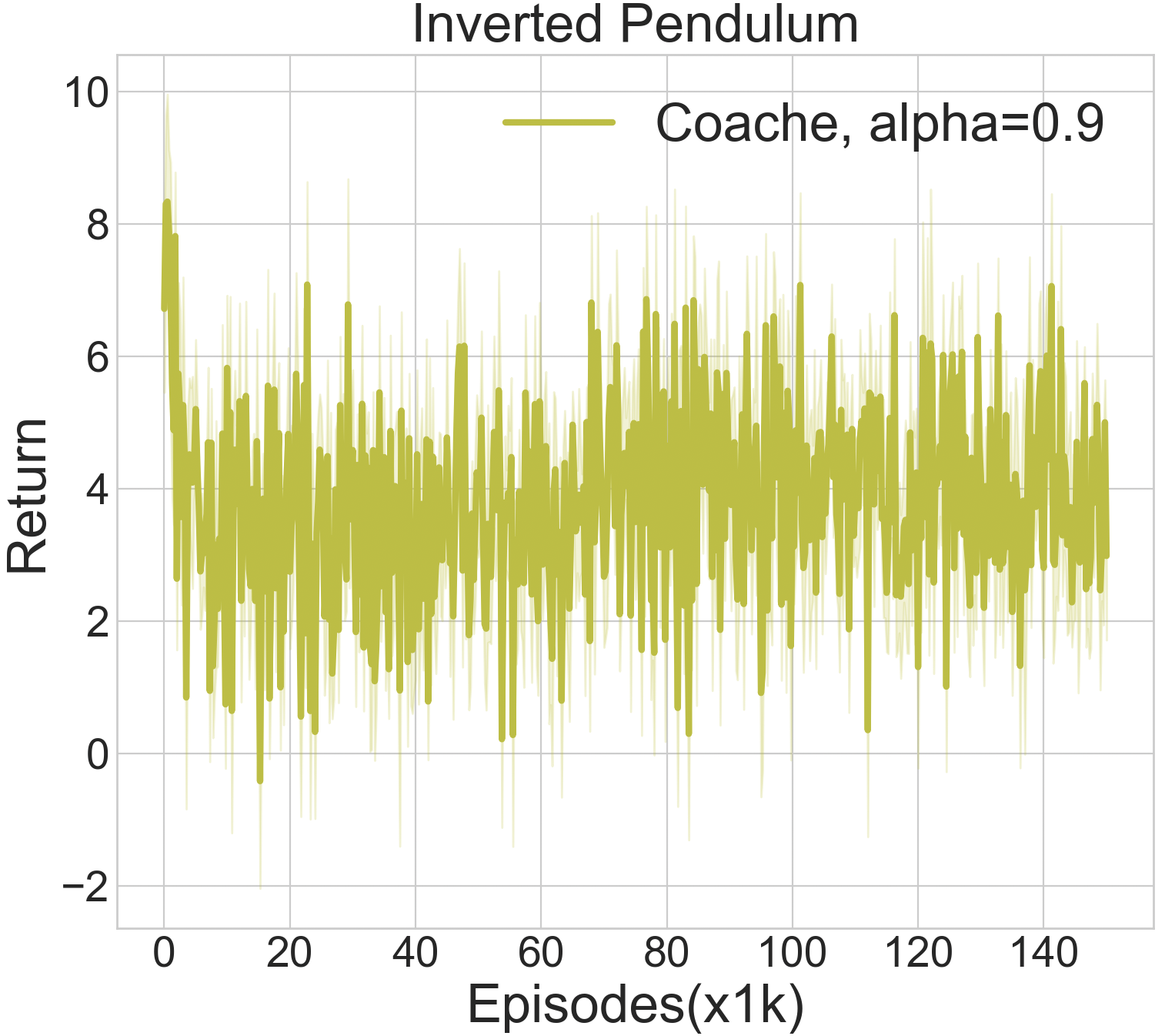}
        \caption{Expected return in Pendulum}
        \label{fig:sub-c4}
    \end{subfigure}
    \caption{Expected return of the meathod in \cite{coache2023conditionally} in four evaluation domains. Curves are averaged over 10 seeds with shaded regions indicating standard errors.}
    \label{fig:coache-res}
\end{figure}

Return curves of the method in \cite{coache2023conditionally} are shown in Fig.~\ref{fig:coache-res}. The learning rates selected are summarized in Table~\ref{tab:coache-lr}.

\begin{table}
  \caption{Learning parameters of \cite{coache2023conditionally} in four domains.}
  \label{tab:coache-lr}
  \centering
  \begin{tabular}{lccc}
    \toprule
    Env     & $\pi$ lr        & $V$ lr & quantile lr \\
    \midrule
    Maze    & 1e-4  & 2e-4 & 1e-3 \\
    Cliffwalk  & 1e-4 & 2e-4   &  1e-3 \\
    LunarLander    & 3e-5       & 6e-5  & 3e-4\\
    Pendulum    &   5e-5 & 1e-4 &5e-4\\
    \bottomrule
  \end{tabular}
\end{table}


\newpage
\onecolumn

\setlength{\bibsep}{6pt plus 0.3ex}
\bibliographystyle{apalike}
\bibliography{reference}

@article{ruszczynski2010risk,
  title={Risk-averse dynamic programming for Markov decision processes},
  author={Ruszczy{\'n}ski, Andrzej},
  journal={Mathematical programming},
  volume={125},
  number={2},
  pages={235--261},
  year={2010},
  publisher={Springer}
}

@inproceedings{dabney2018distributional,
  title={Distributional reinforcement learning with quantile regression},
  author={Dabney, Will and Rowland, Mark and Bellemare, Marc and Munos, R{\'e}mi},
  booktitle={Proceedings of the AAAI conference on artificial intelligence (AAAI)},
  volume={32},
  year={2018}
}

@article{greenberg2022efficient,
  title={Efficient risk-averse reinforcement learning},
  author={Greenberg, Ido and Chow, Yinlam and Ghavamzadeh, Mohammad and Mannor, Shie},
  journal={Advances in Neural Information Processing Systems (NeurIPS)},
  volume={35},
  pages={32639--32652},
  year={2022}
}

@article{lim2022distributional,
  title={Distributional Reinforcement Learning for Risk-Sensitive Policies},
  author={Lim, Shiau Hong and Malik, Ilyas},
  journal={Advances in Neural Information Processing Systems (NeurIPS)},
  volume={35},
  pages={30977--30989},
  year={2022}
}

@article{artzner1999coherent,
  title={Coherent measures of risk},
  author={Artzner, Philippe and Delbaen, Freddy and Eber, Jean-Marc and Heath, David},
  journal={Mathematical finance},
  volume={9},
  number={3},
  pages={203--228},
  year={1999},
  publisher={Wiley Online Library}
}

@article{benyamine2026dynamic,
  title={Dynamic Programming for Epistemic Uncertainty in Markov Decision Processes},
  author={Benyamine, Axel and Grand-Cl{\'e}ment, Julien and Petrik, Marek and Jordan, Michael I and Durmus, Alain},
  journal={arXiv preprint arXiv:2602.03381},
  year={2026}
}

@article{tamar2015policy,
  title={Policy gradient for coherent risk measures},
  author={Tamar, Aviv and Chow, Yinlam and Ghavamzadeh, Mohammad and Mannor, Shie},
  journal={Advances in Neural Information Processing Systems (NeurIPS)},
  volume={28},
  year={2015}
}

@inproceedings{
zhang2024soft,
title={Soft Robust {MDP}s and Risk-Sensitive {MDP}s: Equivalence, Policy Gradient, and Sample Complexity},
author={Runyu Zhang and Yang Hu and Na Li},
booktitle={The Twelfth International Conference on Learning Representations (ICLR)},
year={2024}
}

@article{rigter2023one,
  title={One risk to rule them all: A risk-sensitive perspective on model-based offline reinforcement learning},
  author={Rigter, Marc and Lacerda, Bruno and Hawes, Nick},
  journal={Advances in Neural Information Processing Systems (NeurIPS)},
  volume={36},
  pages={77520--77545},
  year={2023}
}

@article{marzban2023deep,
  title={Deep reinforcement learning for option pricing and hedging under dynamic expectile risk measures},
  author={Marzban, Saeed and Delage, Erick and Li, Jonathan Yu-Meng},
  journal={Quantitative finance},
  volume={23},
  number={10},
  pages={1411--1430},
  year={2023},
  publisher={Taylor \& Francis}
}

@inproceedings{silver2014deterministic,
  title={Deterministic policy gradient algorithms},
  author={Silver, David and Lever, Guy and Heess, Nicolas and Degris, Thomas and Wierstra, Daan and Riedmiller, Martin},
  booktitle={Proceedings of the International Conference on Machine Learning (ICML)},
  pages={387--395},
  year={2014},
  organization={Pmlr}
}

@book{sutton1998reinforcement,
  title={Reinforcement learning: An introduction},
  author={Sutton, Richard S and Barto, Andrew G and others},
  volume={1},
  year={1998},
  publisher={MIT press Cambridge}
}

@article{ciosek2020expected,
  title={Expected policy gradients for reinforcement learning},
  author={Ciosek, Kamil and Whiteson, Shimon},
  journal={Journal of Machine Learning Research},
  volume={21},
  number={52},
  pages={1--51},
  year={2020}
}

@article{coache2023conditionally,
  title={Conditionally elicitable dynamic risk measures for deep reinforcement learning},
  author={Coache, Anthony and Jaimungal, Sebastian and Cartea, Alvaro},
  journal={SIAM Journal on Financial Mathematics},
  volume={14},
  number={4},
  pages={1249--1289},
  year={2023},
  publisher={SIAM}
}

@article{bellini2017risk,
  title={Risk management with expectiles},
  author={Bellini, Fabio and Di Bernardino, Elena},
  journal={The European Journal of Finance},
  volume={23},
  number={6},
  pages={487--506},
  year={2017},
  publisher={Taylor \& Francis}
}

@article{rockafellar2000optimization,
  title={Optimization of conditional value-at-risk},
  author={Rockafellar, R Tyrrell and Uryasev, Stanislav and others},
  journal={Journal of risk},
  volume={2},
  pages={21--42},
  year={2000},
  publisher={Citeseer}
}

@article{huang2021convergence,
  title={On the convergence and optimality of policy gradient for markov coherent risk},
  author={Huang, Audrey and Leqi, Liu and Lipton, Zachary C and Azizzadenesheli, Kamyar},
  journal={arXiv preprint arXiv:2103.02827},
  year={2021}
}

@article{tamar2016sequential,
  title={Sequential decision making with coherent risk},
  author={Tamar, Aviv and Chow, Yinlam and Ghavamzadeh, Mohammad and Mannor, Shie},
  journal={IEEE transactions on automatic control},
  volume={62},
  number={7},
  pages={3323--3338},
  year={2016},
  publisher={IEEE}
}

@inproceedings{yu2023global,
  title={On the global convergence of risk-averse policy gradient methods with expected conditional risk measures},
  author={Yu, Xian and Ying, Lei},
  booktitle={Proceedings of the International Conference on Machine Learning (ICML)},
  pages={40425--40451},
  year={2023},
  organization={PMLR}
}

@article{shen2013risk,
  title={Risk-sensitive Markov control processes},
  author={Shen, Yun and Stannat, Wilhelm and Obermayer, Klaus},
  journal={SIAM Journal on Control and Optimization},
  volume={51},
  number={5},
  pages={3652--3672},
  year={2013},
  publisher={SIAM}
}

@article{agarwal2021theory,
  title={On the theory of policy gradient methods: Optimality, approximation, and distribution shift},
  author={Agarwal, Alekh and Kakade, Sham M and Lee, Jason D and Mahajan, Gaurav},
  journal={Journal of Machine Learning Research},
  volume={22},
  number={98},
  pages={1--76},
  year={2021}
}

@book{beck2017first,
  title={First-order methods in optimization},
  author={Beck, Amir},
  year={2017},
  publisher={SIAM}
}

@article{stein1981estimation,
  title={Estimation of the mean of a multivariate normal distribution},
  author={Stein, Charles M},
  journal={The annals of Statistics},
  pages={1135--1151},
  year={1981},
  publisher={JSTOR}
}

@article{bellini2015elicitable,
  title={On elicitable risk measures},
  author={Bellini, Fabio and Bignozzi, Valeria},
  journal={Quantitative Finance},
  volume={15},
  number={5},
  pages={725--733},
  year={2015},
  publisher={Taylor \& Francis}
}

@inproceedings{jiang2024learning,
  title={Learning diverse risk preferences in population-based self-play},
  author={Jiang, Yuhua and Liu, Qihan and Ma, Xiaoteng and Li, Chenghao and Yang, Yiqin and Yang, Jun and Liang, Bin and Zhao, Qianchuan},
  booktitle={Proceedings of the aaai conference on artificial intelligence},
  volume={38},
  number={11},
  pages={12910--12918},
  year={2024}
}

@article{fissler2016higher,
  title={Higher order elicitability and Osband's principle},
  author={Fissler, Tobias and Ziegel, Johanna F},
  journal={The Annals of Statistics},
  pages={1680--1707},
  year={2016},
  publisher={JSTOR}
}

@article{schulman2017proximal,
  title={Proximal policy optimization algorithms},
  author={Schulman, John and Wolski, Filip and Dhariwal, Prafulla and Radford, Alec and Klimov, Oleg},
  journal={arXiv preprint arXiv:1707.06347},
  year={2017}
}

@article{mnih2015human,
  title={Human-level control through deep reinforcement learning},
  author={Mnih, Volodymyr and Kavukcuoglu, Koray and Silver, David and Rusu, Andrei A and Veness, Joel and Bellemare, Marc G and Graves, Alex and Riedmiller, Martin and Fidjeland, Andreas K and Ostrovski, Georg and others},
  journal={nature},
  volume={518},
  number={7540},
  pages={529--533},
  year={2015},
  publisher={Nature Publishing Group}
}

@book{bertsekas1996neuro,
  title={Neuro-Dynamic Programming},
  author={Bertsekas, Dimitri P and Tsitsiklis, John N},
  volume={1},
  year={1996},
  publisher={Athena Scientific}
}

@article{shen2014risk,
  title={Risk-sensitive reinforcement learning},
  author={Shen, Yun and Tobia, Michael J and Sommer, Tobias and Obermayer, Klaus},
  journal={Neural computation},
  volume={26},
  number={7},
  pages={1298--1328},
  year={2014},
  publisher={MIT Press}
}

@book{borkar2023stochastic,
  title={Stochastic approximation: a dynamical systems viewpoint, Second Edition},
  author={Borkar, Vivek S},
  volume={100},
  year={2023},
  publisher={Springer}
}

@article{luo2024simple,
  title={A simple mixture policy parameterization for improving sample efficiency of cvar optimization},
  author={Luo, Yudong and Pan, Yangchen and Wang, Han and Torr, Philip and Poupart, Pascal},
  journal={Reinforcement Learning Journal},
  year={2024}
}

@inproceedings{mead2025return,
  title={Return Capping: Sample-Efficient CVaR Policy Gradient Optimisation},
  author={Mead, Harry and Costen, Clarissa and Lacerda, Bruno and Hawes, Nick},
  booktitle={Proceedings of the International Conference on Machine Learning (ICML)},
  pages={},
  year={2025}
}

@article{brockman2016openai,
  title={Openai gym},
  author={Brockman, Greg and Cheung, Vicki and Pettersson, Ludwig and Schneider, Jonas and Schulman, John and Tang, Jie and Zaremba, Wojciech},
  journal={arXiv preprint arXiv:1606.01540},
  year={2016}
}

@inproceedings{todorov2012mujoco,
  title={Mujoco: A physics engine for model-based control},
  author={Todorov, Emanuel and Erez, Tom and Tassa, Yuval},
  booktitle={2012 IEEE/RSJ international conference on intelligent robots and systems},
  pages={5026--5033},
  year={2012},
  organization={IEEE}
}

@article{schulman2015high,
  title={High-dimensional continuous control using generalized advantage estimation},
  author={Schulman, John and Moritz, Philipp and Levine, Sergey and Jordan, Michael and Abbeel, Pieter},
  journal={arXiv preprint arXiv:1506.02438},
  year={2015}
}

@article{bubeck2015convex,
  title={Convex optimization: Algorithms and complexity},
  author={Bubeck, S{\'e}bastien},
  journal={Foundations and trends in Machine Learning},
  volume={8},
  number={3-4},
  pages={231--357},
  year={2015},
  publisher={Emerald Publishing limited}
}

\end{document}